\theoremstyle{plain}
\newtheorem{thm}{Theorem}
\newtheorem{lemma}{Lemma}
\newtheorem{prop}{Proposition}
\newtheorem{corr}{Corollary}
\newcommand{\vect}[1]{\boldsymbol{#1}}
\newcommand{\cT}{\vect{\mathcal{Q}}} 
\newcommand{\cX}{\vect{\mathcal{X}} } %
\newcommand{\cD}{\vect{\mathcal{D}} }
\newcommand{\mR}{\vect{\mathbb{R}} }
\newcommand{\mN}{\vect{\mathbb{N}} }
\title{Formalizing the Generalization-Forgetting Trade-Off in Continual Learning}
\author{R.~Krishnan$^{1}$ and Prasanna~Balaprakash$^{1,2}$  \\
    $\text{ }^{1}$Mathematics and Computer Science Division\\
    $\text{ }^{2}$Leadership Computing Facility\\ 
    Argonne National Laboratory\\
  \textit{kraghavan,pbalapra@anl.gov}} 
\begin{document}
\maketitle
\begin{abstract}
We formulate the continual learning problem via dynamic programming and model the trade-off between catastrophic forgetting and generalization as a two-player sequential game. In this approach, player 1 maximizes the cost due to lack of generalization whereas player 2 minimizes the cost due to increased catastrophic forgetting. We show theoretically and experimentally that a balance point between the two players exists for each task and that this point is stable~(once the balance is achieved, the two players stay at the balance point). Next, we introduce balanced continual learning (BCL), which is designed to attain balance between generalization and forgetting, and we empirically demonstrate that BCL is comparable to or better than the state of the art.
\end{abstract}

\section{Introduction}






In continual learning~(CL), we incrementally adapt a model to learn tasks~(defined according to the problem at hand) observed sequentially. CL has two main objectives: maintain long-term memory~(remember previous tasks) and navigate new experiences continually~(quickly adapt to new tasks). An important characterization of these objectives is provided by the stability-plasticity dilemma~\cite{carpenter1987massively}, where the primary challenge is to balance network stability~(preserve past knowledge; minimize catastrophic forgetting) and plasticity~(rapidly learn from new experiences; generalize quickly). This balance provides a natural objective for CL: \textit{balance forgetting and generalization.} 

Traditional CL methods either minimize catastrophic forgetting or improve quick generalization but do not model both. For example, common solutions to the catastrophic forgetting issue include (1) representation-driven approaches~\cite{yoon2017lifelong, ke2020continual}, (2) regularization approaches~\cite{kirkpatrick2017overcoming, aljundi2018memory, mirzadeh2020understanding, farajtabar2019orthogonal,  yin2020optimization, yin2020optimization, jung2020continual, pan2020continual, chaudhry2020continual, titsias2019functional}, and (3) memory/experience replay~\cite{lin1992self, lopez2017gradient, chaudhry2019continual, chaudhry2019tiny, fini2020online}. Solutions to the generalization problem include representation-learning approaches~(matching nets~\cite{vinyals2016matching}, prototypical networks~\cite{snell2017prototypical}, and metalearning approaches~\cite{finn2017model, finn2019online, Caccia2020OnlineFA, yao2020don}). More recently, several approaches \cite{nagabandi2019deep,farajtabar2019orthogonal,DBLP:journals/corr/abs-1806-06928, joseph2020metaconsolidation, yin2020optimization, ebrahimi2020adversarial} have been introduced that combine methods designed for quick generalization with frameworks designed to minimize forgetting.

The aforementioned CL approaches naively minimize a loss function~(combination of forgetting and generalization loss) but do not explicitly account for the trade-off in their optimization setup. The first work to formalize this trade-off was presented in meta-experience replay~(MER)~\cite{riemer2018learning}, where the forgetting-generalization trade-off was posed as a gradient alignment problem. Although MER provides a promising methodology for CL, the balance between forgetting and generalization is enforced with several hyperparameters. 
Therefore, two key challenges arise:  (1)~lack of theoretical tools that study the existence~(\textit{under what conditions does a balance point between generalization and forgetting exists?}) and stability~(\textit{can this balance be realistically achieved?}) of a balance point and (2)~lack of a systematic approach to achieve the balance point. We address these challenges in this paper.

We describe a framework where we first formulate CL as a sequential decision-making problem and seek to minimize a cost function summed over the complete lifetime of the model. At any time $k,$ given that the future tasks are not available, the calculation of the cost function becomes intractable. To circumvent this issue, we use  Bellman's principle of optimality~\cite{bellman2015adaptive} and recast the CL problem to model the catastrophic forgetting cost on the previous tasks and generalization cost on the new task. We show that equivalent performance on an infinite number of tasks is not practical~(Lemma~1 and Corollary~1) and that tasks must be prioritized.
\begin{figure}[!h]
\begin{subfigure}{\textwidth}
  \centering
	\includegraphics[width = \columnwidth]{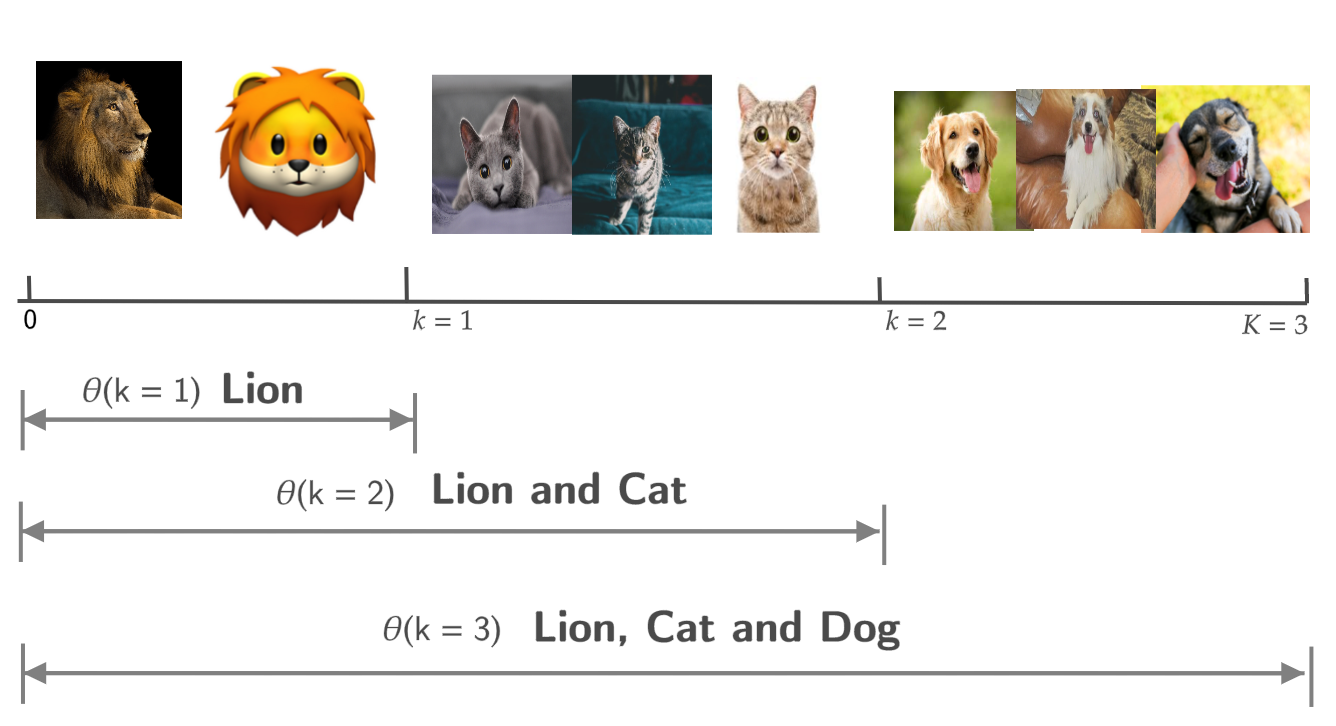}
  \label{fig:ills}
\end{subfigure}\\
\begin{subfigure}{\textwidth}
 \centering
	\includegraphics[width = \columnwidth ]{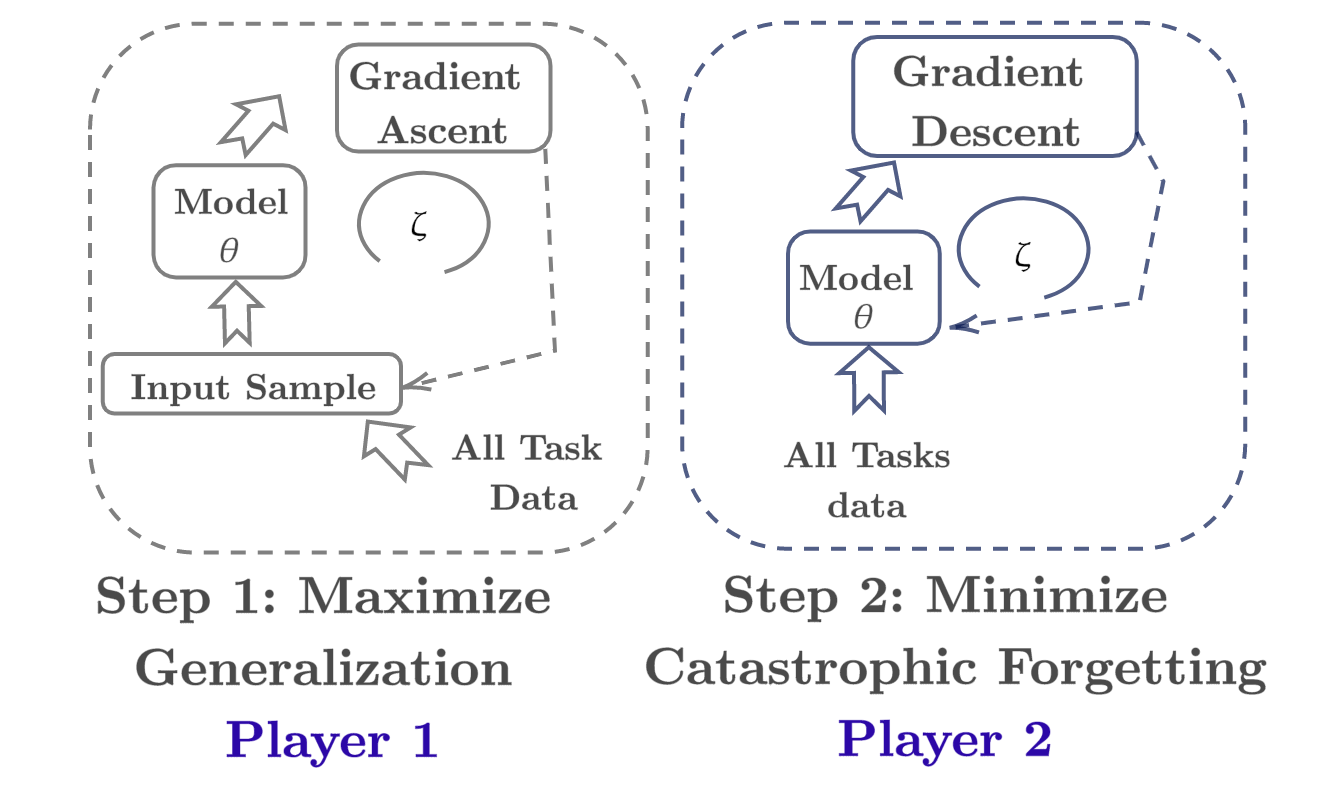}
	 \label{fig:met}
\end{subfigure}
\caption{(left) Exemplary CL problem: the lifetime of the model can be split into three intervals. At $k =1$ we seek to recognize lions; at $k=2$ we seek to recognize both lions and cats; and  at $k=3$ we seek to recognize cats, lions, and dogs. (right) Illustration of the proposed method: our methodology  comprises  an interplay between two players. The first player maximizes generalization by simulating maximum discrepancy between two tasks. The second player minimizes forgetting by adapting to maximum discrepancy.}
\label{fig:ills}
\end{figure}
To achieve a balance between forgetting and generalization, we pose the trade-off
as a saddle point problem where we designate one player for maximizing the generalization cost~(player 1) and another for minimizing the forgetting cost~(player 2). We prove mathematically that there exists at least one saddle point between generalization and forgetting for each new task~(Theorem~1). Furthermore, we show that this saddle point can be attained asymptotically~(Theorem~2) when player strategies are chosen as gradient ascent-descent. We then introduce balanced continual learning~(BCL), a new algorithm to achieve this saddle point. In our algorithm~(see Fig.~\ref{fig:ills} for a description of BCL), the generalization cost is computed by training and evaluating the model on given new task data. The catastrophic forgetting cost is computed by evaluating the model on the task memory (previous tasks). We first maximize the generalization cost and then minimize the catastrophic forgetting cost to achieve the balance. We compare our approach with other methods such as elastic weight consolidation~(EWC)~\cite{kirkpatrick2017overcoming}, online EWC~\cite{schwarz2018progress}, and MER~\cite{riemer2018learning} on continual learning benchmark data sets~\cite{Hsu18_EvalCL} to show that BCL is  better than or comparable to the state-of-the-art methods. Moreover, we also show in simulation that our theoretical framework is appropriate for understanding the continual learning problem. The contributions of this paper are (1) a theoretical framework to study the CL problem, (2) BCL, a method to attain balance between forgetting and generalization, and (3) advancement of the state of the art in CL.

\section{Problem Formulation}

We use $\mR$ to denote the set of real numbers and $\mN$ to denote the set of natural numbers. We  use $\|.\|$ to denote the Euclidean norm for vectors and the Frobenius norm for matrices, while using bold symbols to illustrate matrices and vectors. We define an interval $[0, K), K \in \mN$ and  let $p(\cT)$ be the distribution over all the tasks observed in this interval. For any $k \in [0, K) ,$ we define a parametric model $g(.)$ with ${\vect{y}}_{k} = g(\vect{x}_{k}; {\vect{\theta}}_{k})$, where ${\vect{\theta}}_{k}$ is a vector comprising all parameters of the model with $\vect{x}_{k} \in \cX_{k}$.  Let $n$ be the number of samples  and  $m$ be the number of dimensions. Suppose a task at $k | k \in [0, K)$ is observed and denoted as  $\cT_{k} : \cT_{k} \sim p(\cT)$, where $\cT_{k} =\{\cX_{k}, \ell_{k}\}$ is a tuple with $\cX_{k} \in \mR^{n \times m}$  being the input data and $\ell_{k}$ quantifies the loss incurred by $\cX_{k}$ using the model $g$ for the task at $k$.   We denote a sequence of ${\vect{\theta}}_{k}$ as $\vect{u}_{k:K} = \{{\vect{\theta}}_{\tau} \in \Omega_{\theta}, k \leq \tau \leq K \},$ with  $\Omega_{\theta}$ being the compact~(feasible) set for the parameters. We denote the optimal value with a superscript ${(*)};$ for instance, we use ${\vect{\theta}}_k^{(*)}$ to denote the optimal value of ${\vect{\theta}}_{k}$ at task $k.$ In this paper we  use balance point, equilibrium point, and saddle  point to refer to the point of balance between generalization and forgetting. We  interchange between these terms whenever convenient for the discussion. We will use $\nabla_{(j)} i$ to denote the gradient of $i$ with respect to $j$ and $\Delta i$ to denote the first difference in discrete time.

An exemplary CL problem is described in Fig.~\ref{fig:ills} where we  address a total of $K=3$ tasks. To particularize the idea in Fig. \ref{fig:ills}, we define the cost~(combination of catastrophic cost and generalization cost) at any instant $k$ as 

\[J_{k}({\vect{\theta} }_{k}) = \gamma_{k} \ell_{k} + \sum_{\tau = 0}^{k-1} \gamma_{\tau} \ell_{\tau},\] 

where $\ell_{\tau}$ is computed on task $\cT_{\tau}$ with $\gamma_{\tau}$ describing the contribution of $\cT_{\tau}$ to this sum. 

To solve the problem at $k$, we seek ${\vect{\theta} }_{k}$ to minimize $J_{k}({\vect{\theta} }_{k})$. Similarly, to solve the problem in the complete interval $[0,K]$, we seek a ${\vect{\theta} }_{k}$ to minimize $J_{k}({\vect{\theta} }_{k})$ for each $k \in [0,K].$ In other words we seek to obtain ${\vect{\theta} }_{k}$ for each task such that the cost $J_{k}({\vect{\theta} }_{k})$ is minimized. Therefore, the optimization problem for the overall CL problem~(overarching goal of CL) is provided as the minimization of the cumulative cost \[ V_{k}(\vect{u}_{k:K}) = \sum_{\tau=k}^{K} \beta_{\tau} J_{\tau}( {\vect{\theta}}_{\tau}) \] such that $V_k^{(*)},$  is given as  \begin{equation} V_k^{(*)} = min_{\vect{u}_{k:K} }  V_{k}(\vect{u}_{k:K}), \label{op_1} \end{equation} with $0 \leq \beta_{\tau} \leq 1$ being the contribution of $J_{\tau}$ and $\vect{u}_{k:K}$ being a weight sequence of length $K-k.$ 

Within this formulation,  two parameters  determine the contributions of tasks: $\gamma_{\tau}$, the contribution of each task in the past, and $\beta_{\tau}$, the contribution of tasks in the future. To successfully solve the optimization problem, $V_{k}(\vect{u}_{k:K})$ must be bounded and differentiable, typically ensured by the choice of $\gamma_{\tau}, \beta_{\tau}.$ Lemma~1~(full statement and proof in Appendix A) states that \textit{equivalent performance cannot be guaranteed for an infinite number of tasks}.  Furthermore, Corollary~1~(full statement and proof in Appendix A) demonstrates that \textit{if the task contributions are prioritized, the differentiability and boundedness of $J_\tau({\vect{\theta}}_{\tau})$ can be ensured}. A similar result was proved in \cite{knoblauch2020optimal}, where a CL problem with infinite memory was shown to be NP-hard from a set theoretic perspective. These results~(both ours and in \cite{knoblauch2020optimal}) demonstrate that a CL methodology cannot provide perfect performance on a large number of tasks and that tasks must be prioritized.

Despite these invaluable insights, the data corresponding to future tasks~(interval $[k,K]$) is not available, and therefore $V_{k}(\vect{u}_{k:K})$ cannot be evaluated. The optimization problem in Eq.~\eqref{op_1} naively minimizes the cost~(due to both previous tasks and new tasks) and does not provide any explicit modeling of the trade-off between forgetting and generalization. Furthermore, $\vect{u}_{k:K},$ the solution to Eq. \eqref{op_1} is a sequence of parameters, and it is not feasible to maintain $\vect{u}_{k:K}$ for a large number of tasks. Because of these three issues, the problem is theoretically intractable in its current form. 

We will first recast the problem using tools from dynamic programming~\cite{lewis2012optimal}, specifically Bellman's principle of optimality, and derive a difference equation that summarizes the complete dynamics for the CL problem. Then, we will formulate a two-player differential game where we seek a saddle point solution to balance generalization and forgetting. 

\section{Dynamics of Continual Learning}
Let \[ V_k^{(*)} = min_{\vect{u}_{k:K} }  \sum_{\tau=k}^{K} \beta_{\tau} J_{\tau}( {\vect{\theta}}_{\tau});\] the dynamics of CL~(the behavior of optimal cost with respect to $k$) is provided as
\begin{equation}
	\begin{aligned}
    \Delta V^{(*)}_{k}  =	- min_{{\vect{\theta}}_{k} \in \Omega_{\theta}}   \big[ \beta_k J_{k}( {\vect{\theta}}_{k})  +  \big( \langle \nabla_{{\vect{\theta}}_{k}} V_{k}^{(*)} , \Delta {\vect{\theta}}_{k} \rangle  +\langle \nabla_{\vect{x}_{k}} V_{k}^{(*)}, \Delta \vect{x}_{k} \rangle  \big)\big].\\ 
	\end{aligned}
	\label{eq_M_DES}
\end{equation}
The derivation is presented in Appendix A~(refer to Proposition~1). Note that $V^{(*)}_{k}$ is the minima for the overarching CL problem in Eq. \eqref{eq_M_DES}and $\Delta V^{(*)}_{k}$ represents the change in $V^{(*)}_{k}$ upon introduction of a new task~(we hitherto refer to this as perturbations). Zero perturbations $(\Delta V^{(*)}_{k}=0)$ implies that the introduction of a new task does not impact our current solution; that is, the optimal solution on all previous tasks is optimal on the new task as well. Therefore, the smaller the perturbations, the better the performance of a model on all tasks, thus providing our main objective: minimize the perturbations~($\Delta V^{(*)}_{k} $). In Eq. \ref{eq_M_DES}, $\Delta V^{(*)}_{k} $ is quantified by three terms: the cost contribution from all the previous tasks and the new task~$J_{k}( {\vect{\theta}}_{k});$  the change in the optimal cost due to the change in the parameters $\langle \nabla_{{\vect{\theta}}_{k}} V_{k}^{(*)}, \Delta {\vect{\theta}}_{k} \rangle$; and the change in the optimal cost due to the change in the input (introduction of new task)~$\langle \nabla_{\vect{x}_{k}} V_{k}^{(*)}, \Delta \vect{x}_{k}\rangle$. 

The first issue with the cumulative CL problem~(Eq. \eqref{op_1}) can be attributed to the need for information from the future. In Eq. \eqref{eq_M_DES}, all information from the future is approximated by using the data from the new and the previous tasks.  Therefore, the solution of the CL problem can directly be obtained by solving Eq. \eqref{eq_M_DES} using all the available data. Thus, \[ min_{{\vect{\theta}}_{k} \in \Omega}   \big[ H(\Delta \vect{x}_{k}, \vect{\theta}_{k})  \big] \quad \text{yields } \Delta V^{(*)}_{k}  \approx 0 \] for $\beta > 0,$ with \[ H(\Delta \vect{x}_{k}, \vect{\theta}_{k}) =  \beta_k J_{k}( {\vect{\theta}}_{k}) +   \langle \nabla_{{\vect{\theta}}_{k}} V_{k}^{(*)} , \Delta {\vect{\theta}}_{k} \rangle + \langle \nabla_{\vect{x}_{k}} V_{k}^{(*)}, \Delta \vect{x}_{k} \rangle.\] Essentially, minimizing $H(\Delta \vect{x}_{k}, \vect{\theta}_{k})$  would minimize the perturbations introduced by any new task $k$. 

In Eq. \eqref{eq_M_DES}, the first and the third term quantify generalization and the second term quantifies forgetting. A model exhibits generalization when it successfully adapts to a new task~(minimizes the first and the third term in Eq. \eqref{eq_M_DES}). The degree of generalization depends on the discrepancy between the previous tasks and the new task~(numerical value of the third term in Eq. \eqref{eq_M_DES}) and the worst-case discrepancy prompts maximum generalization. Quantification of generalization is provided by $\Delta \vect{x}_{k}$ that summarizes the discrepancy between subsequent tasks. However, $\Delta \vect{x}_{k} = \vect{x}_{k+1} - \vect{x}_{k}$, and $\vect{x}_{k+1}$ is unknown at $k.$ Therefore, we simulate  worst-case discrepancy by iteratively updating $\Delta \vect{x}_{k}$ through gradient ascent in order to maximize $H(\Delta \vect{x}_{k}, \vect{\theta}_{k})$; thus maximizing generalization. However, large discrepancy increases forgetting, and worst-case discrepancy yields maximum forgetting. Therefore, once maximum generalization is simulated, minimizing forgetting~(update $\vect{\theta}_{k}$ by gradient descent) under maximum generalization provides the balance.

To formalize our idea, let us indicate the iteration index at $k$ by $i$ and write $\Delta \vect{x}_{k}$ as $\Delta \vect{x}^{(i)}_{k}$ and ${\vect{\theta}}_{k}$ as ${\vect{\theta}}^{(i)}_{k}$ with $H(\Delta \vect{x}_{k}, \vect{\theta}_{k})$ as $H(\Delta \vect{x}^{(i)}_{k}, \vect{\theta}^{(i)}_{k})$~(for simplicity of notation, we will denote $H(\Delta \vect{x}^{(i)}_{k}, \vect{\theta}^{(i)}_{k})$ as $H$ whenever convenient). Next, we write
\begin{equation}
	\begin{aligned}
	\centering
     &\underset{{\vect{\theta}}^{(i)}_{k} \in \Omega_{\theta} }{min}   \bigg[ H(\Delta \vect{x}^{(i)}_{k}, \vect{\theta}^{(i)}_{k}) \bigg]  = \underset{{\vect{\theta}}^{(i)}_{k} \in \Omega_{\theta}}{min}   \big[ \beta_k  J_{k}( {\vect{\theta}}^{(i)}_{k}) +   \langle \nabla_{{\vect{\theta}}^{(i)}_{k}} V_{k}^{(*)} , \Delta {\vect{\theta}}^{(i)}_{k}\rangle + \langle \nabla_{\vect{x}^{(i)}_{k}} V_{k}^{(*)}, \Delta \vect{x}^{(i)}_{k} \rangle \big]& \\ &\leq \underset{{\vect{\theta}}^{(i)}_{k} \in \Omega_{\theta}}{min}   \big[\beta_k  J_{k}( {\vect{\theta}}^{(i)}_{k}) +   \langle \nabla_{{\vect{\theta}}^{(i)}_{k}} V_{k}^{(*)} , \Delta {\vect{\theta}}^{(i)}_{k}\rangle + \underset{\Delta \vect{x}^{(i)}_{k} \sim p(\cT)}{max}  \langle \nabla_{\vect{x}^{(i)}_{k}} V_{k}^{(*)}, \Delta \vect{x}^{(i)}_{k} \rangle \big] &\\  & \leq \underset{{\vect{\theta}}^{(i)}_{k} \in \Omega_{\theta} }{min} \quad \underset{\Delta \vect{x}^{(i)}_{k} \sim p(\cT)}{max}  \big[ H(\Delta \vect{x}^{(i)}_{k}, \vect{\theta}^{(i)}_{k})\big]. & \label{eq_op_2}
	\end{aligned}
\end{equation}
 In Eq. \eqref{eq_op_2}, we seek the solution pair $(\Delta \vect{x}_k^{(*)}, \vect{\theta}_k^{(*)} ) \in (\Omega_{\theta}, \Omega_{\Delta \vect{x}_k^{(*)}}),$ where $\Delta \vect{x}_k^{(*)}$ maximizes $H$~(maximizing player, player 1) while $\vect{\theta}_k^{(*)}$ minimizes $H$~(minimizing player, player 2) where $(\Omega_{\theta}, \Omega_{\Delta \vect{x}_k^{(*)}})$ are the feasible sets for $\Delta \vect{x}_k^{(i)}$ and $\vect{\theta}_k^{(i)}$ respectively. The solution is attained, and  $(\Delta \vect{x}_k^{(*)}, \vect{\theta}_k^{(*)})$ is said to be the equilibrium point when it satisfies the following condition:
\begin{equation}
	\begin{aligned}
        H(\Delta \vect{x}_k^{(*)}, \vect{\theta}^{(i)}_{k})
        \geq   H(\Delta \vect{x}_k^{(*)}, \vect{\theta}_k^{(*)} ) \geq H(\Delta \vect{x}^{(i)}_{k}, \vect{\theta}_k^{(*)} ).
	\end{aligned}
	\label{eq_condition}
\end{equation}
\subsection{Theoretical Analysis\label{theory}}
 \begin{wrapfigure}[17]{l}{0.51\textwidth}
    \includegraphics[width = \columnwidth]{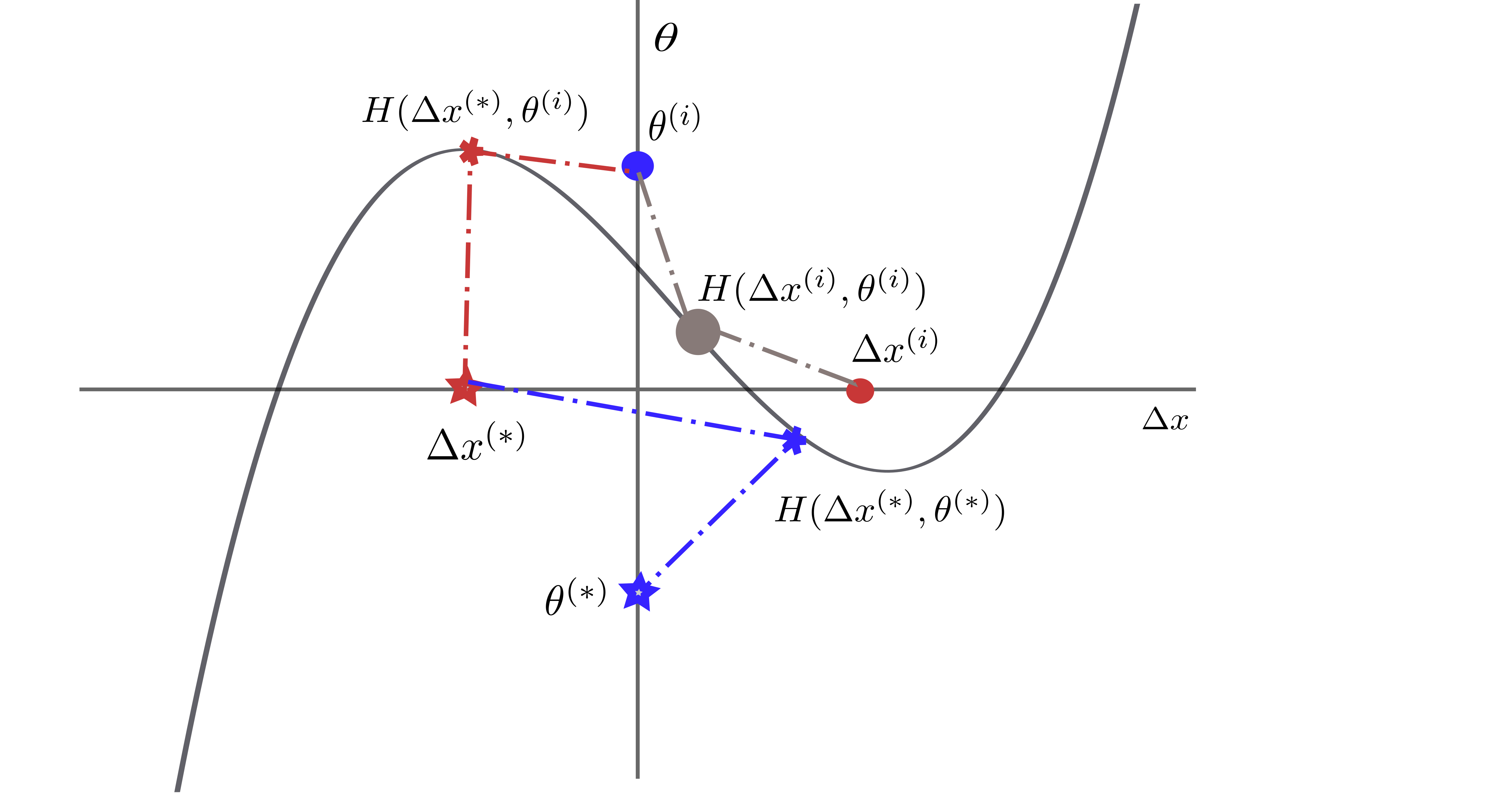}
    \caption{Illustration of  the proofs. $\Delta \vect{x}$~(player~1) is the horizontal axis, and the vertical axis indicates $\vect{\theta}$~(player~2) where the curve indicates H. If we start from the red circle for player~1~(player~2 is fixed at the  blue circle), H is increasing~(goes from a grey circle to a red asterisk) with player 1 reaching the red asterisk. Next,  start from the blue circle~($\vect{\theta}$ is at the red asterisk), the cost decreases. }
    \label{fig:proof}
\end{wrapfigure} With our formulation, two key questions arise: Does our problem setup have an equilibrium point satisfying Eq. \eqref{eq_condition}? and how can one attain this equilibrium point? We answer these questions with Theorems~1 and 2, respectively. Full statements and proofs are provided in Appendix~A. 
 
 To illustrate the theory, we refer to Fig.~\ref{fig:proof}, where the initial values for the two players are characterized by the pair  $\{\vect{\theta}^{(i)}_{k} \text{(blue circle)}, \Delta \vect{x}^{(i)}_{k} \text{(red circle)} \}$ and the cost value at $\{\vect{\theta}^{(i)}_{k}, \Delta \vect{x}^{(i)}_{k} \}$  is indicated by $H( \Delta \vect{x}^{(i)}_{k}, \vect{\theta}^{(i)}_{k})$~(the grey circle on the cost curve~(the dark blue curve)). Our proofing strategy is as follows. First, we fix $\vect{\theta}_k^{(.)} \in \Omega_{\theta}$ and construct a neighborhood $\mathcal{M}_k= \{\Omega_{x}, \vect{\theta}^{(.)}_{k}\}.$ Within this neighborhood we prove in Lemmas~2 and 4 that if we search for $\Delta \vect{x}^{(i)}_{k}$ through gradient ascent, we can converge to a local maximizer, and $H$ is maximizing with respect to $\Delta \vect{x}^{(i)}_{k}.$ 
  Second, we let $\Delta \vect{x}^{(.)}_{k} \in \Omega_x$ be fixed, and we search for $ \vect{\theta}^{(i)}_{k}$ through gradient descent. Under this condition, we demonstrate two ideas in Lemmas 3 and 5: (1) we show that  $H$ is minimizing in the neighborhood $\mathcal{N}_k: \mathcal{N}_k= \{\Omega_{\theta} , \Delta \vect{x}^{(.)}_{k} \}$; and (2) we converge to the local minimizer in the neighborhood $\mathcal{N}_k.$  Third, in the union of the two neighborhoods $\mathcal{M}_k \cup \mathcal{N}_k,$ (proven to be nonempty according to Lemma~6), we show that there exists at least one local equilibrium point~(Theorem~1); that is, there is at least one balance point.

\begin{thm}[Existence of an Equilibrium Point]
For any $k \in [0,K]$, let $\vect{\theta}^{(*)}_{k} \in \Omega_\theta,$  be the minimizer of $H$ according to Lemma 5 and define $\mathcal{M}^{(*)}_k = \{\Omega_{x}, \vect{\theta}^{(*)}_{k}\}.$ Similarly, let $\Delta \vect{x}_k^{(*)} \in \Omega_x,$  be the maximizer of $H$ according to Lemma 4 and define $\mathcal{N}^{(*)}_k = \{\Delta \vect{x}^{(*)}_{k}, \Omega_{\theta}\}.$ Further, let $\mathcal{M}^{(*)}_k \cup \mathcal{N}^{(*)}_k$ be nonempty according to Lemma.~6,  then $(\Delta \vect{x}^{(*)}_{k},\vect{\theta}^{(*)}_{k}) \in \mathcal{M}^{(*)}_k \cup \mathcal{N}^{(*)}_k$ is a local equilibrium point.
\end{thm}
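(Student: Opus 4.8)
The plan is to establish the two-sided inequality constituting the equilibrium condition in Eq.~\eqref{eq_condition} by treating the two players separately and then splicing the results at the common pair $(\Delta \vect{x}^{(*)}_{k}, \vect{\theta}^{(*)}_{k})$. The guiding observation is that a saddle point is nothing more than a point that is simultaneously a minimizer of $H$ along the $\vect{\theta}$-slice and a maximizer of $H$ along the $\Delta \vect{x}$-slice, so the whole argument reduces to verifying these two one-sided optimality statements and checking that they are attained at the same point. No convex--concave structure on $H$ is required, since the argument rests only on the neighborhood optimality supplied by the preceding lemmas.

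First I would fix $\Delta \vect{x}_k$ at the maximizer $\Delta \vect{x}^{(*)}_{k}$ and invoke Lemma~5 on the neighborhood $\mathcal{N}^{(*)}_k = \{\Delta \vect{x}^{(*)}_{k}, \Omega_\theta\}$: there the gradient-descent iterate on $\vect{\theta}$ converges to the local minimizer $\vect{\theta}^{(*)}_{k}$ and $H$ is minimizing, so that
\[ H(\Delta \vect{x}_k^{(*)}, \vect{\theta}^{(i)}_{k}) \geq H(\Delta \vect{x}_k^{(*)}, \vect{\theta}_k^{(*)}) \]
for every $\vect{\theta}^{(i)}_{k}$ in that neighborhood; this is exactly the left inequality of Eq.~\eqref{eq_condition}. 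Symmetrically, I would fix $\vect{\theta}_k$ at $\vect{\theta}^{(*)}_{k}$ and invoke Lemma~4 on $\mathcal{M}^{(*)}_k = \{\Omega_x, \vect{\theta}^{(*)}_{k}\}$: gradient ascent on $\Delta \vect{x}$ converges to the local maximizer $\Delta \vect{x}^{(*)}_{k}$ where $H$ is maximizing, giving
\[ H(\Delta \vect{x}_k^{(*)}, \vect{\theta}_k^{(*)}) \geq H(\Delta \vect{x}^{(i)}_{k}, \vect{\theta}_k^{(*)}) \]
for all $\Delta \vect{x}^{(i)}_{k}$ in that neighborhood, which is the right inequality. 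Chaining the two displays reproduces Eq.~\eqref{eq_condition} verbatim, certifying $(\Delta \vect{x}^{(*)}_{k}, \vect{\theta}^{(*)}_{k})$ as a local equilibrium point.

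The step that does the real work, and which I expect to be the main obstacle, is guaranteeing that the two one-sided statements can be asserted at a \emph{single} pair rather than at two unrelated points. A priori one only has the weak-duality ordering $\max_{\Delta \vect{x}} \min_{\vect{\theta}} H \leq \min_{\vect{\theta}} \max_{\Delta \vect{x}} H$ already visible in Eq.~\eqref{eq_op_2}, and the minimizer produced by Lemma~5 need not be the abscissa at which the maximizer of Lemma~4 is evaluated. This is precisely where Lemma~6 enters: it certifies that $\mathcal{M}^{(*)}_k \cup \mathcal{N}^{(*)}_k$ is nonempty, so the slice through $\vect{\theta}^{(*)}_{k}$ and the slice through $\Delta \vect{x}^{(*)}_{k}$ genuinely meet at $(\Delta \vect{x}^{(*)}_{k}, \vect{\theta}^{(*)}_{k})$, making both inequalities valid simultaneously on this common region and the chaining legitimate. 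I would therefore structure the write-up as: (i) cite Lemma~4 for the ascent inequality, (ii) cite Lemma~5 for the descent inequality, (iii) cite Lemma~6 to place both at the common pair, and (iv) concatenate to recover Eq.~\eqref{eq_condition}.

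A final point I would emphasize is that the conclusion is inherently \emph{local}: because Lemmas~4 and~5 deliver only local extremizers within their respective neighborhoods, the guarantee holds on $\mathcal{M}^{(*)}_k \cup \mathcal{N}^{(*)}_k$ and not globally over $(\Omega_x, \Omega_\theta)$. Accordingly, the statement should be read as the existence of \emph{at least one} local equilibrium rather than a unique global saddle, which is consistent with the nonconvex losses typical of the continual-learning setting and with the iterative ascent--descent construction illustrated in Fig.~\ref{fig:proof}.
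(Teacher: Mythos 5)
Your proposal is correct and follows essentially the same route as the paper's own proof: both arguments extract the descent-side inequality $H(\Delta \vect{x}_k^{(*)}, \vect{\theta}^{(i)}_k) \geq H(\Delta \vect{x}_k^{(*)}, \vect{\theta}^{(*)}_k)$ from the minimizer lemma on $\mathcal{N}^{(*)}_k$, the ascent-side inequality $H(\Delta \vect{x}_k^{(*)}, \vect{\theta}^{(*)}_k) \geq H(\Delta \vect{x}^{(i)}_k, \vect{\theta}^{(*)}_k)$ from the maximizer lemma on $\mathcal{M}^{(*)}_k$, chain them into the saddle condition of Eq.~\eqref{eq_condition}, and invoke the nonemptiness lemma so that both inequalities hold at the common pair. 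Your added emphasis on why the two slices must meet at a single point is a faithful elaboration of the role the paper assigns to Lemma~6, not a departure from it.
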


\begin{proof}
	By Lemma \ref{lem:lem_min_opt} we have at $ (\Delta \vect{x}_k^{ (* )}, \vect{\theta}^{ (* )}_k ),  (\Delta \vect{x}_k^{ (* )}, \vect{\theta}^{ (i )}_k ) \in \mathcal{M}^{ (* )}_k \cup \mathcal{N}^{ (* )}_k$ that
	\begin{equation}
		\begin{aligned}
			H (\Delta \vect{x}_k^{ (* )}, \vect{\theta}^{ (* )}_k ) &\leq& H (\Delta \vect{x}_k^{ (* )}, \vect{\theta}^{ (i )}_k ).
		\end{aligned}
	\end{equation}
	Similarly, according to Lemma \ref{lem:lem_max_opt}, at $ (\Delta \vect{x}^{ (* )}_k, \vect{\theta}^{ (* )}_{k} ),  (\Delta \vect{x}^{ (i )}_k, \vect{\theta}^{ (* )}_{k} ) \in \mathcal{M}^{ (* )}_k \cup \mathcal{N}^{ (* )}_k$ we have
	\begin{equation}
		\begin{aligned}
			H (\Delta \vect{x}^{ (* )}_k, \vect{\theta}^{ (* )}_{k} ) \geq H (\Delta \vect{x}^{ (i )}_k, \vect{\theta}^{ (* )}_{k} ).
		\end{aligned}
	\end{equation}
	Putting these inequalities together, we get
	\begin{equation}
		\begin{aligned}
			H (\Delta \vect{x}_k^{ (* )}, \vect{\theta}^{ (i )}_k ) \geq H (\Delta \vect{x}_k^{ (* )}, \vect{\theta}^{ (* )}_k ) \geq H (\Delta \vect{x}_k^{ (i )}, \vect{\theta}^{ (* )}_k ),
		\end{aligned}
	\end{equation}
	which is the saddle point condition, and therefore $ (\Delta \vect{x}_k^{   (*  )}, \vect{\theta}^{(*)}_k )$ is a local equilibrium point in $\mathcal{M}^{ (* )}_k \cup \mathcal{N}^{ (* )}_k.$ since  $\mathcal{M}^{(*)}_k \cup \mathcal{N}^{(*)}_k$ be nonempty according to Lemma.~6.
\end{proof}

We next show that this equilibrium point is stable~(Theorem~2) under a sequential play. Specifically, we show that when player 1 plays first and player~2 plays second, we asymptotically reach a saddle point pair $(\Delta \vect{x}^{(*)}_{k},\vect{\theta}^{(*)}_{k})$ for $H.$ At this saddle point, both players have no incentive to move, and the game converges. 
\begin{thm}[Stability of the Equilibrium Point]
For any $k \in [0,K]$, $\Delta \vect{x}^{(i)}_{k} \in \Omega_x$ and $\vect{\theta}^{(i)}_{k} \in \Omega_\theta$ be the initial values for $\Delta \vect{x}^{(i)}_{k}$ and $\vect{\theta}^{(i)}_{k}$ respectively. Define $\mathcal{M}_k = \{\Omega_{x}, \Omega_{\theta}\}$  with $H(\Delta \vect{x}^{(i)}_{k}, \vect{\theta}^{(i)}_{k})$ given by Proposition 2. Let $\Delta \vect{x}^{(i+1)}_{k} - \Delta \vect{x}^{(i)}_{k} = \alpha_{k}^{(i)}\times (\nabla_{\Delta \vect{x}^{(i)}_{k}} H(\Delta \vect{x}^{(i)}_{k}, \vect{\theta}^{(.)}_{k}) )/\| \nabla_{\Delta \vect{x}^{(i)}_{k}} H(\Delta \vect{x}^{(i)}_{k}, \vect{\theta}^{(.)}_{k}) \|^2)$ and  $\vect{\theta}^{(i+1)}_{k} - \vect{\theta}^{(i)}_{k} = -\alpha_{k}^{(i)}\times \nabla_{{\vect{\theta}}^{(i)}_{k}} H(\Delta \vect{x}^{(.)}_{k}, \vect{\theta}^{(i)}_{k}).$ Let the existence of an equilibrium point be given by Theorem~1, then, as a consequence of Lemmas 2 and 3, $(\Delta \vect{x}^{(*)}_{k},\vect{\theta}^{(*)}_{k}) \in \mathcal{M}_k$ is a stable equilibrium point for  $H$ given
 \label{thm:thm_st}.
\end{thm}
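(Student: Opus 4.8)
The plan is to establish stability in the sense of Lyapunov for the discrete-time, two-player sequential dynamics, treating the joint iterate $\vect{z}^{(i)}_k = (\Delta \vect{x}^{(i)}_k, \vect{\theta}^{(i)}_k)$ as the state of a discrete dynamical system whose fixed point is the equilibrium $(\Delta \vect{x}^{(*)}_k, \vect{\theta}^{(*)}_k)$ guaranteed by Theorem~1. First I would observe that this pair is indeed a fixed point of the updates: at the saddle point both $\nabla_{\Delta \vect{x}_k} H$ and $\nabla_{\vect{\theta}_k} H$ vanish, so the ascent and descent increments are zero and neither player has an incentive to move. The goal is then to show that trajectories initialized in $\mathcal{M}_k = \{\Omega_x, \Omega_\theta\}$ remain near and converge to this fixed point.

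To this end I would introduce the candidate Lyapunov function
\[
L^{(i)}_k = \tfrac{1}{2}\|\Delta \vect{x}^{(i)}_k - \Delta \vect{x}^{(*)}_k\|^2 + \tfrac{1}{2}\|\vect{\theta}^{(i)}_k - \vect{\theta}^{(*)}_k\|^2,
\]
which is positive definite and vanishes only at the equilibrium, and compute its first difference $\Delta L^{(i)}_k = L^{(i+1)}_k - L^{(i)}_k$ along the sequential play. The analysis splits into the two half-steps. For player~1's normalized ascent with $\vect{\theta}_k$ held fixed, expanding the $\Delta \vect{x}$ term and substituting the update leaves a first-order term proportional to $\langle \nabla_{\Delta \vect{x}^{(i)}_k} H, \Delta \vect{x}^{(i)}_k - \Delta \vect{x}^{(*)}_k\rangle$ plus a second-order $(\alpha^{(i)}_k)^2$ remainder; invoking the concavity of $H$ in $\Delta \vect{x}$ near the maximizer (Lemma~4) together with the ascent convergence established in Lemma~2 shows this first-order term is nonpositive, so the half-step does not increase the $\Delta \vect{x}$-component of $L$. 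Symmetrically, for player~2's gradient descent with $\Delta \vect{x}_k$ held fixed, the convexity of $H$ in $\vect{\theta}$ near the minimizer (Lemma~5) and the descent convergence of Lemma~3 make the corresponding first-order term nonpositive.

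I would then combine the two half-steps and invoke the saddle-point inequalities of Eq.~\eqref{eq_condition} to control the cross terms that arise because player~1 differentiates $H$ at the current $\vect{\theta}_k$ and player~2 at the current $\Delta \vect{x}_k$ rather than at the equilibrium values. This yields $\Delta L^{(i)}_k \leq 0$, with strict decrease whenever $\vect{z}^{(i)}_k \neq (\Delta \vect{x}^{(*)}_k, \vect{\theta}^{(*)}_k)$. Finally, applying the discrete-time Lyapunov theorem (equivalently, LaSalle's invariance principle) gives that $L^{(i)}_k$ converges and the iterates approach the equilibrium, so $(\Delta \vect{x}^{(*)}_k, \vect{\theta}^{(*)}_k)$ is a stable equilibrium point for $H$.

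The step I expect to be the main obstacle is combining the two half-steps in the presence of coupling: because each player's gradient is evaluated against the opponent's current iterate, the naive sum of the two per-player estimates carries cross terms that need not be sign-definite. Controlling them requires both the saddle-point structure of Eq.~\eqref{eq_condition} and a condition on the step sizes $\alpha^{(i)}_k$ (diminishing and square-summable) so that the benign first-order contributions dominate the $(\alpha^{(i)}_k)^2$ remainders and the residual coupling; verifying that the sequential, rather than simultaneous, ordering of play is precisely what makes these cross terms controllable is the crux of the argument.
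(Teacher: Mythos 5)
Your proposal takes a Lyapunov route that the paper does not take, and as written it cannot be completed with the lemmas the paper actually provides. The key first-order inequalities you need --- that $\langle \nabla_{\Delta \vect{x}^{(i)}_{k}} H,\ \Delta \vect{x}^{(i)}_{k} - \Delta \vect{x}^{(*)}_{k}\rangle$ and $\langle \nabla_{\vect{\theta}^{(i)}_{k}} H,\ \vect{\theta}^{(i)}_{k} - \vect{\theta}^{(*)}_{k}\rangle$ have the right signs --- are concavity/convexity statements, and Lemmas 2--5 do not establish anything of the sort: they only show that $H$ increases monotonically along the ascent iterates and decreases monotonically along the descent iterates, and that the increments vanish because $\alpha^{(i)}_{k}\to 0$. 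Nothing in the paper controls the distance of the iterates to $(\Delta \vect{x}^{(*)}_{k},\vect{\theta}^{(*)}_{k})$. Worse, your fixed-point premise is inconsistent with the stated dynamics: player 1's update is $\alpha^{(i)}_{k}\,\nabla_{\Delta \vect{x}^{(i)}_{k}} H / \|\nabla_{\Delta \vect{x}^{(i)}_{k}} H\|^{2}$, whose magnitude is $\alpha^{(i)}_{k}/\|\nabla H\|$; this blows up, rather than vanishes, as the gradient goes to zero, and the paper explicitly assumes $\langle \nabla_{\vect{x}^{(i)}_{k}} J_k, \nabla_{\vect{x}^{(i)}_{k}} J_k\rangle>0$ so that convergence is driven entirely by the decaying step size, not by stationarity. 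A squared-distance Lyapunov function is therefore the wrong certificate for these particular dynamics.

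The paper's own proof is far more modest and avoids the coupling problem you flag as the crux. The game is sequential in a strong sense: starting from $(\Delta \vect{x}^{(i)}_{k},\vect{\theta}^{(i)}_{k})\in\mathcal{M}_k$, player 1 runs its ascent to convergence with $\vect{\theta}^{(.)}_{k}$ held fixed, reaching the maximizer $(\Delta \vect{x}^{(*)}_{k},\vect{\theta}^{(i)}_{k})$ by Lemma 2; one then restricts to $\mathcal{N}_k = (\Delta \vect{x}^{(*)}_{k},\Omega_\theta)\subset\mathcal{M}_k$ and lets player 2 run its descent to the minimizer $(\Delta \vect{x}^{(*)}_{k},\vect{\theta}^{(*)}_{k})$ by Lemma 3. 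Because the two phases are disjoint, there are no cross terms to control; ``stability'' here is simply the asymptotic convergence of this two-phase play to the saddle point whose existence Theorem 1 guarantees. If you want to pursue your stronger Lyapunov-style statement (simultaneous or interleaved play, with an explicit basin of attraction), you would need to add convexity--concavity hypotheses on $H$ and replace the normalized ascent with a standard gradient step --- neither of which is available in the paper as written.
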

\begin{proof}
	Consider now the order of plays by the two players. By Lemma~\ref{lem:lem_max}, a game starting at $ (\Delta \vect{x}_k^{ (i )}, \vect{\theta}^{ (i )}_k ) \in \mathcal{M}_k$ will reach $ (\Delta \vect{x}_k^{ (* )}, \vect{\theta}^{ (i )}_k )$ which is a maximizer for $H.$ Now, define $\mathcal{N}_k =  (\Delta \vect{x}_k^{ (* )}, \Omega_{\theta}  ) \subset \mathcal{M}_k$ then a game starting at $ (\Delta \vect{x}_k^{ (* )}, \vect{\theta}^{ (i )}_k ) \in \mathcal{N}_k$ will converge to  $ (\Delta \vect{x}_k^{ (* )}, \vect{\theta}^{ (* )}_k ) \in \mathcal{N}_k$ according to Lemma~\ref{lem:lem_min}. Since,  $\mathcal{N}_k \subset \mathcal{M}_k,$ our result follows.
\end{proof}
In this game, the interplay between these two opposing players~(representative of generalization and forgetting, respectively)  introduces the dynamics required to play the game. Furthermore, the results presented in this section are local to the task. In other words, we prove that we can achieve a balance between generalization and forgetting for each task $k$~(neighborhoods are task dependent, and we achieve a local solution given a task $k$). 
Furthermore, our game is sequential; that is,  there is a leader~(player 1) and  a follower~(player 2). The leader~$(\Delta \vect{x}^{(i)}_{k})$ plays first, and  the follower~$(\vect{\theta}^{(i)}_{k})$ plays second with complete knowledge of the leader's play. The game is directed by $\Delta \vect{x}^{(i)}_{k}$, and any changes in the task~(reflected in $\Delta \vect{x}^{(i)}_{k})$ will shift the input and thus the equilibrium point. Consequently, the equilibrium point varies with respect to a task, and one will need to attain a new equilibrium point for each shift in a task. Without complete knowledge of the tasks~(not available in a CL scenario), only a local result is possible. This highlights one of the key limitations of this work. Ideally, we would like a balance between forgetting and generalization that is independent of tasks. However, this would require learning a trajectory of the equilibrium point~(How does the equilibrium point change with the change in the tasks?) and is beyond the scope of this paper. One work that attempts to do this is~\cite{rusu2016progressive}, where the authors learn a parameter per task. For a large number of tasks, however, such an approach is computationally prohibitive.

These results are  valid only under certain assumptions: (1) the Frobenius norm of the gradient is bounded, always positive; (2) the cost function is bounded and differentiable; and (3) the learning rate goes to zero as $i$ tends to infinity. The first assumption is reasonable in practice, and gradient clipping or perturbation strategies can be used to ensure it. The boundedness of the cost  (second assumption) can be ensured by  prioritizing the contributions of the task~(Lemma~1 and Corollary~1). The third assumption assumes a decaying learning rate. Learning rate decay is a common strategy and is employed widely. Therefore, all assumptions are practical and reasonable.

\begin{wrapfigure}[19]{r}{0.44\columnwidth}
	\vspace{-11mm}
	\begin{algorithm}[H]
		\SetCustomAlgoRuledWidth{0.40\columnwidth}
		Initialize ${\vect{\theta}}, D_{P}, D_{N}$ \\
		\While{$k=1,2,3,... K$}{
			j = 0\\
			\While{$j < \rho$}{
				Get $\vect{b}_{N} \in D^{N}_{k}$
				
				Get $\vect{b}_{P} \in D^{P}_{k}$
				
				Get $\vect{b}_{PN} = \vect{b}_{P} \cup \vect{b}_{N}$ 
				
				Copy $\vect{b}_{PN}$ into $\vect{x}^{PN}_{k}$
				
				i = 0
				\While{$i+1 <= \zeta$}{
					Update~$\vect{x}^{PN}_{k}$ with $J_{k}({\vect{\theta}}_k)$ using gradient ascent
					
					i = i+1 }
				Calculate $J_{k+\zeta}(\vect{\theta}^{(i)}_{k}) - J_{k}(\vect{\theta}^{(i)}_{k})$ 
				
				Copy ${\vect{\theta}}^{(i)}_k$ into ${\vect{\theta}}^{B}_k$
				
				i = 0
				\While{$i+1 <= \zeta$}{
					Update~${\vect{\theta}}^{B}_{k}$ with $J_{k}(\vect{\theta}^{B}_{k})$
					
					i = i+1 }
				Calculate $(J_{k}({\vect{\theta}}^{B}_{k}) - J_{k}(\vect{\theta}^{(i)}_{k}))$
				
				Calculate $ H(\Delta \vect{x}^{(i)}_{k}, \vect{\theta}^{(i)}_{k})$
				
				Update ${\vect{\theta}}^{(i)}_k$ using gradient descent
				
			}
			j= j+1
		}
		Update ${D_{P}}$ with  ${D_{N}}$
		\caption{BCL \label{alg1a}}
	\end{algorithm}
\end{wrapfigure} 
\subsection{Balanced Continual Learning~\label{BCL}}  Equipped with the theory, we develop a new CL method to achieve a balance between forgetting and generalization. By Proposition~2, the cost function can be upper bounded as $	H(\Delta \vect{x}^{(i)}_{k}, \vect{\theta}^{(i)}_{k}) \leq \beta_k  J_{k}(\vect{\theta}^{(i)}_{k}) + (J_{k}(\vect{\theta}^{(i+\zeta)}_{k}) - J_{k}(\vect{\theta}^{(i)}_{k})) + ( J_{k+\zeta}(\vect{\theta}^{(i)}_{k}) - J_{k}(\vect{\theta}^{(i)}_{k}) ),$ where $J_{k+\zeta}$ indicates $\zeta$ updates on player 1 and $\vect{\theta}^{(i+\zeta)}_{k}$ indicates $\zeta$ updates on player 2.

    \begin{equation}
	\begin{aligned}
         \underbrace{\frac{\alpha_{k}^{(i)} \nabla_{\Delta \vect{x}_{k}} E[ H(\Delta \vect{x}^{(i)}_{k}, \vect{\theta}^{(i)}_{k})] }{\| \nabla_{\Delta \vect{x}_{k}}  H(\Delta \vect{x}^{(i)}_{k}, \vect{\theta}^{(i)}_{k}) \|^2}}_{\text{Player~1}}, \\
         \underbrace{-\alpha_{k}^{(i)}\times \nabla_{{\vect{\theta}}_{k}} E[H(\Delta \vect{x}^{(*)}_{k}, \vect{\theta}^{(i)}_{k}) )]}_{\text{Player~2}},
    	\end{aligned}
    	\label{eq:eq_Strat}
    \end{equation} 

The strategies for the two players $\Delta \vect{x}_{k}, {\vect{\theta}}_{k}$ are chosen in Eq. \eqref{eq:eq_Strat} with $E$ being the expected value operator. We can approximate the required terms in our update rule~(player strategies) using data samples (batches). Note that the approximation is  performed largely through one-sided finite difference, which may introduce an error and is another potential drawback. The pseudo code of the BCL is shown in Algorithm~\ref{alg1a}. We define a new task array $\cD_N(k)$ and a task memory array~$\cD_{P}(k) \subset \cup_{\tau = 0}^{k-1}\cT_{\tau}$~(samples from all previous tasks). For each batch $b_{N} \in \cD_N(k)$, we sample $b_{P}$ from~$\cD_P(k),$ combine to create $b_{PN}(k)= b_{P}(k) \cup b_{N}(k)$, and perform a sequential play. Specifically, for each task the first player initializes $x_k^{PN} = b_{PN}(k)$ and performs  $\zeta$  updates on $x_k^{PN}$ through gradient ascent. The second player, with complete knowledge of the first player's strategy, chooses the best play to reduce $H(\Delta \vect{x}^{(i)}_{k}, \vect{\theta}^{(i)}_{k})$. To estimate player~2's play, we must estimate different terms in $H(\Delta \vect{x}^{(i)}_{k}, \vect{\theta}^{(i)}_{k}).$ This procedure involves  three steps. First, we use the first player's play and approximate $( J_{k+\zeta}(\vect{\theta}^{(i)}_{k}) - J_{k}(\vect{\theta}^{(i)}_{k}) ).$ Second, to approximate $(J_{k}(\vect{\theta}^{(i+\zeta)}_{k}) - J_{k}(\vect{\theta}^{(i)}_{k})):$ (a), we copy $\hat{\vect{\theta}}$ into $\hat{\vect{\theta}}_{B}$~(a temporary network) and perform $\zeta$ updates on $\hat{\vect{\theta}}_{B}$; and (b) we compute $J_{k}(\vect{\theta}^{(i+\zeta)}_{k})$ using $\hat{\vect{\theta}}_{B}(k+\zeta)$ and evaluate $(J_{k}(\vect{\theta}^{(i+\zeta)}_{k}) - J_{k}(\vect{\theta}^{(i)}_{k})).$ Third, equipped with these approximations, we compute $H(\Delta \vect{x}^{(i)}_{k}, \vect{\theta}^{(i)}_{k})$ and obtain the play for the second player. Both these players perform the steps repetitively for each piece of information~(batch of data). Once all the data from the new task is exhausted, we  move to the next task. 

\subsection{Related Work}

Traditional solutions to the CL focus on either the  forgetting issue~\cite{rusu2016progressive, yoon2017lifelong, yao2020don, caccia2021online, kirkpatrick2017overcoming, zenke2017continual, aljundi2018memory, lin1992self,  lopez2017gradient, chaudhry2019continual} or the generalization issue~\cite{vinyals2016matching, snell2017prototypical, finn2017model, finn2019online, Caccia2020OnlineFA}. Common solutions to the forgetting problem involve  dynamic architectures and flexible knowledge representation such as \cite{rusu2016progressive, yoon2017lifelong, yao2020don, caccia2021online}, regularization approaches including \cite{kirkpatrick2017overcoming, zenke2017continual, aljundi2018memory} and memory/experience replay \cite{lin1992self,  lopez2017gradient, chaudhry2019continual}. 
Similarly, quick generalization to a new task has been addressed through few-shot and one-shot learning approaches such as matching nets \cite{vinyals2016matching} and  prototypical network \cite{snell2017prototypical}. More recently, the field of metalearning has approached the generalization problem by designing a metalearner that can perform quick generalization from very little data \cite{finn2017model, finn2019online, Caccia2020OnlineFA}. 

In the past few years, metalearners for quick generalization have been combined with methodologies specifically designed for reduced forgetting \cite{javed2019meta, beaulieu2020learning}. For instance, the approaches in \cite{javed2019meta, beaulieu2020learning} adapt the model-agnostic metalearning~(MAML) framework in \cite{finn2017model}  with robust representation to minimize forgetting and provide impressive results on CL. However, both these approaches require a pretraining phase for learning representation.  Simultaneously, Gupta et al.~\cite{gupta2020lamaml} introduced LA-MAML---a metalearning approach where the impact of learning rates on the CL problem is reduced through the use of per-parameter learning rates. LA-MAML~\cite{gupta2020lamaml} also introduced episodic memory to address the forgetting issue. Other approaches also have attempted to model both generalization and forgetting. In \cite{farajtabar2019orthogonal}, the gradients from new tasks are projected onto a subspace that is orthogonal to the older tasks, and forgetting is minimized. Similarly, Joseph and Balasubramanian \cite{joseph2020metaconsolidation} utilized a Bayesian framework to consolidate learning across previous and current tasks, and  Yin et  al.~\cite{yin2020optimization} provided a framework for approximating loss function to summarize the forgetting in the CL setting.  Furthermore, Abolfathi et al.~\cite{abolfathi2021coachnet} focused on sampling episodes in the reinforcement learning setting, and Elrahimi et al.~\cite{ebrahimi2020adversarial} introduced a generative adversarial network-type structure to progressively learn shared features assisting reduced forgetting and improved generalization. Despite significant progress, however, these methods~\cite{javed2019meta, beaulieu2020learning, gupta2020lamaml,farajtabar2019orthogonal, joseph2020metaconsolidation, yin2020optimization, abolfathi2021coachnet, ebrahimi2020adversarial} are still inherently tilted toward maximizing generalization or minimizing forgetting because  they naively minimize the loss function. Therefore, the contribution of different terms in the loss function becomes important. For instance, if the generalization cost is given more weight, a method would generalize better. Similarly, if forgetting cost is given more weight, a method would forget less. Therefore, the resolution of the trade-off inherently depends on an hyperparameter.

The first work to formalize the trade-off in CL was MER, where the trade-off was formalized as a gradient alignment problem. Similar to MER, Doan et al.~(\cite{doan2021theoretical}) studied forgetting as an alignment problem. In MER, the angle between the gradients was approximated by using Reptile \cite{nichol2018firstorder}, which promotes gradient alignment by reducing weight changes. On the other hand, \cite{doan2021theoretical}  formalized the alignment as an eigenvalue problem and introduced a PCA-driven method to ensure alignment. Our approach models this balance as a saddle point problem achieved through stochastic gradient such that the saddle point~(balance point or equilibrium point) resolves the trade-off.

Our approach is the first in the CL literature to prove the existence of the saddle point~(the balance point) between generalization and forgetting given a task in a CL problem. Furthermore, we are  the first to theoretically demonstrate that the saddle point can be achieved reasonably under a gradient ascent-descent game. The work closest to ours is \cite{ebrahimi2020adversarial}, where an adversarial framework is described to minimize forgetting in CL by generating task-invariant representation. However, \cite{ebrahimi2020adversarial} is not model agnostic~(the architecture of the network is important) and requires a considerable amount of data at the start of the learning procedure. Because of these issues, \cite{ebrahimi2020adversarial} is not suitable for learning in the sequential scenario.

\section{Experiments}
We use the CL benchmark \cite{Hsu18_EvalCL} for our experiments and retain the experimental settings~(hyperparameters) from \cite{Hsu18_EvalCL, vandeven2019generative}. For comparison, we use the split-MNIST, permuted-MNIST, and split-CiFAR100 data sets while considering three scenarios: incremental domain learning~(IDL), incremental task learning~(ITL), and incremental class learning~(ICL). The splitting and permutation strategies when applied to the MNIST or CiFAR100 data set can generate task sequences for all  three scenarios~(illustrated in Figure~1 and Appendix: Figure~2 of \cite{Hsu18_EvalCL}). For comparing our approach, we use three baseline strategies---standard neural network with Adam~\cite{kingma2014adam}, Adagrad~\cite{duchi2011adaptive}, and SG---and use $L_2$-regularization and naive rehearsal~(which is similar to experience replay). For CL approaches, we use EWC~\cite{kirkpatrick2017overcoming}, online EWC~\cite{schwarz2018progress}, SI~\cite{zenke2017continual}, LwF~\cite{LwF},  DGR~\cite{DGR}, RtF~\cite{vandeven2019generative}, MAS~\cite{aljundi2018memory}, MER~\cite{riemer2018learning}, and GEM~\cite{GEM}. We utilize data preprocessing as provided by \cite{Hsu18_EvalCL}. Additional details on experiments can be found in  Appendix~B and \cite{Hsu18_EvalCL, vandeven2019generative}. All experiments are conducted in Python 3.4 using the pytorch $1.7.1$ library with the NVIDIA-A100 GPU for our simulations.
\begin{table}[bt]
\vspace{-5pt}
\tiny
    \centering
    \resizebox{\columnwidth}{!}{ 
    \begin{tabular}{c|ccc|ccc}
    \hline \hline
                         \multirow{3}{*}{\textbf{Method}}&\multicolumn{3}{c|}{\textbf{split-MNIST}}  &
                           \multicolumn{3}{c}{\textbf{permuted-MNIST}} \\ \cline{2-7}
                          & Incremental          & Incremental        & Incremental     
                          & Incremental          & Incremental        & Incremental\\
                          & task learning        & domain learning    & class learning  
                          & task learning        & domain learning    & class learning\\
                          & [ITL]                & [IDL]              & [ICL]        
                          & [ITL]                & [IDL]              & [ICL]        \\ \hline
                      Adam& $95.52 \pm 2.14$     & $54.75 \pm2.06$    &$19.72\pm0.03$
                          & $93.42 \pm 0.56$     & $77.87 \pm1.27$    &$14.02\pm1.25$\\
                       SGD& $97.65 \pm0.28$      & $62.80 \pm0.34$    &$19.36\pm0.02$
                          & $90.95 \pm0.20$      & $78.17 \pm1.16$    &$12.82\pm0.95$\\
                   Adagrad& $98.37 \pm0.29$      & $57.59 \pm2.54$    &$19.59\pm0.17$
                          & $92.45 \pm0.16$      & $91.59 \pm0.46$    &$29.09\pm1.48$\\
                     $L_2$& $97.62 \pm0.69$      & $66.84 \pm3.91$.   &$22.92\pm1.90$
                          & $94.87 \pm0.38$      & $92.81 \pm0.32$    &$13.92\pm1.79$\\
           Naive rehearsal& $99.32 \pm0.10$      & $94.85 \pm0.80$    &$90.88\pm0.70$
                          & $96.23 \pm0.04$      & $95.84 \pm0.06$    &$96.25\pm0.10$\\
         Naive rehearsal-C& $99.41 \pm0.04$      & $97.13 \pm0.37$    &$94.92\pm0.63$
                          & $97.13 \pm0.03$      & $96.75 \pm0.03$    &$97.24\pm0.05$\\
                          \hline \hline  
                       EWC& $96.59 \pm0.99$       &$57.31\pm1.07$     &$19.70\pm0.14$
                          & $95.38 \pm0.33$       &$89.54\pm0.52$     &$26.32\pm4.32$\\
                Online~EWC& $99.01 \pm0.12$       &$58.25\pm1.23$     &$19.68\pm0.05$
                          & $95.15 \pm0.49$       &$93.47\pm0.01$     &$42.58\pm6.50$\\
                        SI& $99.10 \pm0.16$       &$64.63\pm1.67$     &$19.67\pm0.25$
                          & $94.35 \pm0.51$       &$91.12\pm0.93$     &$58.52\pm4.20$\\ 
                       MAS& $98.88 \pm0.14$       &$61.98\pm7.17$     &$19.70\pm0.34$
                          & $94.74 \pm0.52$       &$93.22\pm0.80$     &$50.81\pm2.92$\\
                       GEM& $98.32 \pm0.08$       &$97.37\pm0.22$     &$93.04\pm0.05$
                          & $95.44 \pm0.96$       &$96.86\pm0.02$     &$96.72\pm0.03$\\
                       DGR& $99.47 \pm0.03$       &$95.74\pm0.23$     &$91.24\pm0.33$
                          & $92.52 \pm0.08$       &$95.09\pm0.04$     &$92.19\pm0.09$\\
                       RtF&$\textbf{99.66}\pm\textbf{0.03}$&$97.31\pm0.11$&$92.56\pm0.21$
                          &$97.31\pm0.01$        &$97.06\pm0.02$      &$96.23\pm0.04$\\ 
                       MER&$97.12\pm0.10$        &$92.16\pm0.35$      &$93.20\pm0.12$
                          &$97.15\pm0.08$        &$96.11\pm0.31$      &$91.71\pm0.03$\\ \cline{2-7}  
          BCL~(With Game) &$99.52\pm0.07$                           & $\textbf{98.71 }\pm\textbf{0.06}$     &$\textbf{97.32}\pm\textbf{0.17}$
                          & $\textbf{97.41} \pm \textbf{0.01}$      & $\textbf{97.51} \pm \textbf{0.05}$    &$\textbf{97.61}\pm\textbf{0.01}$ \\
      BCL~(Without Game)  & $97.73 \pm 0.03$                        & $96.43 \pm0.29$                       &$91.88\pm0.55$
                          & $96.16 \pm 0.03$                        & $96.08 \pm 0.06$                      &$95.96\pm0.06$\\
       \hline
       \hline
    \end{tabular}}
    \caption{Performance of our approach  compared with other methods in the literature. We record the mean and standard deviation of the retained accuracy for the different methods. The best scores are in bold.}
    \label{tab:main_res}
    \vspace{-8mm}
\end{table}

\textbf{Comparison with the state of the art:} The results for our method are summarized in Table~\ref{tab:main_res} and \ref{tab:cifar_100}.  The efficiency for any method is calculated  by observing the  average accuracy~(retained accuracy (RA)~\cite{riemer2018learning}) at the end of each repetition and then evaluating the mean and standard deviation of RA across different repetitions. For each method, we report the mean and standard deviation of RA over five repetitions of each experiment. In each column, we indicate the best-performing method in bold. For the split-MNIST data set, we obtain $99.52\pm0.07$ for ITL, $98.71\pm0.06$ for IDL, and $97.32\pm 0.17$ for ICL. Similarly, with the permuted-MNIST data set, we obtain $97.41\pm0.01$ for ITL, $97.51\pm0.05$ for IDL, and $97.61\pm0.01$ for ICL. Furthermore, with the split-CiFAR100 data set, we obtain $81.82 \pm 0.17$  for ITL,  $62.11\pm 0.00$  for IDL, and  $69.27 \pm 0.03$ for ICL. BCL is the best-performing methodology for all cases~(across both data sets) except RtF for ITL~($0.14 \%$ drop) with the split-MNIST data set.

\begin{wraptable}[16]{r}{0.61\columnwidth}
    \tiny
    \centering
    \begin{tabular}{c|ccc}
    \hline \hline
                         \multirow{3}{*}{\textbf{Method}}&\multicolumn{3}{c}{\textbf{split-CiFAR100}}\\ \cline{2-4}
                          & Incremental          & Incremental        & Incremental  \\
                          & task learning        & domain learning    & class learning \\ \hline
                      Adam& $30.53 \pm0.58$     & $19.65  \pm0.14$    &$17.20\pm0.06$\\
                       SGD& $43.77 \pm1.15$      & $19.17 \pm0.12$    &$17.18\pm0.12$\\
                   Adagrad& $36.27 \pm0.43$      & $19.06 \pm0.14$    &$15.83\pm0.20$\\
                     $L_2$& $51.73 \pm1.30$      & $19.96 \pm0.15$    &$17.12\pm0.04$\\
           Naive rehearsal& $70.20 \pm0.17$      & $35.94 \pm0.39$    &$34.33\pm0.19$\\
         Naive rehearsal-C& $78.41 \pm0.37$      & $51.81 \pm0.18$    &$51.28\pm0.17$\\
                          \hline \hline  
                       EWC& $61.11 \pm1.43$       &$19.76\pm0.12$     &$19.70\pm0.14$\\
                Online~EWC& $63.22 \pm0.97$       &$20.03\pm0.10$     &$17.16\pm0.09$\\
                        SI& $64.81 \pm1.00$       &$20.26\pm0.09$     &$17.26\pm0.11$\\ 
                       MAS& $64.77 \pm0.78$       &$19.99\pm0.16$     &$17.07\pm0.12$\\ \cline{2-4}
        BCL(With Game)    & $\textbf{81.82} \pm \textbf{0.17}$        &$\textbf{62.11 }\pm\textbf{0.00}$  &  $\textbf{69.27} \pm \textbf{0.03}$ \\
        BCL(Without Game) & $69.17 \pm0.12$       &$51.82\pm0.19$     & $52.82 \pm 0.01$   \\
       \hline
       \hline
    \end{tabular}
    \caption{Performance of BCL for the split-CiFAR100 data set. We record the retained accuracy for the different methods. We obtained RA scores for all methods except BCL from \cite{Hsu18_EvalCL}.}
    \label{tab:cifar_100}
\end{wraptable} Generally, ITL is the easiest learning scenario~\cite{Hsu18_EvalCL}, and all methods therefore perform  well on ITL~(the performance is close). For the ITL scenario with the split-MNIST data set, BCL is better than most methods; but several methods, such as naive rehearsal, naive rehearsal-C, RtF, and DGR, attain close RA values~(less than $1\%$ from BCL). Note that both DGR and RtF involve a generative model pretrained with data from all the tasks. In a sequential learning scenario, one cannot efficiently train generative models because data corresponding to all the tasks is not available beforehand. Although RtF provides improved performance for split-MNIST~(ITL), the improvement is less than $1\%$~(not significant).  In fact, RtF performance is poorer for BCL in ICL by $4.76\%$~(a significant drop in performance) and in IDL  by $1.4 \%.$  

 Two additional observations can be made about the split-MNIST data set. First, Adagrad, SGD, and L2 achieve better performance than does Adam. Therefore, in our analysis Adagrad appears more appropriate although Adam is popularly used for this task.  Second, naive rehearsal~(both naive rehearsal and naive rehearsal-C approaches) achieves performance equivalent to the state-of-the-art methods with similar memory overhead. Furthermore, naive rehearsal performs much better than online EWC and SI, especially in the ICL scenario. These limitations indicate that  regularization-driven approaches are not much better than baseline models and in fact perform poorer than methods involving memory~(naive rehearsals, MER, BCL, etc.). In~\cite{knoblauch2020optimal}, it was shown theoretically that memory-based approaches typically do better than regularization-driven approaches, as is empirically observed in this paper, too. Another interesting observation is that EWC and online EWC require significant hyperparameter tuning, which would be difficult to do in real-world scenarios. Other regularization-based methods, such as SI and MAS, also suffer from the same issue.

The observations from the split-MNIST carry forward to the permuted-MNIST data set. Moreover, RA values for the permutation MNIST data set are better for the split-MNIST data set across the board, indicating that the permutation MNIST data set presents an easier learning problem. Similar to the observations made with the split-MNIST data set, BCL is better than all methods for the permuted-MNIST dataset, with naive rehearsal and RtF providing RA values that are close~(less than $1 \%$). The only methodology in the literature that attempts to model the balance between forgetting and generalization is MER, an extension of GEM. From our results, we observe that BCL is better than MER in all cases~(Split-MNIST--$2.4 \%$ improvement for ITL, $6.55 \%$ improvement for IDL, $4.12 \%$ improvement for ICL and Permuted-MNIST--$0.26 \%$ improvement for ITL, $1.4  \%$ improvement for IDL and $5.9 \%$ improvement for ICL).

The substantial improvements obtained by BCL are more evident in Table \ref{tab:cifar_100} where the results on the split-CiFAR100 data set are summarized. BCL is clearly the best-performing method. The next best-performing method is naive rehearsal-C, where BCL improves performance by $3.41 \%$ for ITL, $10.3 \%$ for IDL, and  $17.99 \%$ for ICL. Other observations about the regularization methods and the rest of the baseline methods carry forward from Table~\ref{tab:main_res}. However, one key difference is that while AdaGrad  is observed to be better than Adam for the MNIST data set, Adam is comparable to SGD and Adagrad for split-CiFAR100. In summary, BCL is comparable to or better than the state of the art in the literature for both the MNIST and CiFAR100 data sets.

\begin{wrapfigure}[30]{l}{0.49\textwidth}
\begin{subfigure}{\textwidth}
  \centering
	\includegraphics[width = \columnwidth]{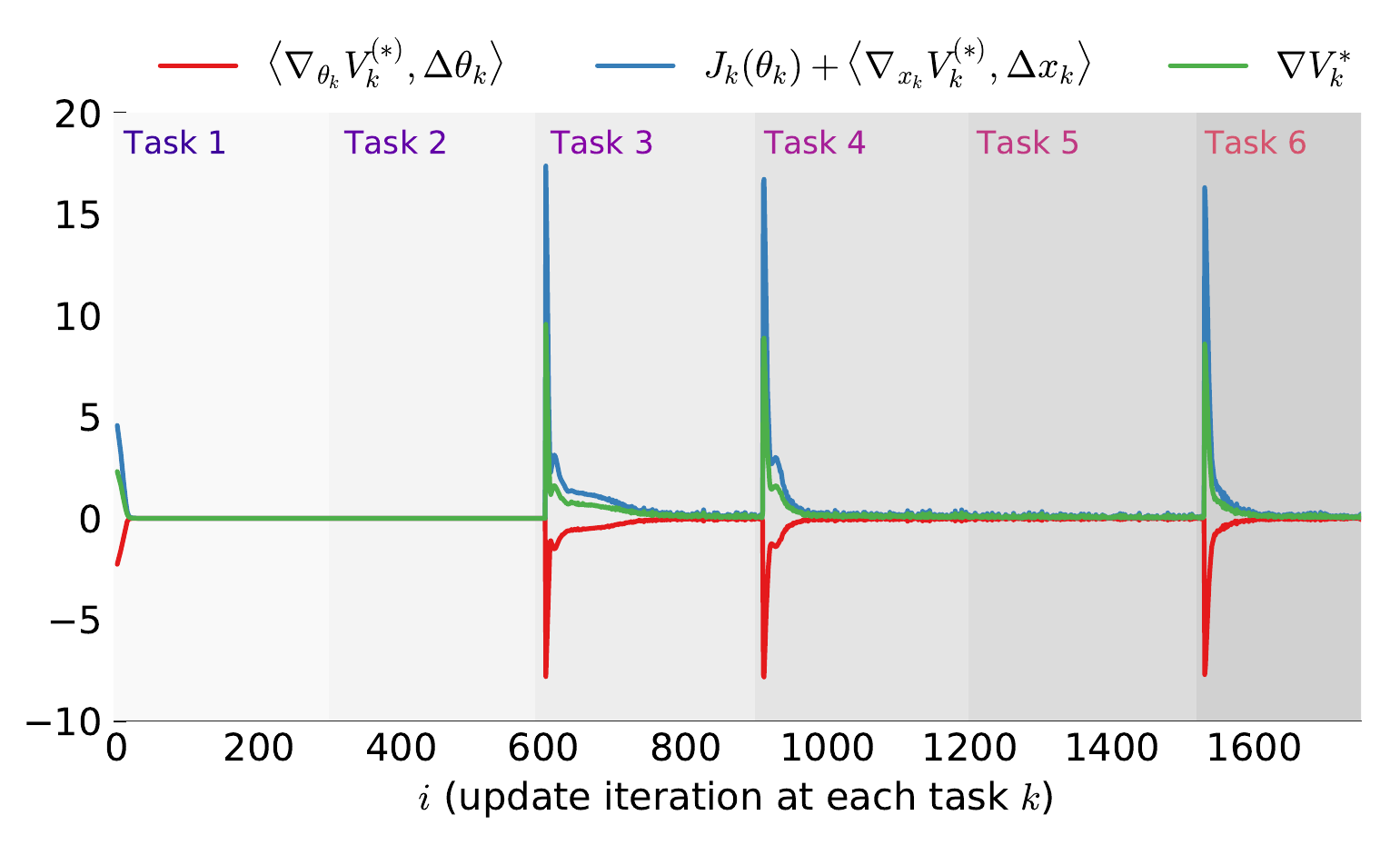}
	\caption{}
  \label{fig:all_task}
\end{subfigure}\\
\begin{subfigure}{\textwidth}
 \centering
	\includegraphics[width = \columnwidth]{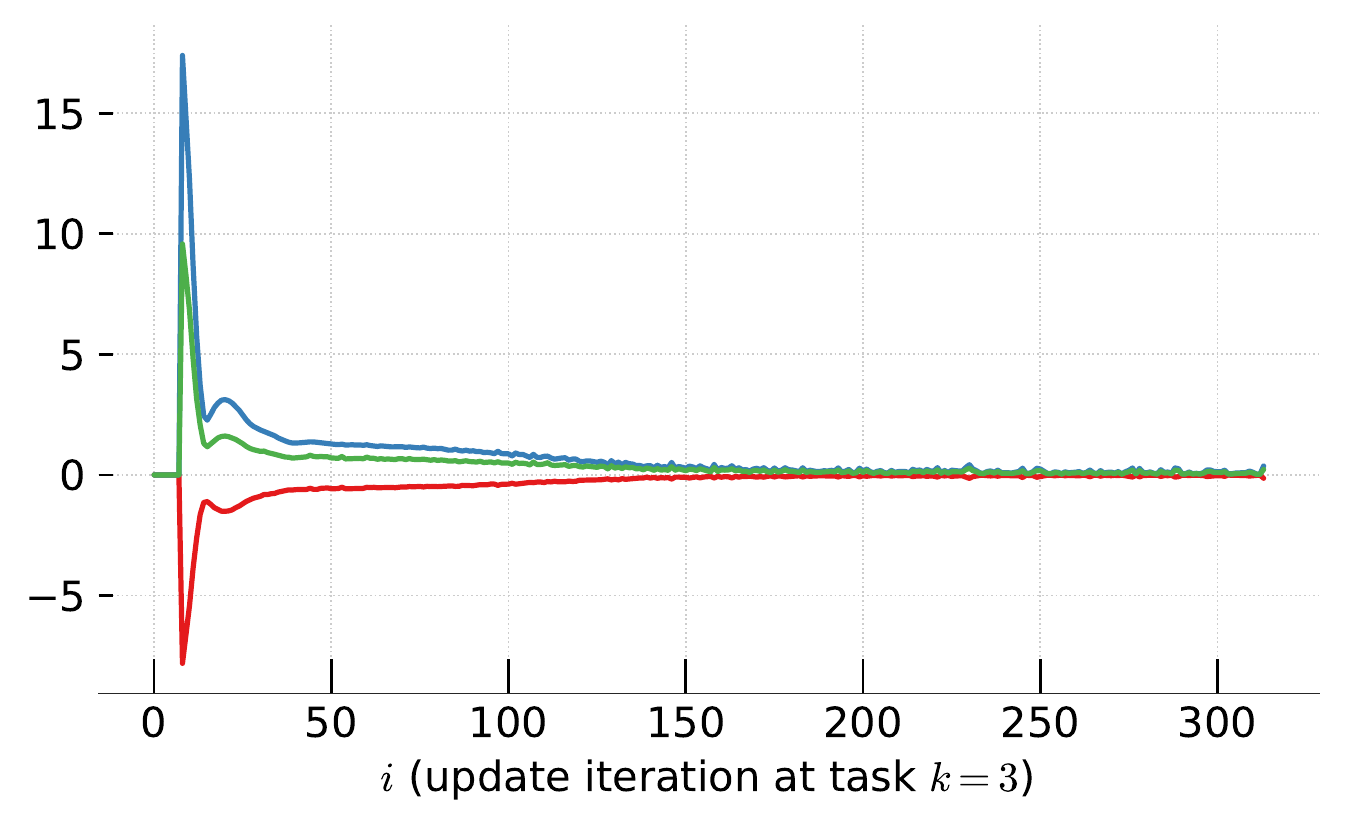}
	 \caption{}
	 \label{fig:one_task}
\end{subfigure}
\caption{(Top) Progression of different terms in Eq. \eqref{eq_M_DES} with respect to update iterations, where the index on the x axis is calculated as $k\times 300$. The tasks boundaries~(at what $i$ the tasks are introduced) are illustrated through shades of grey. (down) Illustration of the cost at $k =3$.}
\end{wrapfigure}
                

\textbf{Do we need a game to achieve this performance?} To provide additional insights into the benefits of the game, we compare RA values with and without the game. In our setup, $\Delta \vect{x}^{(i)}_{k}$ aims at increasing the cost, and  $\vect{\theta}^{(i)}_{k}$ aims at reducing the cost. If we hold the play for $\Delta \vect{x}^{(i)}_{k}$, then the dynamics required to play the game do not exist, thus providing a method that can perform CL without the game. Therefore, we induce the absence of a game by fixing $\Delta \vect{x}^{(i)}_{k}$ and perform each of the nine experiments for five repetitions~(ICL, IDL, and ITL for split-MNIST, permuted-MNIST, and split-CiFAR100). We summarize these results in the last two rows of Tables \ref{tab:main_res} and \ref{tab:cifar_100}. Consequently, we make two observations. First: even without the game, we achieve RA values comparable to the state of the art. This is true for both the MNIST and CiFAR100 data sets. Second, with the introduction of the game, we observe improved RA values across the board~(at least by $1 \%$ with MNIST). The difference is clearer with the CiFAR100 data set where we observe a substantial improvement in RA values~(at least $10 \%$).

\textbf{Does the theory appropriately model the continual learning problem?} In Fig.~\ref{fig:all_task} we plot the progression of $J_{k}( {\vect{\theta}}_{k})+ \langle \nabla_{\vect{x}_{k}} V_{k}^{(*)}, \Delta \vect{x}_{k}\rangle$~(blue curve), $\langle \nabla_{{\vect{\theta}}_{k}} V_{k}^{(*)}, \Delta {\vect{\theta}}_{k} \rangle$~(red curve), and the sum, namely, $\Delta V^{(*)}_{k} $~(green curve). A total of six tasks,  sampled from the permutation MNIST data set within the ICL setting, are illustrated in Fig. \ref{fig:all_task}. These tasks are introduced every 300 update steps. From Fig. \ref{fig:all_task} we make two important observations. First, as soon as tasks 1, 3, 4, and 6 are introduced, all three curves~(red, blue, and green) indicate a bump. Second, when tasks 2 and 5 are introduced, there is no change. On a closer look at task 3 in Fig.~\ref{fig:one_task}, we observe that when the task 3 is introduced, the blue curve exhibits a large positive bump~(the introduction of the new tasks increases the first and the third terms in Eq. \eqref{eq_M_DES}). The increase implies that task 3 forced the model to generalize and increased forgetting on tasks 1 and 2~(observed by the increase in the green curve). To compensate for this increase, we require the model~$\vect{\theta}_k$ to behave adversarially and introduce a large enough negative value in $\langle \nabla_{{\vect{\theta}}_{k}} V_{k}^{(*)}, \Delta {\vect{\theta}}_{k} \rangle$~(red curve)  to cancel out the increase in the blue curve. In Fig.~\ref{fig:one_task} the red curve demonstrates a large negative value~(expected behavior) and eventually~(as $i$ increases) forces the blue curve~(by consequence, green: the sum of red and blue) to move toward zero~(the model compensates for the increase in forgetting). As observed, the blue and the red curves behave opposite to each other and introduce a push-pull behavior that  stops   only when the two cancel each other and the sum~(green) is zero. Once the sum has reached zero, there is no incentive for the red and green to be nonzero, and therefore they remain at zero; thus, all three curves (green, red, and blue) remain at zero once converged until a task 4 is introduced~(when there is another bump, as seen in Fig. \ref{fig:all_task}). However, this increase in the blue curve is not observed when tasks 2 and 5 are introduced. When new tasks are similar to the older tasks, it is expected that $\Delta V^{(*)}_{k} =0,$  as is observed in Fig. \ref{fig:all_task}.

All of these observations are fully explained by  Eq. \ref{eq_M_DES}, which illustrates that the solution to the CL problem is obtained optimally only when $\Delta V^{(*)}_{k} =0$~(observed in Figs. \ref{fig:all_task} and \ref{fig:one_task}). The term  $\Delta V^{(*)}_{k} $ is quantified by $J_{k}( {\vect{\theta}}_{k}), \langle \nabla_{{\vect{\theta}}_{k}} V_{k}^{(*)}, \Delta {\vect{\theta}}_{k} \rangle$ and $\langle \nabla_{\vect{x}_{k}} V_{k}^{(*)}, \Delta \vect{x}_{k}\rangle$. Our theory suggests that there exists an inherent trade-off between different terms in Eq. \ref{eq_M_DES}. Therefore every time a new task is observed, it is expected that~$J_{k}( {\vect{\theta}}_{k})+ \langle \nabla_{\vect{x}_{k}} V_{k}^{(*)}, \Delta \vect{x}_{k}\rangle$ increases~(increase in blue curve when tasks 1, 3, 4, and 6 are introduced) and $\langle \nabla_{{\vect{\theta}}_{k}} V_{k}^{(*)}, \Delta {\vect{\theta}}_{k} \rangle$ compensates to cancel this increase~(red curve exhibits a negative jump). In Theorems~1 and 2 we demonstrate the existence of this balance point~(for each task, as $i$ increases, $\Delta V^{(*)}_{k} $ tends to zero, as observed in Fig. \ref{fig:one_task}) and $\Delta V^{(*)}_{k} $  remains zero~(the balance point is stable, proved in Theorem~2)  until a new task increases forgetting. Furthermore, our theory claims the existence of a solution with respect to each task. This is also observed in Fig.~\ref{fig:all_task} as, for each task, there is an increase in cost, and BCL quickly facilitates convergence. \emph{These observations indicate that our dynamical system in Eq. \ref{eq_M_DES}
accurately describes the dynamics of the continual learning problem}. Furthermore, the assumptions under which the theory is developed are practical and are satisfied by performing continual learning on the permuted MNIST problem.

\section{Conclusion}
We developed a dynamic programming-based framework to enable the methodical study of key challenges in CL. We show that an inherent trade-off between generalization and forgetting exists and must be modeled for optimal performance. To this end, we introduce a two-player sequential game that models the trade-off. We show in theory and simulation that there is an equilibrium point that resolves this trade-off~(Theorem~1) and that this saddle point can be attained~(Theorem~2). However, we observe that any change in the task modifies the equilibrium point. Therefore, a global equilibrium point between generalization and forgetting is not possible, and our results are  valid only in a neighborhood~(defined given a task). To attain this equilibrium point, we develop BCL and demonstrate state-of-the-art performance on a CL benchmark~\cite{Hsu18_EvalCL}. In the future, we will extend our framework for nonEuclidean tasks.

\section{Broader Impact}
\textit{Positive Impacts:} CL has a wide range of applicability. It helps avoids retraining, and it improves the learning efficiency of learning methods. Therefore, in science applications where the data is generated sequentially but the data distribution varies with time, our theoretically grounded method provides the potential for improved performance.
\textit{Negative Impacts:} Our theoretical framework does not have  direct adverse impacts. However, the potential advantages of our approach can improve the efficiency of adverse ML systems such as fake news, surveillance, and cybersecurity attacks. 

\begin{ack}
This work was supported by the U.S.\ Department of Energy, Office of Science, Advanced Scientific Computing Research, under Contract DE-AC02-06CH11357 and by a DOE Early Career Research Program award. We are grateful for the computing resources from the Joint Laboratory for System Evaluation  and Leadership Computing Facility at Argonne. We also are grateful to Dr. Vignesh Narayanan, assistant professor, University of South Carolina, and Dr. Marieme Ngom, Dr. Sami Khairy --postdoctoral appointees, Argonne National Laboratory, for their insights.
\end{ack}

\bibliographystyle{plain}
\bibliography{cdc.bib}
\section*{Checklist}


\begin{enumerate}
\item For all authors...
\begin{enumerate}
  \item Do the main claims made in the abstract and introduction accurately reflect the paper's contributions and scope?
    \answerYes{See Fig 1~(right)}
  \item Did you describe the limitations of your work?
    \answerYes{Refer Section~\ref{theory} and Section~\ref{BCL}}
  \item Did you discuss any potential negative societal impacts of your work?
    \answerYes{Refer Section Broader Impact}
  \item Have you read the ethics review guidelines and ensured that your paper conforms to them?
    \answerYes{Yes}
\end{enumerate}

\item If you are including theoretical results...
\begin{enumerate}
  \item Did you state the full set of assumptions of all theoretical results?
    \answerYes{Refer to the theorem statements in section. \ref{theory}}
	\item Did you include complete proofs of all theoretical results?
    \answerYes{Refer to proofs in Appendix A}
\end{enumerate}

\item If you ran experiments...
\begin{enumerate}
  \item Did you include the code, data, and instructions needed to reproduce the main experimental results (either in the supplemental material or as a URL)?
    \answerYes{Refer to supplementary materials}
  \item Did you specify all the training details (e.g., data splits, hyperparameters, how they were chosen)?
    \answerYes{Refer to Appendix C in supplementary materials and \cite{Hsu18_EvalCL}}
	\item Did you report error bars (e.g., with respect to the random seed after running experiments multiple times)?
    \answerYes{Five repetitions and mean and standard deviations are reported. Refer to Table. \ref{tab:main_res}}
	\item Did you include the total amount of compute and the type of resources used (e.g., type of GPUs, internal cluster, or cloud provider)?
    \answerYes{Refer to the Results section}
\end{enumerate}

\item If you are using existing assets (e.g., code, data, models) or curating/releasing new assets...
\begin{enumerate}
  \item If your work uses existing assets, did you cite the creators?
    \answerYes{We used the continual learning benchmark by \cite{Hsu18_EvalCL}; we cite their paper.}
  \item Did you mention the license of the assets?
    \answerYes{No licence information was provided by \cite{Hsu18_EvalCL}.}
  \item Did you include any new assets either in the supplemental material or as a URL?
    \answerNA{}
  \item Did you discuss whether and how consent was obtained from people whose data you are using/curating?
    \answerNA{}
  \item Did you discuss whether the data you are using/curating contains personally identifiable information or offensive content?
    \answerNA{}
\end{enumerate}

\item If you used crowdsourcing or conducted research with human subjects...
\begin{enumerate}
  \item Did you include the full text of instructions given to participants and screenshots, if applicable?
    \answerNA{}
  \item Did you describe any potential participant risks, with links to Institutional Review Board (IRB) approvals, if applicable?
  \answerNA{}
  \item Did you include the estimated hourly wage paid to participants and the total amount spent on participant compensation?
    \answerNA{}
\end{enumerate}
\end{enumerate}

\section*{Supplementary Information}
We use $\mR$ to denote the set of real numbers and $\mN$ to denote the set of natural numbers. We  use $\|.\|$ to denote the Euclidean norm for vectors and the Frobenius norm for matrices, while using bold symbols to illustrate matrices and vectors. We define an interval $[0, K ), K \in \mN$ and  let $p (\cT )$ be the distribution over all the tasks observed in this interval. For any $k \in [0, K ) ,$ we define a parametric model $g (. )$ with ${\vect{y}}_{k} = g (\vect{x}_{k}; {\vect{\theta}}_{k} )$, where ${\vect{\theta}}_{k}$ is a vector comprising all parameters of the model with $\vect{x}_{k} \in \cX_{k}$.  Let $n$ be the number of samples  and  $m$ be the number of dimensions. Suppose a task at $k | k \in [0, K )$ is observed and denoted as  $\cT_{k} : \cT_{k} \sim p (\cT )$, where $\cT_{k} =\{\cX_{k}, \ell_{k}\}$ is a tuple with $\cX_{k} \in \mR^{n   m}$  being the input data and $\ell_{k}$ quantifies the loss incurred by $\cX_{k}$ using the model $g$ for the task at $k$.   We denote a sequence of ${\vect{\theta}}_{k}$ as $\vect{u}_{k:K} = \{{\vect{\theta}}_{\tau} \in \Omega_{\theta}, k \leq \tau \leq K \},$ with  $\Omega_{\theta}$ being the compact~ (feasible ) set for the parameters. We denote the optimal value with a superscript ${ (* )};$ for instance, we use ${\vect{\theta}}_k^{ (* )}$ to denote the optimal value of ${\vect{\theta}}_{k}$ at task $k.$ In this paper we  use balance point, equilibrium point, and saddle  point to refer to the point of balance between generalization and forgetting. We  interchange between these terms whenever convenient for the discussion. We will use $\nabla_{ (j )} i$ to denote the gradient of $i$ with respect to $j$ and $\Delta i$ to denote the first difference in discrete time.
\section{Additional Results}
 We define the cost~ (combination of catastrophic cost and generalization cost ) at any instant $k$ as $J_{k} ({\vect{\theta} }_{k} ) = \gamma_{k} \ell_{k} + \sum_{\tau = 0}^{k-1} \gamma_{\tau} \ell_{\tau},$ where $\ell_{\tau}$ is computed on task $\cT_{\tau}$ with $\gamma_{\tau}$ describing the contribution of $\cT_{\tau}$ to this sum. We will show that for any fixed $k$, the catastrophic forgetting cost~$J_k ({\vect{\theta} }_{k} )$ is divergent in the limit $k \rightarrow \infty$ if equal contribution from each task is expected.
\begin{lemma}
 For any $k \in \mN,$  define $J_k ({\vect{\theta} }_{k} ) = \sum_{\tau = 0}^{k}  \gamma_{\tau} \ell_{\tau}.$ For all $\tau,$ assume  $\ell_{\tau}$ to be continuous with $L \geq \ell_{\tau} \geq \epsilon, \forall \tau, \epsilon >0$ and let $\gamma_{\tau} =1$. Then $J_k ({\vect{\theta} }_{k} )$ is divergent as $k \rightarrow \infty.$ 
\end{lemma}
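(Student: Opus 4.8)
The plan is to prove divergence by a direct lower-bound comparison with a divergent series. The essential observation is that the hypothesis $\ell_\tau \geq \epsilon > 0$ for every $\tau$, combined with $\gamma_\tau = 1$, forces each summand in $J_k(\vect{\theta}_k) = \sum_{\tau=0}^{k} \gamma_\tau \ell_\tau = \sum_{\tau=0}^{k} \ell_\tau$ to be bounded below by a fixed positive constant. Summing $k+1$ such terms then yields a lower bound that grows linearly in $k$, and a quantity dominated from below by something tending to infinity must itself tend to infinity.

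First I would fix an arbitrary $k \in \mN$ and write out $J_k(\vect{\theta}_k) = \sum_{\tau=0}^{k} \ell_\tau$ explicitly, using $\gamma_\tau = 1$. Next I would apply the uniform lower bound $\ell_\tau \geq \epsilon$ term-by-term to obtain
\[
J_k(\vect{\theta}_k) \;=\; \sum_{\tau=0}^{k} \ell_\tau \;\geq\; \sum_{\tau=0}^{k} \epsilon \;=\; (k+1)\,\epsilon.
\]
Finally I would let $k \rightarrow \infty$: since $\epsilon > 0$ is fixed, the right-hand side $(k+1)\epsilon \rightarrow \infty$, and therefore $J_k(\vect{\theta}_k) \rightarrow \infty$ as well. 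This establishes divergence, completing the argument. The role of the upper bound $\ell_\tau \leq L$ and the continuity of $\ell_\tau$ is not needed for divergence per se; these appear in the statement to fix the setting and will be the relevant hypotheses for the companion Corollary (where a decaying $\gamma_\tau$ is introduced to restore boundedness), so I would simply note that they are not invoked here.

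Honestly, there is no substantive obstacle: the result is an elementary comparison of a series of uniformly positive terms against the harmonic-free arithmetic divergence $(k+1)\epsilon$. The only point worth stating with care is that the conclusion is about the sequence $\{J_k\}_k$ diverging in $k$ rather than about any property of a single $\vect{\theta}_k$, and that it is precisely the choice $\gamma_\tau = 1$ (equal contribution from every task) that prevents the tail from being summable. This is exactly the tension the Corollary then resolves by allowing the $\gamma_\tau$ to decay so that $\sum_\tau \gamma_\tau \ell_\tau$ can remain bounded, which is why prioritizing task contributions is necessary for a well-posed objective.
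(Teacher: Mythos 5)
Your proposal is correct and follows essentially the same route as the paper's own proof: bound each summand below by $\epsilon$ using $\gamma_\tau = 1$, obtain the lower bound $(k+1)\epsilon$, and conclude divergence as $k \rightarrow \infty$. Your additional remarks that continuity and the upper bound $L$ are not needed here, and that the conclusion concerns the sequence $\{J_k\}_k$ in $k$, are accurate refinements but do not change the argument.
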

\begin{proof}[Proof of Lemma 1]
	With $J_k ({\vect{\theta} }_{k} ) = \sum_{\tau = 0}^{k}  \gamma_{\tau} \ell_{\tau}$ we write $\lim_{k \rightarrow \infty} \sum_{\tau = 0}^{k}  \gamma_{\tau} \ell_{\tau} \geq \lim_{k \rightarrow \infty}  \sum_{\tau = 0}^{k}  \gamma_{\tau} \epsilon $, where $\gamma_{\tau}=1$ which provides $ \lim_{k \rightarrow \infty} \sum_{\tau = 0}^{k} \epsilon = \infty.$ Therefore, $J_k ({\vect{\theta} }_{k} )$ is divergent.
\end{proof}
 When $\ell_{\tau} \geq \epsilon$ with $\epsilon >0$ implies that each task incurs a nonzero cost. Furthermore, $\gamma_{\tau} =1,$ it implies that each task provides equal contribution to the catastrophic forgetting cost and  contributed nonzero value to $J_k ({\vect{\theta} }_{k} )$. The aforementioned lemma demonstrates that equivalent performance~ (no forgetting  on all tasks ) cannot be guaranteed for an infinite number of tasks when each task provides a nonzero cost to the sum~ (you have to learn for all the tasks ). However, if the task contributions are prioritized based on knowledge about the task distribution, the sum can be ensured to be convergent as shown in the next corollary.
 \begin{corr}
    	For any $k \in \mN,$ define  $J_k ({\vect{\theta} }_{k} ) = \sum_{\tau=0}^{k} \gamma_{\tau} \ell_{\tau}$ where $\ell_{\tau}$ is continuous and bounded such that $\epsilon \leq \ell_{\tau} \leq L, \forall \epsilon >0.$ Define $N= \frac{1}{k}$ and choose $\gamma_{N}$ such that $\gamma_{N} \rightarrow 0, N \rightarrow \infty$ and assume when there are infinite number of tasks, $\lim_{ N \rightarrow \infty} \sum_{N} \gamma_{N} \leq M.$ Under these assumptions, $J_k ({\vect{\theta} }_{k} )$ is convergent.
\end{corr}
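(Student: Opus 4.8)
The plan is to prove convergence of the partial sums $J_k(\vect{\theta}_k) = \sum_{\tau=0}^{k} \gamma_\tau \ell_\tau$ by combining a direct comparison (domination) argument with the monotone convergence theorem for real sequences. This mirrors the structure of the divergence proof in Lemma~1, but now exploits the summability of the weights rather than their constancy. The essential content I would extract from the hypotheses is that the weights $\gamma_\tau$ are nonnegative, vanish in the limit, and are summable with $\sum_\tau \gamma_\tau \leq M$, while the losses are sandwiched as $\epsilon \leq \ell_\tau \leq L$ with $\epsilon > 0$.

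First I would observe that the sequence $(J_k)_{k \in \mN}$ is monotone nondecreasing in $k$: since $\ell_{k+1} \geq \epsilon > 0$ and $\gamma_{k+1} \geq 0$, each increment $J_{k+1} - J_k = \gamma_{k+1}\ell_{k+1}$ is nonnegative. Hence, by the monotone convergence theorem, it suffices to exhibit a uniform upper bound on $J_k$ that is independent of $k$.

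Second, I would establish that bound by domination. Using $\ell_\tau \leq L$ termwise gives $J_k = \sum_{\tau=0}^{k} \gamma_\tau \ell_\tau \leq L \sum_{\tau=0}^{k} \gamma_\tau$. By the summability assumption, the partial sums $\sum_{\tau=0}^{k} \gamma_\tau$ are bounded above by their limit, which is at most $M$, so that $J_k \leq LM$ for every $k$. A monotone nondecreasing sequence bounded above converges, which yields the claim.

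The step I expect to be the main obstacle is mostly notational rather than analytic: the corollary introduces the reindexing $N = \tfrac{1}{k}$ and writes the summability condition as $\lim_{N\to\infty}\sum_N \gamma_N \leq M$, which must be reconciled with the running index $\tau$ in the definition of $J_k$. I would make this explicit by identifying the weight sequence $\{\gamma_N\}$ with $\{\gamma_\tau\}$ under the stated correspondence, and by verifying that $\gamma_N \to 0$ together with bounded partial sums is exactly the statement that the series $\sum_\tau \gamma_\tau$ converges. Once this identification is fixed, the remaining steps are the routine comparison test and monotone convergence argument above; the condition $\gamma_N \to 0$ is then seen to be a necessary (though here not directly invoked) consequence of summability, serving mainly to make precise the prioritization of tasks described in the surrounding text.
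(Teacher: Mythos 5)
Your proposal is correct and follows essentially the same route as the paper's own proof: bound $J_k \leq L\sum_{\tau=0}^{k}\gamma_\tau \leq LM$ by termwise domination with $\ell_\tau \leq L$, then conclude convergence from monotonicity plus boundedness. Your version is slightly more explicit about why the sequence is monotone (nonnegativity of the increments $\gamma_{k+1}\ell_{k+1}$) and about reconciling the $N=1/k$ reindexing with the running index $\tau$, but these are clarifications of the same argument rather than a different one.
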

\begin{proof}[Proof of Corollary 1]
 Since $\ell_{\tau} \leq L$, $J_k ({\vect{\theta} }_{k} ) = lim_{k \rightarrow \infty} \sum_{\tau=0}^{k}  \gamma_{\tau} \ell_{\tau} \leq lim_{k \rightarrow \infty} \sum_{\tau=0}^{k}  \gamma_{\tau} L \leq  L   lim_{k \rightarrow \infty} \sum_{\tau = 0}^{k} \gamma_{\tau}.$
 
 Since $\lim_{k \rightarrow \infty} \sum_{\tau = 0}^{k} \gamma_{\tau} = \lim_{ N \rightarrow \infty} \sum_{N} \gamma_{N}$ as $N =\frac{1}{k}$, therefore $\lim_{k \rightarrow \infty} \sum_{\tau = 0}^{k} \gamma_{\tau} \leq M$ and $J_k ({\vect{\theta} }_{k} )$ is upper bounded by $ L   M$. As a result, $J_k ({\vect{\theta} }_{k} )$ is convergent since $J_k ({\vect{\theta} }_{k} )$ is a monotone.
\end{proof}

To solve the problem at $k$, we seek ${\vect{\theta} }_{k}$ to minimize $J_{k} ({\vect{\theta} }_{k} )$. Similarly, to solve the problem in the complete interval $[0, K )$, we seek a ${\vect{\theta} }_{k}$ to minimize $J_{k} ({\vect{\theta}}_{k} )$ for each $k \in [0,K ).$ In other words we seek to obtain ${\vect{\theta} }_{k}$ for each task such that the cost $J_{k} ({\vect{\theta} }_{k} )$ is minimized. The optimization problem for the overall CL problem~ (overarching goal of CL ) is then provided as the minimization of the cumulative cost $V_{k} (\vect{u}_{k:K} ) = \sum_{\tau=k}^{K} \beta_{\tau} J_{\tau} ( {\vect{\theta}}_{\tau} )$ such that $V_k^{ (* )},$  is given as  \begin{equation} V_k^{ (* )} = min_{\vect{u}_{k:K} }  V_{k} (\vect{u}_{k:K} ), \label{op} \end{equation} with $0 \leq \beta_{\tau} \leq 1$ being the contribution of $J_{\tau}$ and $\vect{u}_{k:K}$ being a weight sequence of length $K-k.$  We will now derive the difference equation for our cost formulation.
\begin{prop}
For any $k \in [0, K ),$ define  $ V_k = \sum_{\tau=k}^{K} \beta_{\tau} J_{\tau} ( {\vect{\theta}}_{\tau} )$ with ${\vect{\theta}}_{\tau} \in \Omega.$  Define $\vect{u}_{k:K} = \{{\vect{\theta}}_{\tau} \in \Omega, k \leq \tau \leq K \},$, with  $\Omega$ being the compact~ (feasible ) set as a sequence of parameters with length $K-k$ and $V_k^{ (* )} = min_{\vect{u}_{k:K} }  \sum_{\tau=k}^{K} \beta_{\tau} J_{\tau} ( {\vect{\theta}}_{\tau} ).$ Then, the following is true
\begin{equation}
	\begin{aligned}
	      \Delta V^{ (* )}_{k}  =	- min_{{\vect{\theta}}_{k} \in \Omega}   \big[ \beta_k J_{k} ( {\vect{\theta}}_{k} )  +  \big ( \langle \nabla_{{\vect{\theta}}_{k}} V_{k}^{ (* )} , \Delta {\vect{\theta}}_{k} \rangle  +\langle \nabla_{\vect{x}_{k}} V_{k}^{ (* )}, \Delta \vect{x}_{k} \rangle \big )\big],&
	\end{aligned}
	\label{eq_opt}
\end{equation}
where $\Delta V^{ (* )}_{k}$ represents the first difference due to the introduction of a task, $\Delta {\vect{\theta}}_{k} $ due to parameters and $\nabla_{\vect{x}_{k}}$ due to the task data with $\beta_k  \in \mathbb{R} \cup [0,1], \forall k$ and $J_{k} ({\vect{\theta} }_{k} ) = \gamma_{k} \ell_{k} + \sum_{\tau = 0}^{k-1} \gamma_{\tau} \ell_{\tau}$.
\label{Der_HJB}
\end{prop}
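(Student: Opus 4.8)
The proposition is essentially a discrete-time Hamilton-Jacobi-Bellman (HJB) / Bellman recursion for the cumulative cost $V_k = \sum_{\tau=k}^K \beta_\tau J_\tau(\vect{\theta}_\tau)$. The goal is to express the first difference $\Delta V_k^{(*)}$ of the optimal cost-to-go in terms of the immediate cost $\beta_k J_k(\vect{\theta}_k)$ plus first-order (gradient-inner-product) terms capturing how the optimal cost changes with the parameters $\vect{\theta}_k$ and with the task input $\vect{x}_k$.

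=== BEGIN PROOF PROPOSAL ===

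\textbf{Proof plan.}
The plan is to derive the stated difference equation from Bellman's principle of optimality applied to the cumulative cost $V_k(\vect{u}_{k:K}) = \sum_{\tau=k}^{K} \beta_{\tau} J_{\tau}(\vect{\theta}_{\tau})$. First I would split off the leading term of the sum, writing $V_k(\vect{u}_{k:K}) = \beta_k J_k(\vect{\theta}_k) + \sum_{\tau=k+1}^{K} \beta_{\tau} J_{\tau}(\vect{\theta}_{\tau}) = \beta_k J_k(\vect{\theta}_k) + V_{k+1}(\vect{u}_{k+1:K})$. Minimizing over the weight sequence $\vect{u}_{k:K}$ and using the principle of optimality to separate the minimization over $\vect{\theta}_k$ from the minimization over the tail $\vect{u}_{k+1:K}$, I obtain the Bellman recursion $V_k^{(*)} = \min_{\vect{\theta}_k \in \Omega} \big[\beta_k J_k(\vect{\theta}_k) + V_{k+1}^{(*)}\big]$, where $V_{k+1}^{(*)}$ is the optimal tail cost.

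The key step is to express $V_{k+1}^{(*)}$ as a first-order expansion of $V_k^{(*)}$. Since moving from $k$ to $k+1$ perturbs both the parameters (by $\Delta \vect{\theta}_k$) and the task input (by $\Delta \vect{x}_k$, reflecting the arrival of a new task), I would write a first-order Taylor expansion $V_{k+1}^{(*)} = V_k^{(*)} + \langle \nabla_{\vect{\theta}_k} V_k^{(*)}, \Delta \vect{\theta}_k \rangle + \langle \nabla_{\vect{x}_k} V_k^{(*)}, \Delta \vect{x}_k \rangle + \text{h.o.t.}$, treating $V^{(*)}$ as a function of the parameters and the input data. Substituting this into the Bellman recursion and recalling the definition of the first difference $\Delta V_k^{(*)} = V_{k+1}^{(*)} - V_k^{(*)}$, I would move $V_k^{(*)}$ to the left-hand side. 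Because $V_k^{(*)}$ does not depend on the inner minimization variable $\vect{\theta}_k$ at the expansion point, it can be pulled outside the $\min$, yielding $V_k^{(*)} = \min_{\vect{\theta}_k \in \Omega}\big[\beta_k J_k(\vect{\theta}_k) + V_k^{(*)} + \langle \nabla_{\vect{\theta}_k} V_k^{(*)}, \Delta \vect{\theta}_k \rangle + \langle \nabla_{\vect{x}_k} V_k^{(*)}, \Delta \vect{x}_k \rangle\big]$. Rearranging so that the constant $V_k^{(*)}$ cancels and the sign flips on passing through the minimization gives exactly
\[
\Delta V_k^{(*)} = -\min_{\vect{\theta}_k \in \Omega}\big[\beta_k J_k(\vect{\theta}_k) + \langle \nabla_{\vect{\theta}_k} V_k^{(*)}, \Delta \vect{\theta}_k \rangle + \langle \nabla_{\vect{x}_k} V_k^{(*)}, \Delta \vect{x}_k \rangle\big].
\]

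\textbf{Main obstacle.}
The hard part will be justifying the first-order Taylor expansion rigorously: this requires that $V_k^{(*)}$ be differentiable with respect to both $\vect{\theta}_k$ and $\vect{x}_k$ and that the higher-order terms be negligible, which in the discrete-time setting means the perturbations $\Delta \vect{\theta}_k$ and $\Delta \vect{x}_k$ are treated as small increments (as in a standard discrete HJB derivation). Differentiability and boundedness of $V_k^{(*)}$ are precisely what Lemma~1 and Corollary~1 secure through prioritization of task contributions, so I would invoke those results to legitimize the expansion. A secondary subtlety is the interchange of the min over $\vect{\theta}_k$ with the expansion: this is valid because the expansion is taken about the optimal trajectory, where $V_k^{(*)}$ is a constant independent of the minimization argument, so it passes through the $\min$ cleanly.

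=== END PROOF PROPOSAL ===
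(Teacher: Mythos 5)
Your overall route is the same as the paper's: split the cumulative cost via Bellman's principle to obtain $V_k^{(*)} = \min_{\vect{\theta}_k \in \Omega}\big[\beta_k J_k(\vect{\theta}_k) + V_{k+1}^{(*)}\big]$, Taylor-expand $V_{k+1}^{(*)}$ to first order, cancel the common $V_k^{(*)}$, and discard higher-order terms. The gap is in how you define and extract $\Delta V_k^{(*)}$. You set $\Delta V_k^{(*)} = V_{k+1}^{(*)} - V_k^{(*)}$ (the \emph{total} first difference) and simultaneously expand that same difference as $\langle \nabla_{\vect{\theta}_k} V_k^{(*)}, \Delta\vect{\theta}_k\rangle + \langle \nabla_{\vect{x}_k} V_k^{(*)}, \Delta\vect{x}_k\rangle$. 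With that identification, substituting into the Bellman recursion and pulling the constant $V_k^{(*)}$ out of the minimization leaves $0 = \min_{\vect{\theta}_k \in \Omega}\big[\beta_k J_k(\vect{\theta}_k) + \langle \nabla_{\vect{\theta}_k} V_k^{(*)}, \Delta\vect{\theta}_k\rangle + \langle \nabla_{\vect{x}_k} V_k^{(*)}, \Delta\vect{x}_k\rangle\big]$: there is nothing left on the left-hand side to become $\Delta V_k^{(*)}$, and if you instead keep $\Delta V_k^{(*)}$ symbolic the identity becomes circular, since the quantity you are solving for is, by your own definition, exactly the sum of the two inner products already inside the bracket. Your final step, ``rearranging so that the constant cancels and the sign flips,'' therefore does not produce the stated equation.

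The paper avoids this by treating $V^{(*)}$ as a function of the triple $(k, \vect{x}_k, \vect{\theta}_k)$ and expanding $V_{k+1}^{(*)}$ around that point in \emph{all three} arguments: the expansion contains a third term $\langle \nabla_{k}(V^{(*)}_{k}), \Delta k\rangle$, the partial first difference with respect to the task index alone, and it is this partial difference --- not the total difference $V_{k+1}^{(*)} - V_k^{(*)}$ --- that the paper names $\Delta V_k^{(*)}$ and that survives on the left-hand side after cancellation. This mirrors the continuous-time HJB equation, where $-\partial V^{*}/\partial t$ sits on the left while the state-derivative terms sit inside the minimization; your derivation in effect merges the time-partial term into the state terms and thereby loses the left-hand side. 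To repair the argument, add the $\langle \nabla_{k}(V^{(*)}_{k}), \Delta k\rangle$ term to your Taylor expansion and identify that term, rather than the total difference, as $\Delta V_k^{(*)}$; the remainder of your steps (pulling the constant out of the min, neglecting higher-order terms, and invoking Lemma~1 and Corollary~1 for boundedness and differentiability) then coincide with the paper's proof.
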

\begin{proof}
Given $V_k^{ (* )} = min_{\vect{u}_{k:K} }  \sum_{\tau=k}^{K} \beta_{\tau} J_{\tau} ( {\vect{\theta}}_{\tau} ),$ we split the interval $[k,  K )$ as $[k, k+1 )$ and $[k+1,  K )$ to write $$V^{ (* )}_{k}= min_{{\vect{\theta}}_{\tau} \in \Omega}  \big[  \beta_k   J_{k} ( {\vect{\theta}}_{k} )\big]  +  min_{\vect{u}_{k+1:K} }\big[ \sum_{\tau=k+1}^{K}  \beta_{\tau} J_{\tau} ( {\vect{\theta}}_{\tau} ) \big].$$
$V_{k}= \sum_{\tau=k}^{K} \beta_{\tau} J_{\tau} ({\vect{\theta}}_{\tau} )$ provides $\sum_{\tau=k+1}^{K} \beta_{\tau} J_{\tau} ( {\vect{\theta}}_{\tau} )$ is $V_{k+1}$ therefore $min_{\vect{u}_{k+1:K} }\big[ \sum_{\tau=k+1}^{K}  \beta_{\tau} J_{\tau} ( {\vect{\theta}}_{\tau} ) \big]$ is $V^{ (* )}_{k+1}.$ We then achieve 
$$V^{ (* )}_{k}   =  min_{{\vect{\theta}}_{k} \in \Omega}   \big[  \beta_k  J_{k} ( {\vect{\theta}}_{k} ) + V^{ (* )}_{k+1} \big].$$ 

Since the minimization is with respect to $k$ now, the terms in $k+1$ can be pulled into of the bracket without any change to the minimization problem. We then approximate $V^{ (* )}_{k+1}$ using the information provided at $k.$ Since $V^{ (* )}_{k+1}$ is a function of $\vect{y}_{k},$ which is then a function of $ (k, \vect{x}_{k}, {\vect{\theta}}_{k} ),$ and all changes in  $\vect{y}_{k}$ can be summarized through $ (k, \vect{x}_{k}, {\vect{\theta}}_{k} ).$ Therefore,  a Taylor series of $V^{ (* )}_{k+1}$ around $ (k, \vect{x}_{k}, {\vect{\theta}}_{k} )$ provides
\begin{equation}
	\begin{aligned}
		&V^{ (* )}_{k+1} = V^{ (* )}_{k} + \langle \nabla_{{\vect{\theta}}_{k}} V_{k}^{ (* )} , \Delta {\vect{\theta}}_{k} \rangle &\\  &+ \langle \nabla_{\vect{x}_{k}} V_{k}^{ (* )}, \Delta \vect{x}_{k} \rangle + \langle \nabla_{k} (V^{ (* )}_{k} ), \Delta k \rangle + \cdots, &
	\end{aligned}
	\label{eq_a_1_5}
\end{equation}
where $\cdots$ summarizes all the higher order terms. As $k \in \mathbb{N}$ and $\langle \nabla_{k} (V^{ (* )}_{k} ), \Delta k \rangle$ represents the first difference in $V^{ (* )}_{k}$ hitherto denoted by $\Delta V^{ (* )}_{k}.$ We therefore achieve
\begin{equation}
	\begin{aligned}
	V^{ (* )}_{k+1} &= V^{ (* )}_{k} + \langle \nabla_{{\vect{\theta}}_{k}} V_{k}^{ (* )}, \Delta~\vect{\theta}_{k} \rangle \\  
    &+ \langle \nabla_{\vect{x}_{k}} V_{k}^{ (* )}, \Delta \vect{x}_{k} \rangle + \Delta V^{ (* )}_{k} + \cdots, 
	\end{aligned}
	\label{eq_a_1_5}
\end{equation}
Substitute into the original equation to get
\begin{equation}
	\begin{aligned}
		& V^{ (* )}_{k}   =  min_{{\vect{\theta}}_{k} \in \Omega}   \big[ \beta_k   J_{k} ( {\vect{\theta}}_{k} ) \big] +  \big (  V^{ (* )}_{k} + \langle \nabla_{{\vect{\theta}}_{k}} V_{k}^{ (* )} , \Delta {\vect{\theta}}_{k} \rangle  & \\ &+ \langle \nabla_{\vect{x}_{k}} V_{k}^{ (* )}, \Delta \vect{x}_{k} \rangle +\Delta V^{ (* )}_{k} \big ) + \cdots,& 
	\end{aligned}
	\label{eq_a_1_5}
\end{equation}
Cancel common terms and assume that the higher order terms~ ($\cdots$ ) are negligible to obtain 
\begin{equation}
	\begin{aligned}
 \Delta V^{ (* )}_{k} =	- min_{{\vect{\theta}}_{k} \in \Omega}   \big[ \beta_k J_{k} ( {\vect{\theta}}_{k} )  +  \langle \nabla_{{\vect{\theta}}_{k}} V_{k}^{ (* )} , \Delta {\vect{\theta}}_{k} \rangle  +\langle \nabla_{\vect{x}_{k}} V_{k}^{ (* )}, \Delta \vect{x}_{k} \rangle \big].&
	\end{aligned}
	\label{eq_opt}
\end{equation}
which is a difference equation in $V^{ (* )}_{k}.$
\end{proof}

Note that $V^{ (* )}_{k}$ is the minima for the overarching CL problem and $\Delta V^{ (* )}_{k}$ represents the change in $V^{ (* )}_{k}$ upon introduction of a task~ (we hitherto refer to this as perturbations ). Zero perturbations $ (\Delta V^{ (* )}_{k}=0 )$ implies that the introduction of a new task does not impact our current solution; that is, the optimal solution on all previous tasks is optimal on the new task as well. 

The solution of the CL problem can directly be obtained by solving Eq. \eqref{eq_opt} using all the available data. Thus, $ min_{{\vect{\theta}}_{k} \in \Omega}   \big[ H (\Delta \vect{x}_{k}, \vect{\theta}_{k} )  \big] \quad \text{yields } \Delta V^{ (* )}_{k} \approx 0$ for $\beta > 0,$ with $H (\Delta \vect{x}_{k}, \vect{\theta}_{k} ) =  \beta_k J_{k} ( {\vect{\theta}}_{k} ) +   \langle \nabla_{{\vect{\theta}}_{k}} V_{k}^{ (* )} , \Delta {\vect{\theta}}_{k} \rangle + \langle \nabla_{\vect{x}_{k}} V_{k}^{ (* )}, \Delta \vect{x}_{k} \rangle.$ Essentially, minimizing $H (\Delta \vect{x}_{k}, \vect{\theta}_{k} )$  would minimize the perturbations introduced by any new task. 

We simulate  worst-case discrepancy by iteratively updating $\Delta \vect{x}_{k}$ through gradient ascent, thus maximizing generalization. Next, we minimize forgetting under maximum generalization by iteratively updating $\vect{\theta}_{k}$ through gradient descent. To formalize our idea, let us indicate the iteration index at $k$ by $i$ and write $\Delta \vect{x}_{k}$ as $\Delta \vect{x}^{ (i )}_{k}$ and ${\vect{\theta}}_{k}$ as ${\vect{\theta}}^{ (i )}_{k}$ with $H (\Delta \vect{x}_{k}, \vect{\theta}_{k} )$ as $H (\Delta \vect{x}^{ (i )}_{k}, \vect{\theta}^{ (i )}_{k} )$~ (for simplicity of notation, we will denote $H (\Delta \vect{x}^{ (i )}_{k}, \vect{\theta}^{ (i )}_{k} )$ as $H$ whenever convenient ). Towards these updates, we will first get an upper bound on $H (\Delta \vect{x}^{ (i )}_{k}, \vect{\theta}^{ (i )}_{k} )$ and solve the upper bounding problem.

\begin{prop}
Let $k \in [0,  K )$ and define $H (\Delta \vect{x}^{ (i )}_{k}, \vect{\theta}^{ (i )}_{k} ) = \beta_k J_{k} ( {\vect{\theta}}^{ (i )}_{k} ) +   \langle \nabla_{{\vect{\theta}}_{k}} V_{k}^{ (* )} , \Delta {\vect{\theta}}^{ (i )}_{k} \rangle + \langle \nabla_{\vect{x}^{ (i )}_{k}} V_{k}^{ (* )}, \Delta \vect{x}^{ (i )}_{k} \rangle$  assume that  $ \nabla_{{\vect{\theta}}_{k}} V_{k}^{ (* )} \leq  \nabla_{{\vect{\theta}}_{k}} J_{k} ({\vect{\theta}}^{ (i )}_{k} ).$ Then the following approximation is true:
\begin{equation}
	\begin{aligned}
		& H (\Delta \vect{x}^{ (i )}_{k}, \vect{\theta}^{ (i )}_{k} ) \leq \beta_k J_{k} (\vect{\theta}^{ (i )}_{k} ) +  (J_{k} (\vect{\theta}^{ (i+\zeta )}_{k} ) - J_{k} (\vect{\theta}^{ (i )}_{k} ) ) +  ( J_{k+\zeta} (\vect{\theta}^{ (i )}_{k} ) - J_{k} (\vect{\theta}^{ (i )}_{k} )  ),&
	\end{aligned}
	\label{eq_a_1_6}
\end{equation}
where $\beta_k  \in \mathbb{R} \cup [0,1], \forall k$ and $\zeta \in \mathbb{N}$ and $J_{k+\zeta}$ indicates $\zeta$ updates on $\Delta \vect{x}^{ (i )}_{k}$ and $\vect{\theta}^{ (i+\zeta )}_{k}$ indicates $\zeta$ updates on ${\vect{\theta}}^{ (i )}_{k}.$
\label{prop2}
\end{prop}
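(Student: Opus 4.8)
The plan is to bound the two inner-product terms in $H(\Delta \vect{x}^{(i)}_{k}, \vect{\theta}^{(i)}_{k})$ separately, leaving the leading term $\beta_k J_{k}(\vect{\theta}^{(i)}_{k})$ untouched, and to convert each inner product of a gradient with an increment into a finite difference of $J_k$ values by a first-order Taylor expansion. Concretely, I would establish the two bounds
\[
\langle \nabla_{\vect{\theta}_{k}} V_{k}^{(*)}, \Delta \vect{\theta}^{(i)}_{k} \rangle \leq J_{k}(\vect{\theta}^{(i+\zeta)}_{k}) - J_{k}(\vect{\theta}^{(i)}_{k}), \qquad \langle \nabla_{\vect{x}^{(i)}_{k}} V_{k}^{(*)}, \Delta \vect{x}^{(i)}_{k} \rangle \leq J_{k+\zeta}(\vect{\theta}^{(i)}_{k}) - J_{k}(\vect{\theta}^{(i)}_{k}),
\]
and then add them to $\beta_k J_{k}(\vect{\theta}^{(i)}_{k})$ to recover exactly the claimed right-hand side.

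For the parameter term I would first invoke the standing assumption $\nabla_{\vect{\theta}_{k}} V_{k}^{(*)} \leq \nabla_{\vect{\theta}_{k}} J_{k}(\vect{\theta}^{(i)}_{k})$ to replace the gradient of the inaccessible value function by the gradient of the computable cost, giving $\langle \nabla_{\vect{\theta}_{k}} V_{k}^{(*)}, \Delta \vect{\theta}^{(i)}_{k} \rangle \leq \langle \nabla_{\vect{\theta}_{k}} J_{k}(\vect{\theta}^{(i)}_{k}), \Delta \vect{\theta}^{(i)}_{k} \rangle$. I would then identify $\Delta \vect{\theta}^{(i)}_{k}$ with the net displacement produced by $\zeta$ gradient-descent updates, so that $\vect{\theta}^{(i)}_{k} + \Delta \vect{\theta}^{(i)}_{k} = \vect{\theta}^{(i+\zeta)}_{k}$, and recognize $\langle \nabla_{\vect{\theta}_{k}} J_{k}(\vect{\theta}^{(i)}_{k}), \Delta \vect{\theta}^{(i)}_{k} \rangle$ as precisely the first-order term of the Taylor expansion of $J_k$ about $\vect{\theta}^{(i)}_{k}$, i.e.\ $J_{k}(\vect{\theta}^{(i+\zeta)}_{k}) - J_{k}(\vect{\theta}^{(i)}_{k})$ up to higher-order terms that I would discard exactly as in Proposition~\ref{Der_HJB}. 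The input term is handled analogously: the same gradient-to-difference reasoning bounds $\langle \nabla_{\vect{x}^{(i)}_{k}} V_{k}^{(*)}, \Delta \vect{x}^{(i)}_{k} \rangle$ by the first-order change of $J_k$ under a perturbation of the input, where $\Delta \vect{x}^{(i)}_{k}$ now denotes the net displacement produced by $\zeta$ gradient-ascent updates on the data. Writing $J_{k+\zeta}(\vect{\theta}^{(i)}_{k})$ for $J_k$ evaluated at $\vect{\theta}^{(i)}_{k}$ after these $\zeta$ input updates, this term is $J_{k+\zeta}(\vect{\theta}^{(i)}_{k}) - J_{k}(\vect{\theta}^{(i)}_{k})$ to first order, and summing the three pieces yields the stated inequality.

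The main obstacle is making the two gradient-to-difference passages legitimate rather than merely formal, and two points require care. First, the finite-difference identification equates an inner product with a gradient to a full difference $J_k(\,\cdot + \Delta\,) - J_k(\,\cdot\,)$, which is exact only to first order; I would need the step sizes $\alpha_k^{(i)}$, and hence $\|\Delta \vect{\theta}^{(i)}_{k}\|$ and $\|\Delta \vect{x}^{(i)}_{k}\|$, small enough that the quadratic remainder is negligible, consistent with the decaying-learning-rate assumption used elsewhere in the paper. Second, passing from the componentwise inequality $\nabla_{\vect{\theta}_{k}} V_{k}^{(*)} \leq \nabla_{\vect{\theta}_{k}} J_{k}$ to the corresponding inner-product inequality is clean only when the increment is taken along the descent direction, so that $\langle \nabla_{\vect{\theta}_{k}} V_{k}^{(*)} - \nabla_{\vect{\theta}_{k}} J_{k}, \Delta \vect{\theta}^{(i)}_{k} \rangle \leq 0$; I would make this directional convention explicit so the sign of that term is controlled. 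With these two conventions fixed, the remaining manipulations are routine.
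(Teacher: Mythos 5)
Your proposal follows essentially the same route as the paper's proof: invoke the assumption $\nabla_{\vect{\theta}_{k}} V_{k}^{(*)} \leq \nabla_{\vect{\theta}_{k}} J_{k}$ to replace the value-function gradient by the cost gradient, then convert each inner product into a finite difference of $J_k$ over $\zeta$ updates, and sum the three terms. Your added caveats about the first-order remainder and the directional sign convention are reasonable (and in fact flag looseness present in the paper's own argument), but the decomposition and key steps are identical.
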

\begin{proof}
Consider $	H (\Delta \vect{x}^{ (i )}_{k}, \vect{\theta}^{ (i )}_{k} ) =  \beta_k J_{k} ( {\vect{\theta}}^{ (i )}_{k} ) +   \langle \nabla_{{\vect{\theta}}_{k}} V_{k}^{ (* )} , \Delta {\vect{\theta}}^{ (i )}_{k} \rangle + \langle \nabla_{\vect{x}^{ (i )}_{k}} V_{k}^{ (* )}, \Delta \vect{x}^{ (i )}_{k} \rangle.$  Assuming $\nabla_{{\vect{\theta}}_{k}} V_{k}^{ (* )} \leq \nabla_{{\vect{\theta}}_{k}} J_{k} ({\vect{\theta}}^{ (i )}_{k} )$ we may write through finite difference approximation as
\begin{equation}
	\begin{aligned}
    \langle \nabla_{{\vect{\theta}}^{ (i )}_{k}} V_{k}^{ (* )} , \Delta {\vect{\theta}}^{ (i )}_{k} \rangle &\leq& \langle \nabla_{{\vect{\theta}}^{ (i )}_{k}} J_{k} ({\vect{\theta}}^{ (i )}_{k} ) , \Delta {\vect{\theta}}^{ (i )}_{k} \rangle,\\
     &\leq&   (J_{k} (\vect{\theta}^{ (i+\zeta )}_{k} ) -    J_{k} (\vect{\theta}^{ (i )}_{k} ) )
	\end{aligned}
	\label{eq_a_1_6}
\end{equation}
Similarly, we may write
\begin{equation}
	\begin{aligned}
  \langle \nabla_{\vect{x}_{k}} V_{k}^{ (* )}, \Delta \vect{x}_{k} \rangle &\leq&
  \langle \nabla_{\vect{x}_{k}} J_{k} ({\vect{\theta}}^{ (i )}_{k} ), \Delta \vect{x}_{k} \rangle,\\
   &\leq&  ( J_{k+\zeta} (\vect{\theta}^{ (i )}_{k} ) - J_{k} (\vect{\theta}^{ (i )}_{k} )  ).
	\end{aligned}
	\label{eq_a_1_6}
\end{equation}
Upon substitution, we have our result:
\begin{equation}
    \begin{aligned}
    H (\Delta \vect{x}^{ (i )}_{k}, \vect{\theta}^{ (i )}_{k} ) \leq \beta_k J_{k} (\vect{\theta}^{ (i )}_{k} ) + (J_{k} (\vect{\theta}^{ (i+\zeta )}_{k} ) - J_{k} (\vect{\theta}^{ (i )}_{k} ) ) +  (J_{k+\zeta} (\vect{\theta}^{ (i )}_{k} ) - J_{k} (\vect{\theta}^{ (i )}_{k} ) ).
    \end{aligned}
    \label{eq_a_1_6}
\end{equation}
\end{proof}

Our cost to be analyzed will be given as 
\begin{equation}
	\begin{aligned}
	H (\Delta \vect{x}^{ (i )}_{k}, \vect{\theta}^{ (i )}_{k} ) = \beta_k J_{k} ( {\vect{\theta}}^{ (i )}_{k} ) +   \langle \nabla_{{\vect{\theta}}^{ (i )}_{k}} V_{k}^{ (* )} , \Delta {\vect{\theta}}^{ (i )}_{k} \rangle + \langle \nabla_{\vect{x}^{ (i )}_{k}} V_{k}^{ (* )}, \Delta \vect{x}^{ (i )}_{k} \rangle.
	\end{aligned}
	\label{eq_approx}
\end{equation}
and use this definition of $H (\Delta \vect{x}^{ (i )}_{k}, \vect{\theta}^{ (i )}_{k} )$ from here on.
\section{Main results}
\begin{figure}
  \includegraphics[width = \columnwidth]{Figures/Proof_Il.png}
    \caption{Illustration of proofs. $\Delta \vect{x}$~ (player~1 ) is the horizontal axis and the vertical axis indicates $\vect{\theta}$~ (player~2 ) where the curve indicates H. If we start from red circle for player~1~ (player~2 is fixed at the  blue circle ) H is increasing~ (goes from a grey circle to a red asterisk ) with player 1 reaching the red star. Next,  start from the blue circle~ ($\vect{\theta}$ is at the red star ), the cost decreases.}
    \label{fig:proof}
\end{figure} 
We will define two compact sets $\Omega_{\theta}, \Omega_{x}$ and seek to show existence and stability of a saddle point~$ (\Delta \vect{x}^{ (i )}_{k} , \vect{\theta}_k^{ (i )} )$ for a fixed $k.$ To illustrate the theory, we refer to Fig.~\ref{fig:proof}, for each $k,$ the initial values for the two players are characterized by the pair  $\{\vect{\theta}^{ (i )}_{k} \text{ (blue circle )}, \Delta \vect{x}^{ (i )}_{k} \text{ (red circle )} \},$ and $H ( \Delta \vect{x}^{ (i )}_{k}, \vect{\theta}^{ (i )}_{k} )$ is indicated by the grey circle on the cost curve~ (the dark blue curve ).  Our proofing strategy is as follows. 

First, we fix $\vect{\theta}_k^{ (. )} \in \Omega_{\theta}$ and construct $\mathcal{M}_k= \{\vect{\theta}^{ (. )}_{k}, \Omega_{x} \},$ to prove that H is maximizing with respect to $\Delta \vect{x}^{ (i )}_{k}.$ 
\begin{lemma}
For each $k \in [0, K ),$  fix $\vect{\theta}_k^{ (. )} \in \Omega_{\theta}$ and construct $\mathcal{M}_k = \{\Omega_{x}, \vect{\theta}^{ (. )}_{k}\}$  with $\Omega_{\theta}, \Omega_{x}$ being the sets of all feasible $\vect{\theta}^{ (. )}_{k}$ and $\vect{x}^{ (i )}_{k}$ respectively. Define $H (\Delta \vect{x}^{ (i )}_{k}, \vect{\theta}^{ (. )}_{k}  )$ as in Eq. \eqref{eq_approx} for $ (\Delta \vect{x}^{ (i )}_{k}, \vect{\theta}^{ (. )}_{k}  ) \in \mathcal{M}_k $ and consider $$ \Delta \vect{x}^{ (i+1 )}_{k} - \Delta \vect{x}^{ (i )}_{k} = \alpha_{k}^{ (i )}  \nabla_{\Delta \vect{x}^{ (i )}_{k}} H (\Delta \vect{x}^{ (i )}_{k}, \vect{\theta}^{ (. )}_{k} )  ) / \| \nabla_{\Delta \vect{x}^{ (i )}_{k}} H (\Delta \vect{x}^{ (i )}_{k}, \vect{\theta}^{ (. )}_{k} ) \|^2.$$ Consider the assumptions $\nabla_{ \vect{x}^{ (i )}_{k}} V_{k}^{ (* )} \leq \nabla_{ \vect{x}^{ (i )}_{k}} J_{k}$ and   $\langle \nabla_{\vect{x}^{ (i )}_{k}} J_k, \nabla_{\vect{x}^{ (i )}_{k}} J_k \rangle>0$, and let $\alpha_{k}^{ (i )} \rightarrow 0, i \rightarrow \infty.$ It follows that $H (\Delta \vect{x}^{ (i )}_{k}, \vect{\theta}^{ (. )}_{k} )$ converges asymptotically to a maximizer.
\label{lem:lem_max}
\end{lemma}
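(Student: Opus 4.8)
The plan is to treat $H(\cdot,\vect{\theta}^{(.)}_{k})$ as a Lyapunov-type function of the single free variable $\Delta \vect{x}^{(i)}_{k}$ (the second player $\vect{\theta}^{(.)}_{k}$ is frozen throughout) and to show, in the frozen neighborhood $\mathcal{M}_k$, that $H$ increases monotonically along the normalized gradient-ascent iterates, is bounded above, and therefore converges to a stationary point that is a local maximizer. First I would check that the update is well-defined, i.e. that the denominator $\|\nabla_{\Delta \vect{x}^{(i)}_{k}} H\|^2$ never vanishes. Since the only $\Delta \vect{x}^{(i)}_{k}$-dependent term of $H$ in Eq.~\eqref{eq_approx} is $\langle \nabla_{\vect{x}^{(i)}_{k}} V^{(*)}_{k},\Delta \vect{x}^{(i)}_{k}\rangle$, the relevant gradient is essentially $\nabla_{\vect{x}^{(i)}_{k}} V^{(*)}_{k}$; the assumptions $\nabla_{\vect{x}^{(i)}_{k}} V^{(*)}_{k}\leq \nabla_{\vect{x}^{(i)}_{k}} J_k$ and $\langle \nabla_{\vect{x}^{(i)}_{k}} J_k,\nabla_{\vect{x}^{(i)}_{k}} J_k\rangle>0$ guarantee that this gradient has strictly positive (and bounded) Frobenius norm, so the normalized step is well-posed.

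Second, I would compute the first difference of $H$ along the trajectory. Using the update rule and bilinearity of the inner product,
\[
\Delta H = \langle \nabla_{\Delta \vect{x}^{(i)}_{k}} H,\ \Delta \vect{x}^{(i+1)}_{k}-\Delta \vect{x}^{(i)}_{k}\rangle + O\big((\alpha_k^{(i)})^2\big) = \alpha_k^{(i)}\,\frac{\|\nabla_{\Delta \vect{x}^{(i)}_{k}} H\|^2}{\|\nabla_{\Delta \vect{x}^{(i)}_{k}} H\|^2} + O\big((\alpha_k^{(i)})^2\big) = \alpha_k^{(i)} + O\big((\alpha_k^{(i)})^2\big).
\]
Because the gradient norm is bounded away from zero, the remainder is genuinely second order in $\alpha_k^{(i)}$; since $\alpha_k^{(i)}\to 0$, for $i$ large enough the first-order gain dominates and $\Delta H>0$, so $H(\Delta \vect{x}^{(i)}_{k},\vect{\theta}^{(.)}_{k})$ is eventually strictly increasing.

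Third, I would invoke compactness and boundedness: $\Omega_x$ is compact and, under the standing boundedness/differentiability assumption on the cost (ensured via Lemma~1 and Corollary~1), $H$ is continuous and bounded, hence attains a finite maximum $H^{\star}$ on $\mathcal{M}_k$. An eventually increasing sequence bounded above converges by the monotone convergence theorem, so $H(\Delta \vect{x}^{(i)}_{k},\vect{\theta}^{(.)}_{k})$ converges. To identify the limit as a maximizer, I would use that the per-step gain is exactly $\alpha_k^{(i)}$ whenever $\nabla_{\Delta \vect{x}^{(i)}_{k}} H\neq 0$: with a decaying but non-summable learning rate, boundedness of $H$ forces the gradient norm to collapse along the trajectory, so $\nabla_{\Delta \vect{x}^{(i)}_{k}} H\to 0$ and the iterates approach a stationary point of $H$ in $\Omega_x$, which by the ascent construction is a local maximizer.

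The main obstacle will be the last step: converting convergence of the values $H(\Delta \vect{x}^{(i)}_{k})$ into convergence to an actual maximizer. This requires (i) controlling the $O((\alpha_k^{(i)})^2)$ remainder carefully enough that monotonicity is real rather than an artifact of the first-order expansion, and (ii) ruling out the trajectory stalling at a non-critical point, which needs the non-summability of $\alpha_k^{(i)}$ (or an equivalent persistence-of-excitation device) in addition to the stated $\alpha_k^{(i)}\to 0$. The well-definedness and monotonicity steps are, by contrast, routine once the positive-gradient-norm assumption is in hand.
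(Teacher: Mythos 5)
Your proposal follows essentially the same route as the paper's proof: a first-order expansion of $H$ along the normalized gradient-ascent step, substitution of the update rule to show the per-iteration gain equals $\alpha_{k}^{(i)}\langle \nabla_{\vect{x}^{(i)}_{k}} J_{k},\nabla_{\vect{x}^{(i)}_{k}} J_{k}\rangle/\|\nabla_{\vect{x}^{(i)}_{k}} J_{k}\|^{2}\geq 0$ (using $\nabla_{\vect{x}^{(i)}_{k}}V_{k}^{(*)}\leq\nabla_{\vect{x}^{(i)}_{k}}J_{k}$ and the positivity of the inner product), hence monotone increase, and finally the decay $\alpha_{k}^{(i)}\to 0$ to conclude the increments vanish asymptotically. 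The one substantive difference is that you are more careful than the paper at exactly the two points you flag as obstacles: the paper treats the first-order Taylor expansion as exact (no $O((\alpha_{k}^{(i)})^{2})$ remainder appears), and it concludes ``converges to a maximizer'' directly from $H(\Delta\vect{x}^{(i+1)}_{k},\vect{\theta}^{(.)}_{k})-H(\Delta\vect{x}^{(i)}_{k},\vect{\theta}^{(.)}_{k})\to 0$ without invoking compactness, the monotone convergence theorem, or any non-summability condition on $\alpha_{k}^{(i)}$ to rule out stalling at a non-critical point. So your added machinery is a strengthening rather than a divergence; the ``main obstacle'' you identify is real and is precisely the step the paper's own proof passes over.
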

\begin{proof}
Fix $\vect{\theta}^{ (. )}_{k} \in \Omega_{\theta}$ and construct $\mathcal{M}_k$ such that $\mathcal{M}_k = \{\vect{\theta}^{ (. )}_{k}, \Omega_{x} \}$ which we call a neighborhood. Therefore, for $ (\Delta \vect{x}^{ (i+1 )}_{k}, \vect{\theta}^{ (. )}_{k} ),  (\Delta \vect{x}^{ (i )}_{k}, \vect{\theta}^{ (. )}_{k} ) \in \mathcal{M}_k$ we may write a first-order Taylor series expansion of  $H (\Delta \vect{x}^{ (i+1 )}_{k}, \vect{\theta}^{ (. )}_{k} )$ around $H (\Delta \vect{x}^{ (i )}_{k}, \vect{\theta}^{ (. )}_{k} )$ as
\begin{equation}
\begin{aligned}
	H (\Delta \vect{x}^{ (i+1 )}_{k}, \vect{\theta}^{ (. )}_{k} ) = H (\Delta \vect{x}^{ (i )}_{k}, \vect{\theta}^{ (. )}_{k} ) +  \langle \nabla_{\Delta \vect{x}^{ (i )}_{k}} H (\Delta \vect{x}^{ (i )}_{k}, \vect{\theta}^{ (. )}_{k} ), \Delta \vect{x}^{ (i+1 )}_{k} - \Delta \vect{x}^{ (i )}_{k}  \rangle .
\end{aligned}
\end{equation}
We substitute the update as $\alpha_{k}^{ (i )}  \frac{\nabla_{\Delta \vect{x}^{ (i )}_{k}} H (\Delta \vect{x}^{ (i )}_{k}, \vect{\theta}^{ (. )}_{k} )}{\| \nabla_{\Delta \vect{x}^{ (i )}_{k}} H (\Delta \vect{x}^{ (i )}_{k}, \vect{\theta}^{ (. )}_{k} ) \|^2}$ to get 
 \begin{equation}
	\begin{aligned}
    H (\Delta \vect{x}^{ (i+1 )}_{k}, \vect{\theta}^{ (. )}_{k} ) - H (\Delta \vect{x}^{ (i )}_{k}, \vect{\theta}^{ (. )}_{k} )=  \langle \nabla_{\Delta \vect{x}^{ (i )}_{k}} H (\Delta \vect{x}^{ (i )}_{k}, \vect{\theta}^{ (. )}_{k} ), \alpha_{k}^{ (i )}  \frac{\nabla_{\Delta \vect{x}^{ (i )}_{k}} H (\Delta \vect{x}^{ (i )}_{k}, \vect{\theta}^{ (. )}_{k} )}{\| \nabla_{\Delta \vect{x}^{ (i )}_{k}} H (\Delta \vect{x}^{ (i )}_{k}, \vect{\theta}^{ (. )}_{k} )\|^2} \rangle.
	\end{aligned}
\end{equation}
The derivative $\nabla_{\Delta \vect{x}^{ (i )}_{k}} H (\Delta \vect{x}^{ (i )}_{k}, \vect{\theta}^{ (. )}_{k} )$ can be written as 
 \begin{equation}
	\begin{aligned}
    & \nabla_{\Delta \vect{x}^{ (i )}_{k}} H (\Delta \vect{x}^{ (i )}_{k}, \vect{\theta}^{ (. )}_{k} )  ) \leq \nabla_{\Delta \vect{x}^{ (i )}_{k}} \Big[ \beta_k J_{k} ( {\vect{\theta}}^{ (. )}_{k} ) +   \langle \nabla_{{\vect{\theta}}^{ (. )}_{k}} V_{k}^{ (* )} , \Delta {\vect{\theta}}^{ (. )}_{k} \rangle & \\  &+ \langle \nabla_{\vect{x}^{ (i )}_{k}} V_{k}^{ (* )}, \Delta \vect{x}^{ (i )}_{k} \rangle  \Big] = \nabla_{ \vect{x}^{ (i )}_{k}} V_{k}^{ (* )} \leq \nabla_{ \vect{x}^{ (i )}_{k}} J_{k}.&
	\end{aligned}
\end{equation}
Substitution reveals
 \begin{equation}
	\begin{aligned}
    	H (\Delta \vect{x}^{ (i+1 )}_{k}, \vect{\theta}^{ (. )}_{k} ) - H (\Delta \vect{x}^{ (i )}_{k}, \vect{\theta}^{ (. )}_{k} ) = \alpha_{k}^{ (i )}  \frac{ \langle \nabla_{\vect{x}^{ (i )}_{k}}  J_{k},  \nabla_{\vect{x}^{ (i )}_{k}}  J_{k} \rangle
    	  }{\|\nabla_{\vect{x}^{ (i )}_{k}} J_{k} \|^2}  
	\end{aligned}
\end{equation}
for $\alpha_{k}^{ (i )} >0$; and under the assumption that $ \langle \nabla_{\vect{x}^{ (i )}_{k}}  J_{k},  \nabla_{ \vect{x}^{ (i )}_{k}} J_{k} \rangle >0$ we obtain
\begin{equation}
	\begin{aligned}
    	 H (\Delta \vect{x}^{ (i+1 )}_{k}, \vect{\theta}^{ (. )}_{k} ) - H (\Delta \vect{x}^{ (i )}_{k}, \vect{\theta}^{ (. )}_{k} ) =   \alpha_{k}^{ (i )}  \frac{ \langle \nabla_{\vect{x}^{ (i )}_{k}} J_{k},  \nabla_{\vect{x}^{ (i )}_{k}} J_{k}
    	 \rangle }{\| \nabla_{\vect{x}^{ (i )}_{k}} J_{k} \|^2 } \geq 0.
	\end{aligned}
\end{equation}
Let $ B_{x} = \alpha_{k}^{ (i )}  \frac{ \langle \nabla_{\vect{x}_{k}} J_{k},  \nabla_{ \vect{x}_{k}} J_{k} \rangle }{\| \nabla_{ \vect{x}^{ (i )}_{k}} J_{k} \|^2 } \leq \alpha_{k}^{ (i )}$ and therefore $0 \leq H (\Delta \vect{x}^{ (i+1 )}_{k}, \vect{\theta}^{ (. )}_{k} ) - H (\Delta \vect{x}^{ (i )}_{k}, \vect{\theta}^{ (. )}_{k} )\leq \alpha^{ (i )}_{k}.$ We therefore have $H (\Delta \vect{x}^{ (i+1 )}_{k}, \vect{\theta}^{ (. )}_{k} ) - H (\Delta \vect{x}^{ (i )}_{k}, \vect{\theta}^{ (. )}_{k} ) \geq 0$ and $H (\Delta \vect{x}^{ (i )}_{k}, \vect{\theta}^{ (. )}_{k} )$ is maximizing with respect to $\Delta \vect{x}^{ (i )}_{k}.$ Furthermore, under the assumption that $\alpha^{ (i )}_{k} \rightarrow 0, k \rightarrow \infty$,  we have $H (\Delta \vect{x}^{ (i+1 )}_{k}, \vect{\theta}^{ (. )}_{k} ) - H (\Delta \vect{x}^{ (i )}_{k}, \vect{\theta}^{ (. )}_{k} ) \rightarrow 0$ asymptotically and we have our result.
\end{proof}

Similarly, we fix $\Delta \vect{x}^{ (. )}_{k} \in \Omega_x$ and construct $\mathcal{N}_k= \{ \Omega_{\theta}, \Delta \vect{x}^{ (. )}_{k}  \},$ to prove that $H$ is minimizing with respect to $\vect{\theta}^{ (i )}_{k}.$ 
\begin{lemma}
For each $k \in [0, K ),$ fix $\Delta \vect{x}^{ (. )}_{k} \in \Omega_{x}$ and construct $\mathcal{N}_k= \{\Delta \vect{x}^{ (. )}_{k}, \Omega_{\theta} \}.$ Then for any $ (\Delta \vect{x}^{ (i )}_{k}, \vect{\theta}^{ (. )}_{k}  ) \in \mathcal{N}_k$ define $H (\Delta \vect{x}^{ (i )}_{k}, \vect{\theta}^{ (. )}_{k}  )$ as in Eq. \eqref{eq_approx} with Proposition.~\ref{prop2} being true and let $\vect{\theta}^{ (i+1 )}_{k} - \vect{\theta}^{ (i )}_{k} = -\alpha_{k}^{ (i )}  \nabla_{{\vect{\theta}}^{ (i )}_{k}} H (\Delta \vect{x}^{ (. )}_{k}, \vect{\theta}^{ (i )}_{k} ) )$. Assume that $\| \nabla_{\vect{\theta}^{ (i )}_{k}} J_{k} ( {\vect{\theta}}^{ (i+\zeta )}_{k} )\| \leq L_1$ and $\| \nabla_{\vect{\theta}^{ (i )}_{k}} J_{k+\zeta} ( {\vect{\theta}}^{ (i )}_{k} )\| \leq L_2$ and  let $\alpha_{k}^{ (i )} \rightarrow 0, i \rightarrow \infty.$ Then $\vect{\theta}^{ (i )}_{k}$ converges to a local minimizer.
\label{lem:lem_min}
\end{lemma}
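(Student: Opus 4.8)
The plan is to mirror the argument of Lemma~\ref{lem:lem_max}, exchanging the roles of the two players: instead of ascending in $\Delta \vect{x}^{(i)}_{k}$ with $\vect{\theta}$ held fixed, I fix $\Delta \vect{x}^{(.)}_{k} \in \Omega_x$ and descend in $\vect{\theta}^{(i)}_{k}$, aiming to show that $H(\Delta \vect{x}^{(.)}_{k}, \vect{\theta}^{(i)}_{k})$ is nonincreasing along the iterates and converges. First I would take a first-order Taylor expansion of $H(\Delta \vect{x}^{(.)}_{k}, \vect{\theta}^{(i+1)}_{k})$ about $H(\Delta \vect{x}^{(.)}_{k}, \vect{\theta}^{(i)}_{k})$ within the neighborhood $\mathcal{N}_k$, giving $H(\Delta \vect{x}^{(.)}_{k}, \vect{\theta}^{(i+1)}_{k}) = H(\Delta \vect{x}^{(.)}_{k}, \vect{\theta}^{(i)}_{k}) + \langle \nabla_{\vect{\theta}^{(i)}_{k}} H, \vect{\theta}^{(i+1)}_{k} - \vect{\theta}^{(i)}_{k}\rangle$.

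Next I would substitute the prescribed gradient-descent update $\vect{\theta}^{(i+1)}_{k} - \vect{\theta}^{(i)}_{k} = -\alpha_{k}^{(i)} \nabla_{\vect{\theta}^{(i)}_{k}} H$, which collapses the inner product to $-\alpha_{k}^{(i)} \|\nabla_{\vect{\theta}^{(i)}_{k}} H\|^2$. Since $\alpha_{k}^{(i)} > 0$, this immediately yields $H(\Delta \vect{x}^{(.)}_{k}, \vect{\theta}^{(i+1)}_{k}) - H(\Delta \vect{x}^{(.)}_{k}, \vect{\theta}^{(i)}_{k}) \leq 0$, so $H$ is minimizing in $\vect{\theta}^{(i)}_{k}$, establishing the monotone part of the claim.

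The hard part is converting monotonicity into convergence, and this is where the hypotheses $L_1, L_2$ and Proposition~\ref{prop2} enter. Unlike the $\Delta \vect{x}$ direction of Lemma~\ref{lem:lem_max}---where the argument being held fixed enters $H$ only through the linear term $\langle \nabla_{\vect{x}_k} V_k^{(*)}, \Delta \vect{x}_k\rangle$ and the gradient reduces cleanly to $\nabla_{\vect{x}_k} V_k^{(*)}$---here $\vect{\theta}^{(i)}_{k}$ appears in several terms of $H$, so $\nabla_{\vect{\theta}^{(i)}_{k}} H$ does not simplify by cancellation. I would instead invoke the finite-difference surrogate of Proposition~\ref{prop2}, under which $\nabla_{\vect{\theta}^{(i)}_{k}} H$ is controlled by the gradients $\nabla_{\vect{\theta}^{(i)}_{k}} J_{k}(\vect{\theta}^{(i+\zeta)}_{k})$ and $\nabla_{\vect{\theta}^{(i)}_{k}} J_{k+\zeta}(\vect{\theta}^{(i)}_{k})$, whose norms are bounded by $L_1$ and $L_2$ by assumption. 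This gives a uniform bound $\|\nabla_{\vect{\theta}^{(i)}_{k}} H\|^2 \leq C$ for a constant $C$ depending on $\beta_k, L_1, L_2$, and hence $0 \geq H(\Delta \vect{x}^{(.)}_{k}, \vect{\theta}^{(i+1)}_{k}) - H(\Delta \vect{x}^{(.)}_{k}, \vect{\theta}^{(i)}_{k}) \geq -\alpha_{k}^{(i)} C$.

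Finally, since $H$ is bounded below (boundedness following from the prioritized task contributions of Lemma~1 and Corollary~1) and nonincreasing, it converges; and because the per-step gap is squeezed between $0$ and $-\alpha_{k}^{(i)} C$ with $\alpha_{k}^{(i)} \to 0$ as $i \to \infty$, the increment $H(\Delta \vect{x}^{(.)}_{k}, \vect{\theta}^{(i+1)}_{k}) - H(\Delta \vect{x}^{(.)}_{k}, \vect{\theta}^{(i)}_{k}) \to 0$ asymptotically, so $\vect{\theta}^{(i)}_{k}$ settles at a local minimizer. I expect the delicate step to be justifying the gradient bound from Proposition~\ref{prop2}, since the one-sided finite-difference approximation of $\nabla_{\vect{\theta}^{(i)}_{k}} H$ is exactly where the $L_1, L_2$ assumptions must do their work.
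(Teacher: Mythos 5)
Your proposal follows essentially the same route as the paper's proof: a first-order Taylor expansion in $\vect{\theta}^{(i)}_{k}$, substitution of the gradient-descent update to obtain a nonpositive increment, and then the finite-difference bound from Proposition~\ref{prop2} together with the $L_1, L_2$ assumptions to control $\|\nabla_{\vect{\theta}^{(i)}_{k}} H\|^2$ by a constant, so that $\alpha_k^{(i)} \to 0$ forces the increments to vanish. The only difference is presentational—the paper expands the squared norm of the three gradient terms explicitly via Cauchy's and Young's inequalities to produce the constant $B_\theta$, whereas you summarize this as a uniform bound $C$, and you additionally make explicit the (implicitly needed) boundedness-below argument.
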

\begin{proof}
First, we fix $\Delta \vect{x}^{ (. )}_k \in \Omega_{x}$ and construct $\mathcal{N}_k= \{ \Omega_{\theta} ,  \Delta \vect{x}^{ (. )}_{k} \}.$ For any $ (\Delta \vect{x}^{ (. )}_k, \vect{\theta}^{ (i )}_{k} ),  (\Delta \vect{x}^{ (. )}_k, \vect{\theta}^{ (i+1 )}_{k} ) \in\mathcal{N}_k$ we write  a first-order Taylor series expansion of  $H (\Delta \vect{x}^{ (. )}_k, \vect{\theta}^{ (i+1 )}_{k} )$ around $H (\Delta \vect{x}^{ (. )}_k, \vect{\theta}^{ (i )}_{k} )$ to write
 \begin{equation}
	\begin{aligned}
    	H (\Delta \vect{x}^{ (. )}_k, \vect{\theta}^{ (i+1 )}_{k} ) = H (\Delta \vect{x}^{ (. )}_k, \vect{\theta}^{ (i )}_{k} ) +  \langle \nabla_{\vect{\theta}^{ (i )}_{k}} H (\Delta \vect{x}^{ (. )}_k, \vect{\theta}^{ (i )}_{k} ),  \vect{\theta}^{ (i+1 )}_{k} - \vect{\theta}^{ (i )}_{k} \rangle . 
	\end{aligned}
\end{equation}
We then substitute 
$\vect{\theta}^{ (i+1 )}_{k} - \vect{\theta}^{ (i )}_{k} = -\alpha_{k}^{ (i )}  \nabla_{{\vect{\theta}}^{ (i )}_{k}} H (\Delta \vect{x}^{ (. )}_{k}, \vect{\theta}^{ (i )}_{k} ) $ to get
\begin{equation}
	\begin{aligned}
    	&H (\Delta \vect{x}^{ (. )}_k, \vect{\theta}^{ (i+1 )}_{k} ) - H (\Delta \vect{x}^{ (. )}_k, \vect{\theta}^{ (i )}_{k} ) = -\alpha_{k}^{ (i )}\langle \nabla_{\vect{\theta}^{ (i )}_{k}} H (\Delta \vect{x}^{ (. )}_k, \vect{\theta}^{ (i )}_{k} ), \nabla_{{\vect{\theta}}^{ (i )}_{k}} H (\Delta \vect{x}^{ (. )}_{k}, \vect{\theta}^{ (i )}_{k} )\rangle.&
	\end{aligned}
\end{equation}
Following Proposition 2, the derivative $\nabla_{{\vect{\theta}}^{ (i )}_{k}} H (\Delta \vect{x}^{ (. )}_{k}, \vect{\theta}^{ (i )}_{k} )$ can be written as 
\begin{equation}
	\begin{aligned}
    	\nabla_{{\vect{\theta}}^{ (i )}_{k}} H (\Delta \vect{x}^{ (. )}_{k}, \vect{\theta}^{ (i )}_{k} )\leq  \nabla_{\vect{\theta}_{k}} [\beta_k J_{k} ( {\vect{\theta}}^{ (i )}_{k} )  +    (J_{k} (\vect{\theta}^{ (i+\zeta )}_{k} ) - J_{k} (\vect{\theta}^{ (i )}_{k} ) ) +     (J_{k+\zeta} (\vect{\theta}^{ (i )}_{k} ) - J_{k} (\vect{\theta}^{ (i )}_{k} ) )]
	\end{aligned}
\end{equation}
Simplification reveals
 \begin{equation}
	\begin{aligned}
    \nabla_{{\vect{\theta}}^{ (i )}_{k}}  H (\Delta \vect{x}^{ (. )}_k, \vect{\theta}^{ (i )}_{k} ) )\leq 	\nabla_{{\vect{\theta}}^{ (i )}_{k}}   (\beta_k-2 ) J_{k} ( {\vect{\theta}}^{ (i )}_{k} )  +   \nabla_{{\vect{\theta}}^{ (i )}_{k}}  J_{k} (\vect{\theta}^{ (i+\zeta )}_{k} )  + \nabla_{\vect{\theta}_{k}} J_{k+\zeta} (\vect{\theta}^{ (i )}_{k} ).
	\end{aligned}
\end{equation}
Substitution therefore provides
\begin{align}
    	 &H (\Delta \vect{x}^{ (. )}_k, \vect{\theta}^{ (i+1 )}_{k} ) - H (\Delta \vect{x}^{ (. )}_k, \vect{\theta}^{ (i )}_{k} ) \nonumber\\ &\leq   
    	- \alpha_{k}^{ (i )}  \langle \nabla_{{\vect{\theta}}^{ (i )}_{k}}   (\beta_k-2 ) J_{k} ( {\vect{\theta}}^{ (i )}_{k} ) +   \nabla_{{\vect{\theta}}^{ (i )}_{k}}  J_{k} (\vect{\theta}^{ (i+\zeta )}_{k} )  + \nabla_{\vect{\theta}^{ (i )}_{k}} J_{k+\zeta} (\vect{\theta}^{ (i )}_{k} ), \nonumber
    	\\ & \nabla_{{\vect{\theta}}^{ (i )}_{k}}   (\beta_k-2 ) J_{k} ( {\vect{\theta}}^{ (i )}_{k} ) +   \nabla_{{\vect{\theta}}^{ (i )}_{k}}  J_{k} (\vect{\theta}^{ (i+\zeta )}_{k} )  + \nabla_{\vect{\theta}_{k}} J_{k+\zeta} (\vect{\theta}^{ (i )}_{k} ) \rangle .
\end{align}

Opening the square with Cauchy's inequality provides
 \begin{equation}
	\begin{aligned}
    	H (\Delta \vect{x}^{ (. )}_k, \vect{\theta}^{ (i+1 )}_{k} ) - H (\Delta \vect{x}^{ (. )}_k,     \vect{\theta}^{ (i )}_{k} ) 
    	&\leq& -    \alpha_{k}^{ (i )}  \Big[	\|\nabla_{{\vect{\theta}}^{ (i )}_{k}}   (\beta_k-2 ) J_{k} ( {\vect{\theta}}_{k} ) \|^2 \\ 
    	&+& \| \nabla_{{\vect{\theta}}^{ (i )}_{k}}  J_{k} (\vect{\theta}^{ (i+\zeta )}_{k} )\|^2
    	+ \| \nabla_{\vect{\theta}_{k}} J_{k+\zeta} (\vect{\theta}^{ (i )}_{k} )\|^2   
    	\\ 
    	&+& 2\|\nabla_{{\vect{\theta}}^{ (i )}_{k}}   (\beta_k-2 ) J_{k} ( {\vect{\theta}}_{k} ) \|  \| \nabla_{{\vect{\theta}}^{ (i )}_{k}}  J_{k} (\vect{\theta}^{ (i+\zeta )}_{k} )\|  \\ 
    	&+& 2\| \nabla_{{\vect{\theta}}^{ (i )}_{k}}  J_{k} (\vect{\theta}^{ (i+\zeta )}_{k} )\|\| \nabla_{\vect{\theta}_{k}} J_{k+\zeta} (\vect{\theta}^{ (i )}_{k} )\| 
      \\ 
    	&+& 2\|\nabla_{{\vect{\theta}}^{ (i )}_{k}}   (\beta_k-2 ) J_{k} ( {\vect{\theta}}_{k} ) \| \| \nabla_{\vect{\theta}_{k}} J_{k+\zeta} (\vect{\theta}^{ (i )}_{k} )\| \Big]. 
	\end{aligned}
\end{equation}
We simplify with Young's inequality to achieve
 \begin{equation}
	\begin{aligned}
    	 H (\Delta \vect{x}^{ (. )}_k, \vect{\theta}^{ (i+1 )}_{k} ) - H (\Delta \vect{x}^{ (. )}_k,     \vect{\theta}^{ (i )}_{k} ) 
    	&\leq& -    \alpha_{k}^{ (i )}  \Big[	\|\nabla_{{\vect{\theta}}^{ (i )}_{k}}   (\beta_k-2 )J_{k} ( {\vect{\theta}}_{k} ) \|^2 \\ 
    	&+& \| \nabla_{{\vect{\theta}}^{ (i )}_{k}}  J_{k} (\vect{\theta}^{ (i+\zeta )}_{k} )\|^2
    	\\ &+& \| \nabla_{\vect{\theta}_{k}} J_{k+\zeta} (\vect{\theta}^{ (i )}_{k} )\|^2   
    	\\ &+& \|\nabla_{{\vect{\theta}}^{ (i )}_{k}}   (\beta_k-2 ) J_{k} ( {\vect{\theta}}_{k} ) \|^2 \\ &+& \| \nabla_{{\vect{\theta}}^{ (i )}_{k}}  J_{k} (\vect{\theta}^{ (i+\zeta )}_{k} )\|^2  \\ 
    	&+& \| \nabla_{{\vect{\theta}}^{ (i )}_{k}}  J_{k} (\vect{\theta}^{ (i+\zeta )}_{k} )\|^2
    	\\ &+& \| \nabla_{\vect{\theta}_{k}} J_{k+\zeta} (\vect{\theta}^{ (i )}_{k} )\|^2 
        \\ &+& \|\nabla_{{\vect{\theta}}^{ (i )}_{k}}   (\beta_k-2 ) J_{k} ( {\vect{\theta}}_{k} ) \|^2 
        \\ &+& \| \nabla_{\vect{\theta}_{k}} J_{k+\zeta} (\vect{\theta}^{ (i )}_{k} )\|^2 \Big]. 
	\end{aligned}
\end{equation}
Further simplification results in 
\begin{equation}
	\begin{aligned}
    	 H (\Delta \vect{x}^{ (. )}_k, \vect{\theta}^{ (i+1 )}_{k} ) - H (\Delta \vect{x}^{ (. )}_k,     \vect{\theta}^{ (i )}_{k} ) 
    	&\leq& -    \alpha_{k}^{ (i )}  \Big[ 3	\|\nabla_{{\vect{\theta}}^{ (i )}_{k}}   (\beta_k-2 ) J_{k} ( {\vect{\theta}}_{k} ) \|^2 \\ 
    	&+& 3 \| \nabla_{{\vect{\theta}}^{ (i )}_{k}}  J_{k} (\vect{\theta}^{ (i+\zeta )}_{k} )\|^2
    	+ 3 \| \nabla_{\vect{\theta}_{k}} J_{k+\zeta} (\vect{\theta}^{ (i )}_{k} )\|^2   \Big]. 
	\end{aligned}
\end{equation}
With the assumption that $\| \nabla_{{\vect{\theta}}^{ (i )}_{k}}  J_{k} (\vect{\theta}^{ (i+\zeta )}_{k} )\| \leq L_1$ and $\| \nabla_{\vect{\theta}_{k}} J_{k+\zeta} (\vect{\theta}^{ (i )}_{k} )\|\leq L_2,$, we may write 
\begin{equation}
    \begin{aligned}
    	H (\Delta \vect{x}^{ (. )}_k, \vect{\theta}^{ (i+1 )}_{k} ) - H (\Delta \vect{x}^{ (. )}_k,     \vect{\theta}^{ (i )}_{k} )  &\leq&  -\alpha_{k}^{ (i )}  B_{\theta},
	\end{aligned}
\end{equation}
where $B_{\theta} = \Bigg[   ( (\sqrt{3}\beta_k-2\sqrt{3} )^2 + 3 )L_1^2 + 3L_2^2\Bigg].$ Assuming that $\alpha_{k}^{ (i )}$ is chosen such that $\alpha^{ (i )}_{k} \rightarrow 0$, we obtain $H (\Delta \vect{x}^{ (. )}_k, \vect{\theta}^{ (i+1 )}_{k} ) - H (\Delta \vect{x}^{ (. )}_k,     \vect{\theta}^{ (i )}_{k} ) \rightarrow 0$ as $i \rightarrow \infty$ and $H (\Delta \vect{x}^{ (. )}_k, \vect{\theta}^{ (i+1 )}_{k} ) - H (\Delta \vect{x}^{ (. )}_k,     \vect{\theta}^{ (i )}_{k} ) < 0.$ Therefore H converges to a local minimizer.
\end{proof}

From here on, we will define our cost function as H whereever convinient for simplicity of notations. Since, for any k, there exists a local maximizer $\Delta \vect{x}_k^{ (* )} \in \Omega_x,$ we may define $\mathcal{N}^{ (* )}_k = \{\Delta \vect{x}_k^{ (* )}, \Omega_\theta\}$ where the set $\Omega_{x}$ is comprised of a local maximizer $\Delta \vect{x}_k^{ (* )}$ 

\begin{lemma}
For any $k \in [0, K )$, let $\vect{\theta}^{ (* )}_{k} \in \Omega_\theta,$  be the minimizer of $H$ according to Lemma \ref{lem:lem_min} and define $\mathcal{M}^{ (* )}_k = \{\Omega_{x}, \vect{\theta}^{ (* )}_{k}\}.$ Then for $ (\Delta \vect{x}_k^{ (* )}, \vect{\theta}^{ (* )}_k ),  (\Delta \vect{x}_k^{ (i )}, \vect{\theta}^{ (* )}_k ) \in \mathcal{M}^{ (* )}_k,$ $H (\Delta \vect{x}_k^{ (* )}, \vect{\theta}^{ (* )}_k ) \geq H (\Delta \vect{x}_k^{ (i )}, \vect{\theta}^{ (* )}_k ),$ where $\Delta \vect{x}_k^{ (* )}$ is a maximizer for H according to Lemma. \ref{lem:lem_max}.
\label{lem:lem_max_opt}
\end{lemma}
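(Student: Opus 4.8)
The plan is to recognize this statement as a direct specialization of Lemma~\ref{lem:lem_max}. That lemma establishes, for \emph{any} fixed $\vect{\theta}^{(.)}_{k} \in \Omega_\theta$, that gradient ascent on $\Delta \vect{x}^{(i)}_{k}$ drives $H$ monotonically upward to a local maximizer. The only new ingredient here is to instantiate the fixed parameter as the particular minimizer $\vect{\theta}^{(*)}_{k}$ produced by Lemma~\ref{lem:lem_min}. Notably, the minimizer status of $\vect{\theta}^{(*)}_{k}$ plays no role in the present argument; what matters is merely that $\vect{\theta}^{(*)}_{k}$ is a fixed element of $\Omega_\theta$. (The minimizer property is reserved for Theorem~1, where this one-sided inequality is combined with its counterpart to produce the saddle-point condition.)

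First I would fix $\vect{\theta} = \vect{\theta}^{(*)}_{k}$ and form the neighborhood $\mathcal{M}^{(*)}_k = \{\Omega_{x}, \vect{\theta}^{(*)}_{k}\}$. Because the hypotheses invoked in Lemma~\ref{lem:lem_max}---namely $\nabla_{\vect{x}^{(i)}_{k}} V_{k}^{(*)} \leq \nabla_{\vect{x}^{(i)}_{k}} J_{k}$, the positivity $\langle \nabla_{\vect{x}^{(i)}_{k}} J_{k}, \nabla_{\vect{x}^{(i)}_{k}} J_{k} \rangle > 0$, and the decaying step size $\alpha_{k}^{(i)} \rightarrow 0$---constrain only the $\Delta \vect{x}$-gradient and the learning rate, they are unaffected by which element of $\Omega_\theta$ is held fixed. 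Hence the conclusion of Lemma~\ref{lem:lem_max} transfers verbatim to $\mathcal{M}^{(*)}_k$.

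Second, applying Lemma~\ref{lem:lem_max} with initial iterate $\Delta \vect{x}_{k}^{(i)} \in \Omega_{x}$ yields $H(\Delta \vect{x}_{k}^{(i+1)}, \vect{\theta}^{(*)}_{k}) - H(\Delta \vect{x}_{k}^{(i)}, \vect{\theta}^{(*)}_{k}) \geq 0$ at every step, so $H(\cdot, \vect{\theta}^{(*)}_{k})$ is nondecreasing along the ascent trajectory and converges to its value at the local maximizer $\Delta \vect{x}_{k}^{(*)}$. Monotonicity along this trajectory then forces $H(\Delta \vect{x}_{k}^{(*)}, \vect{\theta}^{(*)}_{k}) \geq H(\Delta \vect{x}_{k}^{(i)}, \vect{\theta}^{(*)}_{k})$ for the arbitrary starting point $\Delta \vect{x}_{k}^{(i)}$, which is precisely the desired inequality.

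Since the argument is essentially bookkeeping on top of Lemma~\ref{lem:lem_max}, there is no serious obstacle. The one point requiring care is to confirm that the local maximizer $\Delta \vect{x}_{k}^{(*)}$ named here is the maximizer obtained \emph{within} the neighborhood $\mathcal{M}^{(*)}_k$ built around $\vect{\theta}^{(*)}_{k}$, rather than around the generic $\vect{\theta}^{(.)}_{k}$ of Lemma~\ref{lem:lem_max}, and to keep the inequality understood as a local statement over $\Omega_{x}$---inheriting the same locality caveat already attached to Lemma~\ref{lem:lem_max}.
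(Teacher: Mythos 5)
Your proposal is correct and follows essentially the same route as the paper's own proof: fix $\vect{\theta}^{(*)}_{k}$ from Lemma~\ref{lem:lem_min}, invoke the monotone-ascent property of Lemma~\ref{lem:lem_max} inside $\mathcal{M}^{(*)}_k$, and read off the inequality at the converging point $\Delta \vect{x}_k^{(*)}$. Your added observations (that the minimizer property of $\vect{\theta}^{(*)}_{k}$ is not actually used here, and that the maximizer must be the one local to $\mathcal{M}^{(*)}_k$) are accurate refinements of the same argument.
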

\begin{proof}
By Lemma \ref{lem:lem_min}, for each $k \in [0, K ),$ there exists a minimizer $\vect{\theta}^{ (* )}_{k} \in \Omega_\theta$ such that $\mathcal{M}^{ (* )}_k = \{ \Omega_{x}, \vect{\theta}^{ (* )}_{k} \}.$ Therefore by Lemma \ref{lem:lem_max}, $H (\Delta \vect{x}^{ (i+1 )}_{k}, \vect{\theta}^{ (* )}_{k} ) - H (\Delta \vect{x}^{ (i )}_{k}, \vect{\theta}^{ (* )}_{k} ) \geq 0$ for  $ (\Delta \vect{x}_k^{ (i+1 )}, \vect{\theta}^{ (* )}_k ),  (\Delta \vect{x}_k^{ (i )}, \vect{\theta}^{ (* )}_k ) \in \mathcal{M}^{ (* )}_k.$  Let $\Delta \vect{x}_k^{ (* )} \in \Omega_x $ be the converging point according to Lemma \ref{lem:lem_max}. Then, for  $ (\Delta \vect{x}_k^{ (* )}, \vect{\theta}^{ (* )}_k ),  (\Delta \vect{x}_k^{ (i )}, \vect{\theta}^{ (* )}_k ) \in \mathcal{M}^{ (* )}_k$ a $H (\Delta \vect{x}^{ (* )}_{k}, \vect{\theta}^{ (* )}_{k} ) - H (\Delta \vect{x}^{ (i )}_{k}, \vect{\theta}^{ (* )}_{k} ) \geq 0.$ by Lemma~\ref{lem:lem_max} which provides the result.
\end{proof}


\begin{lemma}
For any $k \in [0, K )$, let $\Delta \vect{x}_k^{ (* )} \in \Omega_x,$  be the maximizer of $H$ according to Lemma \ref{lem:lem_max} and define $\mathcal{N}^{ (* )}_k = \{\Delta \vect{x}^{ (* )}_{k}, \Omega_{\theta}\}.$ Then for $ (\Delta \vect{x}_k^{ (* )}, \vect{\theta}^{ (* )}_k ),  (\Delta \vect{x}_k^{ (* )}, \vect{\theta}^{ (i )}_k ) \in \mathcal{N}^{ (* )}_k,$  $H (\Delta \vect{x}_k^{ (* )}, \vect{\theta}^{ (* )}_k ) \leq H (\Delta \vect{x}_k^{ (* )}, \vect{\theta}^{ (i )}_k ),$ where $\vect{\theta}^{ (* )}_{k}$ is a minimizer for H according to Lemma. \ref{lem:lem_max}.
\label{lem:lem_min_opt}
\end{lemma}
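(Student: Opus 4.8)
The plan is to mirror exactly the argument of Lemma~\ref{lem:lem_max_opt}, interchanging the roles of the two players: whereas that lemma fixed the minimizer $\vect{\theta}^{(*)}_k$ and showed that $\Delta \vect{x}^{(*)}_k$ maximizes $H$ across the first coordinate, here I fix the maximizer $\Delta \vect{x}^{(*)}_k$ and show that $\vect{\theta}^{(*)}_k$ minimizes $H$ across the second coordinate. First I would invoke Lemma~\ref{lem:lem_max} to obtain the local maximizer $\Delta \vect{x}^{(*)}_k \in \Omega_x$ and form the restricted neighborhood $\mathcal{N}^{(*)}_k = \{\Delta \vect{x}^{(*)}_k, \Omega_\theta\}$, in which the first coordinate is pinned at $\Delta \vect{x}^{(*)}_k$ and only $\vect{\theta}$ is allowed to vary.

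Next I would apply Lemma~\ref{lem:lem_min} with the fixed value $\Delta \vect{x}^{(.)}_k = \Delta \vect{x}^{(*)}_k$. That lemma establishes that along the gradient-descent update $\vect{\theta}^{(i+1)}_k - \vect{\theta}^{(i)}_k = -\alpha_k^{(i)} \nabla_{\vect{\theta}^{(i)}_k} H(\Delta \vect{x}^{(*)}_k, \vect{\theta}^{(i)}_k)$, one has $H(\Delta \vect{x}^{(*)}_k, \vect{\theta}^{(i+1)}_k) - H(\Delta \vect{x}^{(*)}_k, \vect{\theta}^{(i)}_k) \leq 0$ for every consecutive pair in $\mathcal{N}^{(*)}_k$. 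Hence $H$ is nonincreasing in $\vect{\theta}$ along the trajectory, and the iterates converge to a local minimizer $\vect{\theta}^{(*)}_k \in \Omega_\theta$. Taking $\vect{\theta}^{(*)}_k$ to be this converging point, the monotone nonincrease of $H$ along the descent trajectory yields $H(\Delta \vect{x}^{(*)}_k, \vect{\theta}^{(*)}_k) \leq H(\Delta \vect{x}^{(*)}_k, \vect{\theta}^{(i)}_k)$ for any iterate $\vect{\theta}^{(i)}_k \in \Omega_\theta$, which is precisely the claimed inequality. (I note that the final phrase of the statement, ``according to Lemma~\ref{lem:lem_max},'' appears to be a typo: the minimization fact being used is Lemma~\ref{lem:lem_min}.)

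I expect no serious obstacle here, since both ingredients---the existence of the maximizer $\Delta \vect{x}^{(*)}_k$ and the monotone convergence of the $\vect{\theta}$-descent to a local minimizer---are already proven in Lemmas~\ref{lem:lem_max} and~\ref{lem:lem_min}. The only point requiring care is logical rather than computational: one must argue that the per-step monotone decrease together with convergence to the limit $\vect{\theta}^{(*)}_k$ yields the inequality against an \emph{arbitrary} iterate $\vect{\theta}^{(i)}_k$, and not merely against its immediate successor. This is the exact analogue of the corresponding step in the proof of Lemma~\ref{lem:lem_max_opt}. Combined with Lemma~\ref{lem:lem_max_opt}, this lemma supplies the second half of the two-sided saddle-point inequality $H(\Delta \vect{x}^{(*)}_k, \vect{\theta}^{(i)}_k) \geq H(\Delta \vect{x}^{(*)}_k, \vect{\theta}^{(*)}_k) \geq H(\Delta \vect{x}^{(i)}_k, \vect{\theta}^{(*)}_k)$ that Theorem~1 assembles.
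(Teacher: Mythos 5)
Your proposal follows the paper's own proof essentially verbatim: invoke Lemma~\ref{lem:lem_max} for the maximizer $\Delta \vect{x}^{(*)}_k$, restrict to $\mathcal{N}^{(*)}_k$, and use the monotone decrease from Lemma~\ref{lem:lem_min} along the $\vect{\theta}$-descent trajectory down to its limit $\vect{\theta}^{(*)}_k$ to get the inequality. Your two side remarks --- that the citation of Lemma~\ref{lem:lem_max} for the minimizer is a typo for Lemma~\ref{lem:lem_min}, and that one should telescope the per-step decrease to compare against an arbitrary iterate rather than only the immediate successor --- are both correct and are points the paper itself glosses over.
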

\begin{proof}
By Lemma \ref{lem:lem_max}, for each $k \in [0, K ),$ there exists a maximizer $\Delta \vect{x}_k^{ (* )} \in \Omega_x,$ such that $\mathcal{N}^{ (* )}_k = \{\Delta \vect{x}^{ (* )}_{k}, \Omega_{\theta}\}.$ Therefore by Lemma \ref{lem:lem_min}, $H (\Delta \vect{x}^{ (* )}_{k}, \vect{\theta}^{ (i+1 )}_{k} ) - H (\Delta \vect{x}^{ (* )}_{k}, \vect{\theta}^{ (i )}_{k} ) \leq 0$ for  $ (\Delta \vect{x}^{ (* )}_{k}, \vect{\theta}^{ (i+1 )}_{k} ),  (\Delta \vect{x}^{ (* )}_{k}, \vect{\theta}^{ (i )}_{k} ) \in \mathcal{N}^{ (* )}_k.$  Let $\vect{\theta}_k^{ (* )} \in \Omega_{\theta} $ be the converging point according to Lemma \ref{lem:lem_min}. Then, for  $ (\Delta \vect{x}_k^{ (* )}, \vect{\theta}^{ (* )}_k ),  (\Delta \vect{x}_k^{ (* )}, \vect{\theta}^{ (i )}_k ) \in \mathcal{M}^{ (* )}_k,$ $H (\Delta \vect{x}^{ (* )}_{k}, \vect{\theta}^{ (* )}_{k} ) - H (\Delta \vect{x}^{ (* )}_{k}, \vect{\theta}^{ (i )}_{k} ) \leq 0$ by Lemma~\ref{lem:lem_min} which provides the result.
\end{proof}

Next, we prove that the union of the two neighborhoods for each k $\mathcal{M}^{ (* )}_k \cup \mathcal{N}^{ (* )}_k,$ is non-empty.
\begin{lemma}
For any $k \in [0, K )$, let $\vect{\theta}^{ (* )}_{k} \in \Omega_\theta,$  be the minimizer of $H$ according to Lemma \ref{lem:lem_min} and define $\mathcal{M}^{ (* )}_k = \{\Omega_{x}, \vect{\theta}^{ (* )}_{k}\}.$ Similarly, let $\Delta \vect{x}_k^{ (* )} \in \Omega_x,$  be the maximizer of $H$ according to Lemma \ref{lem:lem_max} and define $\mathcal{N}^{ (* )}_k = \{\Delta \vect{x}^{ (* )}_{k}, \Omega_{\theta}\}.$ Then, $\mathcal{M}^{ (* )}_k \cup \mathcal{N}^{ (* )}_k$ is nonempty.
\label{lem:lem_nonE}
\end{lemma}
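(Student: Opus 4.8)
The plan is to prove nonemptiness constructively, by exhibiting a single pair that lies simultaneously in both constituent sets, so that it certainly lies in their union. The natural witness is the pair $(\Delta \vect{x}_k^{(*)}, \vect{\theta}^{(*)}_k)$ assembled from the maximizer supplied by Lemma~\ref{lem:lem_max} and the minimizer supplied by Lemma~\ref{lem:lem_min}. There is no analytic content to extract here; the entire argument is a membership check against the two set definitions.

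First I would invoke Lemma~\ref{lem:lem_max} to assert the existence of a maximizer $\Delta \vect{x}_k^{(*)} \in \Omega_{x}$ of $H$ for the fixed $k$, and Lemma~\ref{lem:lem_min} to assert the existence of a minimizer $\vect{\theta}^{(*)}_k \in \Omega_{\theta}$; both $\Omega_{x}$ and $\Omega_{\theta}$ are the nonempty compact feasible sets fixed at the outset, so these extrema are well defined and belong to their respective sets. Second, recalling $\mathcal{M}^{(*)}_k = \{\Omega_{x}, \vect{\theta}^{(*)}_k\}$ and $\mathcal{N}^{(*)}_k = \{\Delta \vect{x}^{(*)}_k, \Omega_{\theta}\}$, I would verify membership of the witness pair in each: since $\Delta \vect{x}_k^{(*)} \in \Omega_{x}$, the pair $(\Delta \vect{x}_k^{(*)}, \vect{\theta}^{(*)}_k)$ belongs to $\mathcal{M}^{(*)}_k$, and since $\vect{\theta}^{(*)}_k \in \Omega_{\theta}$, the same pair belongs to $\mathcal{N}^{(*)}_k$. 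Third, having placed the pair in both sets, I would conclude it lies in the intersection $\mathcal{M}^{(*)}_k \cap \mathcal{N}^{(*)}_k$, and hence a fortiori in the union $\mathcal{M}^{(*)}_k \cup \mathcal{N}^{(*)}_k$, which is therefore nonempty.

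The only point that needs care is definitional rather than mathematical: I must be sure that the feasible region $\Omega_{x}$ appearing in the slice $\mathcal{M}^{(*)}_k$ is the very set over which Lemma~\ref{lem:lem_max} locates its maximizer, and likewise that the $\Omega_{\theta}$ in $\mathcal{N}^{(*)}_k$ is the domain of the minimizer of Lemma~\ref{lem:lem_min}; otherwise the membership claims $\Delta \vect{x}_k^{(*)} \in \Omega_{x}$ and $\vect{\theta}^{(*)}_k \in \Omega_{\theta}$ would not follow immediately. Because these are precisely the compact sets fixed globally for task $k$, the compatibility holds and the main ``obstacle'' reduces to bookkeeping the set definitions. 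A secondary subtlety worth stating is that the extrema are local (Lemmas~\ref{lem:lem_max} and~\ref{lem:lem_min} give local convergence), so the witness pair realizes a \emph{local} equilibrium, which is exactly the scope needed by Theorem~1 that this lemma enables.
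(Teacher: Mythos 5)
Your proof is correct, but it takes a genuinely different route from the paper's. The paper argues by contradiction: it supposes $\mathcal{M}^{(*)}_k \cup \mathcal{N}^{(*)}_k$ is empty, observes that the differences $H(\Delta \vect{x}^{(i+1)}_k, \vect{\theta}^{(.)}_{k}) - H(\Delta \vect{x}^{(i)}_k, \vect{\theta}^{(.)}_{k})$ and $H(\Delta \vect{x}^{(.)}_k, \vect{\theta}^{(i+1)}_{k}) - H(\Delta \vect{x}^{(.)}_k, \vect{\theta}^{(i)}_{k})$ appearing in Lemmas~4 and~5 would then be ``undefined,'' and declares this a contradiction. Your argument is direct and constructive: you exhibit the single witness pair $(\Delta \vect{x}_k^{(*)}, \vect{\theta}^{(*)}_k)$, check that $\Delta \vect{x}_k^{(*)} \in \Omega_x$ places it in $\mathcal{M}^{(*)}_k$ and $\vect{\theta}^{(*)}_k \in \Omega_\theta$ places it in $\mathcal{N}^{(*)}_k$, and conclude it lies in the intersection, hence the union. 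Your route is the more defensible of the two: a universally quantified statement over an empty set is vacuously true rather than ``undefined,'' so the paper's contradiction is logically shaky as stated, whereas your membership check is airtight given the paper's (admittedly loose) notation $\mathcal{M}^{(*)}_k = \{\Omega_{x}, \vect{\theta}^{(*)}_{k}\}$ read as the slice $\Omega_x \times \{\vect{\theta}^{(*)}_k\}$. What the paper's phrasing buys, arguably, is an explicit link back to Lemmas~4 and~5 as the source of the extremizers, but your proof invokes exactly the same lemmas for the existence of the witnesses, so nothing is lost. Your closing caveats --- that the $\Omega_x$ and $\Omega_\theta$ in the set definitions must be the same compact feasible sets over which the extremizers are found, and that the extrema are only local --- are both apt and consistent with the paper's scope.
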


\begin{proof}
Let $\mathcal{M}^{ (* )}_k \cup \mathcal{N}^{ (* )}_k$  be empty. Then, for any $ (\Delta \vect{x}^{ (i+1 )}_k, \vect{\theta}^{ (. )}_{k} ),  (\Delta \vect{x}^{ (i )}_k,\vect{\theta}^{ (. )}_{k} ) \in \mathcal{M}^{ (* )}_k \cup \mathcal{N}^{ (* )}_k$, $H (\Delta \vect{x}^{ (i+1 )}_k, \vect{\theta}^{ (. )}_{k} ) - H (\Delta \vect{x}^{ (i )}_k,\vect{\theta}^{ (. )}_{k} )$ is undefined  because the union is empty. This contradicts Lemma \ref{lem:lem_min_opt}. Similarly, $H (\Delta \vect{x}^{ (. )}_k, \vect{\theta}^{ (i+1 )}_{k} ) - H (\Delta \vect{x}^{ (. )}_k,\vect{\theta}^{ (i )}_{k} )$  for $ (\Delta \vect{x}^{ (. )}_k, \vect{\theta}^{ (i+1 )}_{k} ),  (\Delta \vect{x}^{ (. )}_k,\vect{\theta}^{ (i )}_{k} ) \in \mathcal{M}^{ (* )}_k \cup \mathcal{N}^{ (* )}_k$ also contradicts Lemma \ref{lem:lem_max_opt}. Therefore, by contradiction,  $\mathcal{M}_k \cup \mathcal{N}_k$ cannot be empty.
\end{proof}

\subsection{Final Results}
We are now ready to present the main results. We show that there exists an equilibrium point~ (Theorem~1 ) and that the equilibrium point is stable~ (Theorem~2 ). 
\begin{thm}[Existence of an Equilibrium Point]
For any $k \in [0, K )$, let $\vect{\theta}^{ (* )}_{k} \in \Omega_\theta,$  be the minimizer of $H$ according to Lemma \ref{lem:lem_min_opt} and define $\mathcal{M}^{ (* )}_k = \{\Omega_{x}, \vect{\theta}^{ (* )}_{k}\}.$ Similarly, let $\Delta \vect{x}_k^{ (* )} \in \Omega_x,$  be the maximizer of $H$ according to Lemma \ref{lem:lem_max_opt} and define $\mathcal{N}^{ (* )}_k = \{\Delta \vect{x}^{ (* )}_{k}, \Omega_{\theta}\}.$ Further, let $\mathcal{M}^{ (* )}_k \cup \mathcal{N}^{ (* )}_k$ be nonempty according to Lemma.~\ref{lem:lem_nonE},  then $ (\Delta \vect{x}^{ (* )}_{k},\vect{\theta}^{ (* )}_{k} ) \in \mathcal{M}^{ (* )}_k \cup \mathcal{N}^{ (* )}_k$ is a local equilibrium point.
\end{thm}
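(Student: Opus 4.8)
The plan is to obtain the saddle-point inequality of Eq.~\eqref{eq_condition} directly, by pasting together the two one-sided optimality statements that have already been isolated in Lemmas~\ref{lem:lem_min_opt} and~\ref{lem:lem_max_opt}, and then to invoke the nonemptiness result of Lemma~\ref{lem:lem_nonE} purely to guarantee that the candidate pair genuinely lives in the feasible region. The two substantive facts are already in hand: one lemma describes how $H$ behaves when $\Delta \vect{x}_k$ is frozen at its maximizer and $\vect{\theta}_k$ is allowed to vary, and the other describes $H$ when $\vect{\theta}_k$ is frozen at its minimizer and $\Delta \vect{x}_k$ varies. So the theorem is essentially a bookkeeping assembly rather than a new argument.

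Concretely, I would fix the candidate pair $(\Delta \vect{x}^{(*)}_k, \vect{\theta}^{(*)}_k)$, where $\vect{\theta}^{(*)}_k$ is the minimizer supplied by Lemma~\ref{lem:lem_min} and $\Delta \vect{x}^{(*)}_k$ is the maximizer supplied by Lemma~\ref{lem:lem_max}. Applying Lemma~\ref{lem:lem_min_opt} on the neighborhood $\mathcal{N}^{(*)}_k = \{\Delta \vect{x}^{(*)}_k, \Omega_\theta\}$ yields
\[
H(\Delta \vect{x}_k^{(*)}, \vect{\theta}^{(*)}_k) \leq H(\Delta \vect{x}_k^{(*)}, \vect{\theta}^{(i)}_k)
\]
for every competitor $\vect{\theta}^{(i)}_k \in \Omega_\theta$, which is the left half of the saddle condition. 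Symmetrically, applying Lemma~\ref{lem:lem_max_opt} on $\mathcal{M}^{(*)}_k = \{\Omega_x, \vect{\theta}^{(*)}_k\}$ yields
\[
H(\Delta \vect{x}_k^{(*)}, \vect{\theta}^{(*)}_k) \geq H(\Delta \vect{x}_k^{(i)}, \vect{\theta}^{(*)}_k)
\]
for every $\Delta \vect{x}^{(i)}_k \in \Omega_x$. Chaining these two displays reproduces exactly Eq.~\eqref{eq_condition}, so $(\Delta \vect{x}^{(*)}_k, \vect{\theta}^{(*)}_k)$ satisfies the local equilibrium condition.

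The one place to exercise genuine care is the logical status of the candidate point: both inequalities must be asserted at the \emph{same} pair $(\Delta \vect{x}^{(*)}_k, \vect{\theta}^{(*)}_k)$, which requires that this pair simultaneously sit in $\mathcal{M}^{(*)}_k$ and $\mathcal{N}^{(*)}_k$ and that the region over which it is defined is not vacuous — in effect, that the $\vect{\theta}$-minimizer and the $\Delta \vect{x}$-maximizer are mutually consistent fixed points rather than two unrelated optima. This is precisely what Lemma~\ref{lem:lem_nonE} delivers, asserting that $\mathcal{M}^{(*)}_k \cup \mathcal{N}^{(*)}_k$ is nonempty; without it the chained inequalities would be vacuous statements about an empty set. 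I therefore expect the only real difficulty to sit upstream, in the convergence Lemmas~\ref{lem:lem_max} and~\ref{lem:lem_min} and the nonemptiness Lemma~\ref{lem:lem_nonE}: once those are granted, Theorem~1 itself is a two-line combination of inequalities with no further obstacle beyond verifying that the candidate pair lies in the nonempty region.
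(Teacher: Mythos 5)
Your proposal is correct and follows essentially the same route as the paper's own proof: both obtain the two one-sided inequalities from Lemmas~\ref{lem:lem_min_opt} and~\ref{lem:lem_max_opt} at the common pair $(\Delta \vect{x}^{(*)}_k, \vect{\theta}^{(*)}_k)$, chain them into the saddle condition of Eq.~\eqref{eq_condition}, and invoke Lemma~\ref{lem:lem_nonE} to ensure the region is nonempty. Your observation that the real content lives upstream in Lemmas~\ref{lem:lem_max}, \ref{lem:lem_min}, and~\ref{lem:lem_nonE} matches the structure of the paper exactly.
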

\begin{proof}
By Lemma \ref{lem:lem_min_opt} we have at $ (\Delta \vect{x}_k^{ (* )}, \vect{\theta}^{ (* )}_k ),  (\Delta \vect{x}_k^{ (* )}, \vect{\theta}^{ (i )}_k ) \in \mathcal{M}^{ (* )}_k \cup \mathcal{N}^{ (* )}_k$ that
\begin{equation}
    \begin{aligned}
    H (\Delta \vect{x}_k^{ (* )}, \vect{\theta}^{ (* )}_k ) &\leq& H (\Delta \vect{x}_k^{ (* )}, \vect{\theta}^{ (i )}_k ).
	\end{aligned}
\end{equation}
Similarly, according to Lemma \ref{lem:lem_max_opt}, at $ (\Delta \vect{x}^{ (* )}_k, \vect{\theta}^{ (* )}_{k} ),  (\Delta \vect{x}^{ (i )}_k, \vect{\theta}^{ (* )}_{k} ) \in \mathcal{M}^{ (* )}_k \cup \mathcal{N}^{ (* )}_k$ we have
\begin{equation}
    \begin{aligned}
    H (\Delta \vect{x}^{ (* )}_k, \vect{\theta}^{ (* )}_{k} ) \geq H (\Delta \vect{x}^{ (i )}_k, \vect{\theta}^{ (* )}_{k} ).
	\end{aligned}
\end{equation}
Putting these inequalities together, we get
\begin{equation}
    \begin{aligned}
     H (\Delta \vect{x}_k^{ (* )}, \vect{\theta}^{ (i )}_k ) \geq H (\Delta \vect{x}_k^{ (* )}, \vect{\theta}^{ (* )}_k ) \geq H (\Delta \vect{x}_k^{ (i )}, \vect{\theta}^{ (* )}_k ),
	\end{aligned}
\end{equation}
which is the saddle point condition, and therefore $ (\Delta \vect{x}_k^{   (*  )}, \vect{\theta}^{(*)}_k )$ is a local equilibrium point in $\mathcal{M}^{ (* )}_k \cup \mathcal{N}^{ (* )}_k.$
\end{proof}
According to the preceeding theorem, there is at least one equillibrium point for the game summarized by $H$.

\begin{thm} [Stability of the Equilibrium Point]
For any $k \in [0,K)$, $\Delta \vect{x}^{ (i )}_{k} \in \Omega_x$ and $\vect{\theta}^{ (i )}_{k} \in \Omega_\theta$ be the initial values for $\Delta \vect{x}^{ (i )}_{k}$ and $\vect{\theta}^{ (i )}_{k}$ respectively. Define $\mathcal{M}_k = \{\Omega_{x}, \Omega_{\theta}\}$  with $H (\Delta \vect{x}^{ (i )}_{k}, \vect{\theta}^{ (i )}_{k} )$ given by Proposition \ref{prop2}. Let $\Delta \vect{x}^{ (i+1 )}_{k} - \Delta \vect{x}^{ (i )}_{k} = \alpha_{k}^{ (i )}\times  (\nabla_{\Delta \vect{x}^{ (i )}_{k}} H (\Delta \vect{x}^{ (i )}_{k}, \vect{\theta}^{ (. )}_{k} )  )/\| \nabla_{\Delta \vect{x}^{ (i )}_{k}} H (\Delta \vect{x}^{ (i )}_{k}, \vect{\theta}^{ (. )}_{k} ) \|^2 )$ and  $\vect{\theta}^{ (i+1 )}_{k} - \vect{\theta}^{ (i )}_{k} = -\alpha_{k}^{ (i )}\times \nabla_{{\vect{\theta}}^{ (i )}_{k}} H (\Delta \vect{x}^{ (. )}_{k}, \vect{\theta}^{ (i )}_{k} ).$ Let the existence of an equilibrium point be given by Theorem~1, then as a consequence of Lemma \ref{lem:lem_max} and \ref{lem:lem_min} $ (\Delta \vect{x}^{ (* )}_{k},\vect{\theta}^{ (* )}_{k} ) \in \mathcal{M}_k$ is a stable equilibrium point for  $H$.
 \label{thm:thm_st}
\end{thm}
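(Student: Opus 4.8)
The plan is to exploit the sequential (leader--follower) structure of the game and to \emph{compose} the two single-player convergence results, Lemma~\ref{lem:lem_max} and Lemma~\ref{lem:lem_min}, rather than to analyze the coupled ascent--descent dynamics directly. Stability here should be read as the statement that, starting from any initial pair in $\mathcal{M}_k = \{\Omega_x, \Omega_\theta\}$, the prescribed updates drive the iterates to the saddle point $(\Delta \vect{x}^{(*)}_k, \vect{\theta}^{(*)}_k)$ guaranteed by Theorem~1, and that once there the increments vanish so neither player has an incentive to move.

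First I would handle player~1. Holding $\vect{\theta}^{(i)}_k$ fixed, the normalized gradient-ascent update on $\Delta \vect{x}^{(i)}_k$ is exactly the iteration analyzed in Lemma~\ref{lem:lem_max}; that lemma gives $H(\Delta \vect{x}^{(i+1)}_k, \vect{\theta}^{(.)}_k) - H(\Delta \vect{x}^{(i)}_k, \vect{\theta}^{(.)}_k) \geq 0$ with the increment bounded above by $\alpha^{(i)}_k$, so under $\alpha^{(i)}_k \to 0$ the trajectory starting at $(\Delta \vect{x}^{(i)}_k, \vect{\theta}^{(i)}_k) \in \mathcal{M}_k$ converges asymptotically to a maximizer $(\Delta \vect{x}^{(*)}_k, \vect{\theta}^{(i)}_k)$. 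This is the first leg of the trajectory.

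Next I would restrict attention to the slice $\mathcal{N}_k = \{\Delta \vect{x}^{(*)}_k, \Omega_\theta\}$, which is a subset of $\mathcal{M}_k$, and invoke Lemma~\ref{lem:lem_min}. Now $\Delta \vect{x}$ is pinned at the maximizer while player~2 descends in $\vect{\theta}$; Lemma~\ref{lem:lem_min} yields $H(\Delta \vect{x}^{(.)}_k, \vect{\theta}^{(i+1)}_k) - H(\Delta \vect{x}^{(.)}_k, \vect{\theta}^{(i)}_k) \leq -\alpha^{(i)}_k B_\theta$, so the iterates converge to a local minimizer $\vect{\theta}^{(*)}_k$, delivering the pair $(\Delta \vect{x}^{(*)}_k, \vect{\theta}^{(*)}_k)$. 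By Theorem~1 this pair satisfies the saddle-point condition, i.e.\ it is an equilibrium point. Because $\mathcal{N}_k \subset \mathcal{M}_k$, the composed trajectory never leaves $\mathcal{M}_k$, so the limit is the claimed equilibrium sitting inside $\mathcal{M}_k$.

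The main obstacle is justifying the \emph{decoupling} of the two phases: the clean composition above is legitimate only because the game is sequential, so player~1 is driven to $\Delta \vect{x}^{(*)}_k$ before player~2 begins descending, and the nesting $\mathcal{N}_k \subset \mathcal{M}_k$ guarantees that pinning $\Delta \vect{x}$ at the maximizer throughout the second phase is consistent with the conclusion of the first. I would take particular care to argue that the decaying step size $\alpha^{(i)}_k \to 0$ forces the increments in \emph{both} coordinates to vanish in the limit, so that $(\Delta \vect{x}^{(*)}_k, \vect{\theta}^{(*)}_k)$ is a genuine fixed point of the dynamics with no residual drift, and not merely a point of monotone improvement. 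This vanishing-increment property, combined with convergence of trajectories originating anywhere in $\mathcal{M}_k$, is precisely what upgrades the mere \emph{existence} established in Theorem~1 to \emph{stability}.
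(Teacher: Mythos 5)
Your proposal follows essentially the same route as the paper's own proof: first invoke Lemma~\ref{lem:lem_max} to drive player~1 to the maximizer $\Delta \vect{x}^{(*)}_k$ over $\mathcal{M}_k$, then restrict to the slice $\mathcal{N}_k = \{\Delta \vect{x}^{(*)}_k, \Omega_\theta\} \subset \mathcal{M}_k$ and invoke Lemma~\ref{lem:lem_min} to drive player~2 to $\vect{\theta}^{(*)}_k$, concluding via the nesting $\mathcal{N}_k \subset \mathcal{M}_k$ and Theorem~1. Your added remarks on the vanishing increments under $\alpha^{(i)}_k \to 0$ make explicit a point the paper leaves implicit, but the argument is the same.
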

\begin{proof}
Consider now the order of plays by the two players. By Lemma~\ref{lem:lem_max}, a game starting at $ (\Delta \vect{x}_k^{ (i )}, \vect{\theta}^{ (i )}_k ) \in \mathcal{M}_k$ will reach $ (\Delta \vect{x}_k^{ (* )}, \vect{\theta}^{ (i )}_k )$ which is a maximizer for $H.$ Now, define $\mathcal{N}_k =  (\Delta \vect{x}_k^{ (* )}, \Omega_{\theta}  ) \subset \mathcal{M}_k$ then a game starting at $ (\Delta \vect{x}_k^{ (* )}, \vect{\theta}^{ (i )}_k ) \in \mathcal{N}_k$ will converge to  $ (\Delta \vect{x}_k^{ (* )}, \vect{\theta}^{ (* )}_k ) \in \mathcal{N}_k$ according to Lemma~\ref{lem:lem_min}. Since,  $\mathcal{N}_k \subset \mathcal{M}_k,$ our result follows.
\end{proof}

\section{Experimental Details}
Much of this information is a repetition of details provided in \cite{Hsu18_EvalCL,vandeven2019generative}.

\begin{enumerate}
    \item \textit{Incremental Domain Learning~ (IDL ):} Incremental domain refers to the scenario when each new task introduces changes in the marginal distribution of the inputs. This scenario has been extensively discussed in the domain adaptation literature, where this shift in domain is typically referred to as ``non-stationary data  distribution" or domain shift. Overall, we aim to transfer knowledge from the old task to a new task where each task can be different in the sense of their marginal distribution.
    
    \item \textit{Incremental Class Learning~ (ICL ):} In this scenario, each task contains an exclusive subset of classes. The number of output nodes in a model equals the number of total classes in the task sequence. For instance, tasks could be constructed by using exactly one class from the MNIST data set where we aim to transfer knowledge from one class to another.
    
    \item \textit{Incremental Task Learning~ (ITL ):}  In this setup, the output spaces are disjoint between tasks[ for example, the previous task can be a classification problem of five classes, while the new task can be a regression. This scenario is the most generic and allows for the tasks to be defined arbitrarily. For each tasks, a model requires task-specific identifier$t$.
\end{enumerate}

\textbf{Split-MNIST}
For split-MNIST, the original MNIST-data set is split into five partitions where each partition is a two-way classification.  We pick 60000 images for training  (6000 per digit ) and 10000 images for test, i.e.  (1000 per digit ). For the incremental task learning in the split-MNIST experiment, the ten digits are split into five two-class classification tasks  (the model has five output heads, one for each task ) and the task identity  (1 to 5 ) is given for test. For the incremental class learning setup, we require the model to make a prediction over all classes  (digits 0 to 9 ). For the incremental domain learning, the model always predicts over two classes.

\textbf{Permuted-MNIST}
For permuted-MNIST, we permute the pixels in the MNIST data to create tasks where each task is a ten-way classification. The three CL scenarios that are generated for the permuted-MNIST are similar to the Split-MNIST data set except for the idea that the different tasks are now generated by applying random pixel permutations to the images. For incremental task learning, we use a multi-output strategy, and each task is attached to a task identifier. For incremental domain and class, we use a single output strategy and  each task as one where one of the 10 digits are predicted.  In incremental class learning, for each new task 10 new classes are generated by permuting the MNIST data set. For incremental task and domain, we use a total of 10 tasks whereas for incremental classes, we generate a total of 100 tasks.

\textbf{Network Architecture}
We keep our architecture identical to what is provided in   \cite{Hsu18_EvalCL,vandeven2019generative}. The loss function is categorical cross-entropy for classification.  All models were trained for $2$ epochs per task with a minibatch size of $128$ using the Adam optimizer  ($\beta_1 = 0.9$, $\beta_2 = 0.999$, learning rate$= 0.001$ ) as the default. For BCL, the size of the buffer  (i.e., a new task array ) $\cD_N (k )$ and a task memory array  (samples from all the previous tasks ) $\cD_{P} (k ) )$ is kept equivalent to naive rehearsal and other memory-driven approaches such as GEM and MER~ ($16,000$ samples ).

\textbf{Comparison Methods -- Baseline Strategies}
Additional details can be found from  \cite{Hsu18_EvalCL,vandeven2019generative}
\begin{enumerate}
    \item  A sequentially-trained neural network with different optimizers such as SGD, Adam~\cite{kingma2014adam}, and Adagrad~\cite{duchi2011adaptive}. 
    \item A standard $L_2-$regularized neural network where each task is shown sequentially.
    \item Naive rehearsal strategy  (experience replay ) where a replay buffer is initialized and data corresponding to all the previous tasks are stored. The buffer size is chosen to match the space overhead of online EWC and SI. 
\end{enumerate}
\textbf{Comparison Method-CL}
We compared the following CL methods:
 \begin{enumerate}
    \item \textbf{EWC~\cite{kirkpatrick2017overcoming} / Online EWC~\cite{schwarz2018progress} / SI~\cite{zenke2017continual}}: For these methods, a regularization term is added to the loss, with a hyperparameter used to control the regularization strength such that: $L (total ) = L (current ) + \lambda L (regularization ).$ $\lambda$ is set through a hyperparameter.
    
    \item \textbf{LwF~\cite{LwF} / DGR~\cite{DGR}}  Here, we set the loss to be $L (total ) = \alpha L (current ) +  (1 − \alpha )L (replay )$ where hyperparameter $\alpha$ is chosen according to how many tasks have been seen by the model.
    \item For \textbf{RtF}~\cite{vandeven2019generative}, MAS~\cite{aljundi2018memory}, GEM\cite{lopez2017gradient} and MER\cite{riemer2018learning}, we refer to the respective publication for details.
 \end{enumerate}
Additional details about the experiments can be found in \cite{Hsu18_EvalCL} as our paper retains their hyper-parameters and the experimental settings.

\bibliographystyle{plain}
\bibliography{cdc.bib}

\begin{thebibliography}{10}

\bibitem{abolfathi2021coachnet}
Elmira~Amirloo Abolfathi, Jun Luo, Peyman Yadmellat, and Kasra Rezaee.
\newblock {CoachNet}: An adversarial sampling approach for reinforcement
  learning, 2021.

\bibitem{aljundi2018memory}
Rahaf Aljundi, Francesca Babiloni, Mohamed Elhoseiny, Marcus Rohrbach, and
  Tinne Tuytelaars.
\newblock Memory aware synapses: Learning what (not) to forget.
\newblock In {\em Proceedings of the European Conference on Computer Vision
  (ECCV)}, pages 139--154, 2018.

\bibitem{beaulieu2020learning}
Shawn Beaulieu, Lapo Frati, Thomas Miconi, Joel Lehman, Kenneth~O Stanley, Jeff
  Clune, and Nick Cheney.
\newblock Learning to continually learn.
\newblock {\em arXiv preprint arXiv:2002.09571}, 2020.

\bibitem{bellman2015adaptive}
Richard~E Bellman.
\newblock {\em Adaptive control processes: a guided tour}.
\newblock Princeton university press, 2015.

\bibitem{caccia2021online}
Massimo Caccia, Pau Rodriguez, Oleksiy Ostapenko, Fabrice Normandin, Min Lin,
  Lucas Caccia, Issam Laradji, Irina Rish, Alexandre Lacoste, David Vazquez,
  and Laurent Charlin.
\newblock Online fast adaptation and knebowledge accumulation: a new approach
  to continual learning, 2021.

\bibitem{Caccia2020OnlineFA}
Massimo Caccia, Pau Rodr{\'i}guez, Oleksiy Ostapenko, Fabrice Normandin, Min
  Lin, Lucas Caccia, Issam~H. Laradji, Irina Rish, Alexande Lacoste, David
  V{\'a}zquez, and Laurent Charlin.
\newblock Online fast adaptation and knowledge accumulation: {A} new approach
  to continual learning.
\newblock {\em ArXiv}, abs/2003.05856, 2020.

\bibitem{carpenter1987massively}
Gail~A Carpenter and Stephen Grossberg.
\newblock A massively parallel architecture for a self-organizing neural
  pattern recognition machine.
\newblock {\em Computer vision, graphics, and image processing}, 37(1):54--115,
  1987.

\bibitem{chaudhry2020continual}
Arslan Chaudhry, Naeemullah Khan, Puneet~K Dokania, and Philip~HS Torr.
\newblock Continual learning in low-rank orthogonal subspaces.
\newblock {\em arXiv preprint arXiv:2010.11635}, 2020.

\bibitem{chaudhry2019continual}
Arslan Chaudhry, Marcus Rohrbach, Mohamed Elhoseiny, Thalaiyasingam Ajanthan,
  Puneet~K Dokania, Philip~HS Torr, and Marc'Aurelio Ranzato.
\newblock Continual learning with tiny episodic memories.
\newblock {\em arXiv preprint arXiv:1902.10486}, 2019.

\bibitem{chaudhry2019tiny}
Arslan Chaudhry, Marcus Rohrbach, Mohamed Elhoseiny, Thalaiyasingam Ajanthan,
  Puneet~K Dokania, Philip~HS Torr, and Marc'Aurelio Ranzato.
\newblock On tiny episodic memories in continual learning.
\newblock {\em arXiv preprint arXiv:1902.10486}, 2019.

\bibitem{doan2021theoretical}
Thang Doan, Mehdi~Abbana Bennani, Bogdan Mazoure, Guillaume Rabusseau, and
  Pierre Alquier.
\newblock A theoretical analysis of catastrophic forgetting through the {NTK}
  overlap matrix.
\newblock In {\em International Conference on Artificial Intelligence and
  Statistics}, pages 1072--1080. PMLR, 2021.

\bibitem{duchi2011adaptive}
John Duchi, Elad Hazan, and Yoram Singer.
\newblock Adaptive subgradient methods for online learning and stochastic
  optimization.
\newblock {\em Journal of Machine Learning Research}, 12(7), 2011.

\bibitem{ebrahimi2020adversarial}
Sayna Ebrahimi, Franziska Meier, Roberto Calandra, Trevor Darrell, and Marcus
  Rohrbach.
\newblock Adversarial continual learning, 2020.

\bibitem{farajtabar2019orthogonal}
Mehrdad Farajtabar, Navid Azizan, Alex Mott, and Ang Li.
\newblock Orthogonal gradient descent for continual learning, 2019.

\bibitem{fini2020online}
Enrico Fini, St{\'e}phane Lathuili{\`e}re, Enver Sangineto, Moin Nabi, and
  Elisa Ricci.
\newblock Online continual learning under extreme memory constraints.
\newblock In {\em European Conference on Computer Vision}, pages 720--735.
  Springer, 2020.

\bibitem{finn2017model}
Chelsea Finn, Pieter Abbeel, and Sergey Levine.
\newblock Model-agnostic meta-learning for fast adaptation of deep networks.
\newblock In {\em Proceedings of the 34th International Conference on Machine
  Learning-Volume 70}, pages 1126--1135. JMLR. org, 2017.

\bibitem{finn2019online}
Chelsea Finn, Aravind Rajeswaran, Sham Kakade, and Sergey Levine.
\newblock Online meta-learning.
\newblock {\em arXiv preprint arXiv:1902.08438}, 2019.

\bibitem{gupta2020lamaml}
Gunshi Gupta, Karmesh Yadav, and Liam Paull.
\newblock {La-MAML}: Look-ahead meta learning for continual learning, 2020.

\bibitem{Hsu18_EvalCL}
Yen-Chang Hsu, Yen-Cheng Liu, Anita Ramasamy, and Zsolt Kira.
\newblock Re-evaluating continual learning scenarios: A categorization and case
  for strong baselines.
\newblock In {\em NeurIPS Continual learning Workshop}, 2018.

\bibitem{javed2019meta}
Khurram Javed and Martha White.
\newblock Meta-learning representations for continual learning.
\newblock In {\em Advances in Neural Information Processing Systems}, pages
  1818--1828, 2019.

\bibitem{joseph2020metaconsolidation}
K~J Joseph and Vineeth~N Balasubramanian.
\newblock Meta-consolidation for continual learning, 2020.

\bibitem{jung2020continual}
Sangwon Jung, Hongjoon Ahn, Sungmin Cha, and Taesup Moon.
\newblock Continual learning with node-importance based adaptive group sparse
  regularization.
\newblock {\em arXiv e-prints}, pages arXiv--2003, 2020.

\bibitem{ke2020continual}
Zixuan Ke, Bing Liu, and Xingchang Huang.
\newblock Continual learning of a mixed sequence of similar and dissimilar
  tasks.
\newblock {\em Advances in Neural Information Processing Systems}, 33, 2020.

\bibitem{kingma2014adam}
Diederik~P Kingma and Jimmy Ba.
\newblock Adam: A method for stochastic optimization.
\newblock {\em arXiv preprint arXiv:1412.6980}, 2014.

\bibitem{kirkpatrick2017overcoming}
James Kirkpatrick, Razvan Pascanu, Neil Rabinowitz, Joel Veness, Guillaume
  Desjardins, Andrei~A Rusu, Kieran Milan, John Quan, Tiago Ramalho, Agnieszka
  Grabska-Barwinska, et~al.
\newblock Overcoming catastrophic forgetting in neural networks.
\newblock {\em Proceedings of the National Academy of Sciences},
  114(13):3521--3526, 2017.

\bibitem{knoblauch2020optimal}
Jeremias Knoblauch, Hisham Husain, and Tom Diethe.
\newblock Optimal continual learning has perfect memory and is {NP}-hard.
\newblock In {\em International Conference on Machine Learning}, pages
  5327--5337. PMLR, 2020.

\bibitem{lewis2012optimal}
Frank~L Lewis, Draguna Vrabie, and Vassilis~L Syrmos.
\newblock {\em Optimal control}.
\newblock John Wiley \& Sons, 2012.

\bibitem{LwF}
Zhizhong Li and Derek Hoiem.
\newblock Learning without forgetting.
\newblock {\em IEEE Transactions on Pattern Analysis and Machine Intelligence},
  40(12):2935--2947, 2017.

\bibitem{lin1992self}
Long-Ji Lin.
\newblock Self-improving reactive agents based on reinforcement learning,
  planning and teaching.
\newblock {\em Machine Learning}, 8(3-4):293--321, 1992.

\bibitem{lopez2017gradient}
David Lopez-Paz and Marc'Aurelio Ranzato.
\newblock Gradient episodic memory for continual learning.
\newblock In {\em Advances in neural information processing systems}, pages
  6467--6476, 2017.

\bibitem{GEM}
David Lopez-Paz and Marc'Aurelio Ranzato.
\newblock Gradient episodic memory for continual learning.
\newblock {\em arXiv preprint arXiv:1706.08840}, 2017.

\bibitem{mirzadeh2020understanding}
Seyed~Iman Mirzadeh, Mehrdad Farajtabar, Razvan Pascanu, and Hassan
  Ghasemzadeh.
\newblock Understanding the role of training regimes in continual learning.
\newblock {\em arXiv preprint arXiv:2006.06958}, 2020.

\bibitem{nagabandi2019deep}
Anusha Nagabandi, Chelsea Finn, and Sergey Levine.
\newblock Deep online learning via meta-learning: Continual adaptation for
  model-based {RL}, 2019.

\bibitem{nichol2018firstorder}
Alex Nichol, Joshua Achiam, and John Schulman.
\newblock On first-order meta-learning algorithms, 2018.

\bibitem{pan2020continual}
Pingbo Pan, Siddharth Swaroop, Alexander Immer, Runa Eschenhagen, Richard~E
  Turner, and Mohammad~Emtiyaz Khan.
\newblock Continual deep learning by functional regularisation of memorable
  past.
\newblock {\em arXiv preprint arXiv:2004.14070}, 2020.

\bibitem{riemer2018learning}
Matthew Riemer, Ignacio Cases, Robert Ajemian, Miao Liu, Irina Rish, Yuhai Tu,
  and Gerald Tesauro.
\newblock Learning to learn without forgetting by maximizing transfer and
  minimizing interference.
\newblock {\em arXiv preprint arXiv:1810.11910}, 2018.

\bibitem{rusu2016progressive}
Andrei~A Rusu, Neil~C Rabinowitz, Guillaume Desjardins, Hubert Soyer, James
  Kirkpatrick, Koray Kavukcuoglu, Razvan Pascanu, and Raia Hadsell.
\newblock Progressive neural networks.
\newblock {\em arXiv preprint arXiv:1606.04671}, 2016.

\bibitem{schwarz2018progress}
Jonathan Schwarz, Wojciech Czarnecki, Jelena Luketina, Agnieszka
  Grabska-Barwinska, Yee~Whye Teh, Razvan Pascanu, and Raia Hadsell.
\newblock Progress \& compress: A scalable framework for continual learning.
\newblock In {\em International Conference on Machine Learning}, pages
  4528--4537. PMLR, 2018.

\bibitem{DGR}
Hanul Shin, Jung~Kwon Lee, Jaehong Kim, and Jiwon Kim.
\newblock Continual learning with deep generative replay.
\newblock {\em arXiv preprint arXiv:1705.08690}, 2017.

\bibitem{snell2017prototypical}
Jake Snell, Kevin Swersky, and Richard Zemel.
\newblock Prototypical networks for few-shot learning.
\newblock In {\em Advances in neural information processing systems}, pages
  4077--4087, 2017.

\bibitem{titsias2019functional}
Michalis~K Titsias, Jonathan Schwarz, Alexander G de~G Matthews, Razvan
  Pascanu, and Yee~Whye Teh.
\newblock Functional regularisation for continual learning with {G}aussian
  processes.
\newblock {\em arXiv preprint arXiv:1901.11356}, 2019.

\bibitem{vandeven2019generative}
Gido~M. van~de Ven and Andreas~S. Tolias.
\newblock Generative replay with feedback connections as a general strategy for
  continual learning, 2019.

\bibitem{vinyals2016matching}
Oriol Vinyals, Charles Blundell, Timothy Lillicrap, Daan Wierstra, et~al.
\newblock Matching networks for one shot learning.
\newblock {\em Advances in Neural Information Processing Systems},
  29:3630--3638, 2016.

\bibitem{DBLP:journals/corr/abs-1806-06928}
Risto Vuorio, Dong{-}Yeon Cho, Daejoong Kim, and Jiwon Kim.
\newblock Meta continual learning.
\newblock {\em CoRR}, abs/1806.06928, 2018.

\bibitem{yao2020don}
Huaxiu Yao, Longkai Huang, Ying Wei, Li~Tian, Junzhou Huang, and Zhenhui Li.
\newblock Don't overlook the support set: Towards improving generalization in
  meta-learning.
\newblock {\em arXiv preprint arXiv:2007.13040}, 2020.

\bibitem{yin2020optimization}
Dong Yin, Mehrdad Farajtabar, Ang Li, Nir Levine, and Alex Mott.
\newblock Optimization and generalization of regularization-based continual
  learning: a loss approximation viewpoint, 2020.

\bibitem{yoon2017lifelong}
Jaehong Yoon, Eunho Yang, Jeongtae Lee, and Sung~Ju Hwang.
\newblock Lifelong learning with dynamically expandable networks.
\newblock {\em arXiv preprint arXiv:1708.01547}, 2017.

\bibitem{zenke2017continual}
Friedemann Zenke, Ben Poole, and Surya Ganguli.
\newblock Continual learning through synaptic intelligence.
\newblock In {\em Proceedings of the 34th International Conference on Machine
  Learning-Volume 70}, pages 3987--3995. JMLR. org, 2017.

\end{thebibliography}

\footnotesize
\begin{center}
    \framebox{\parbox{4.5in}{
    The submitted manuscript has been created by UChicago Argonne, LLC, Operator of Argonne National Laboratory (``Argonne''). Argonne, a U.S. Department of Energy Office of Science laboratory, is operated under Contract No. DE-AC02-06CH11357. The U.S. Government retains for itself, and others acting on its behalf, a paid-up nonexclusive, irrevocable worldwide license in said article to reproduce, prepare derivative works, distribute copies to the public, and perform publicly and display publicly, by or on behalf of the Government. The Department of Energy will provide public access to these results of federally sponsored research in accordance with the DOE Public Access Plan. \url{http://energy.gov/downloads/doe-public-access-plan}}}
    \normalsize
\end{center}
\end{document}